\documentclass{article}

\usepackage{microtype}
\usepackage{graphicx}
\usepackage{subfigure}
\usepackage{subcaption}
\usepackage{booktabs} % for professional tables

\usepackage{hyperref}

\usepackage{tcolorbox}

\usepackage[preprint]{icml2026}

\usepackage{amsmath}
\usepackage{amssymb}
\usepackage{mathtools}
\usepackage{amsthm}

\def\R{\mathbb{R}}
\def\E{\mathbb{E}}

\def\F{{\rm F}}

\def\M{\mathcal{M}}

\usepackage[capitalize,noabbrev]{cleveref}

\theoremstyle{plain}
\newtheorem{theorem}{Theorem}[section]

\newtheorem{lemma}[theorem]{Lemma}

\theoremstyle{definition}
\newtheorem{definition}[theorem]{Definition}
\newtheorem{assumption}[theorem]{Assumption}
\theoremstyle{remark}
\newtheorem{remark}[theorem]{Remark}

\usepackage[textsize=tiny]{todonotes}

\begin{document}

\twocolumn[
  \icmltitle{HomeAdam: Adam and AdamW Algorithms Sometimes Go Home to Obtain Better Provable Generalization}

  % It is OKAY to include author information, even for blind submissions: the
  % style file will automatically remove it for you unless you've provided
  % the [accepted] option to the icml2026 package.

  % List of affiliations: The first argument should be a (short) identifier you
  % will use later to specify author affiliations Academic affiliations
  % should list Department, University, City, Region, Country Industry
  % affiliations should list Company, City, Region, Country

  % You can specify symbols, otherwise they are numbered in order. Ideally, you
  % should not use this facility. Affiliations will be numbered in order of
  % appearance and this is the preferred way.
  \icmlsetsymbol{equal}{*}

  \begin{icmlauthorlist}
    \icmlauthor{Feihu Huang}{yyy,comp}
    \icmlauthor{Guanyi Zhang}{yyy}
    \icmlauthor{Songcan Chen }{yyy,comp}
    %\icmlauthor{}{sch}
    %\icmlauthor{}{sch}
  \end{icmlauthorlist}

  \icmlaffiliation{yyy}{College of Computer Science and Technology,
Nanjing University of Aeronautics and Astronautics, Nanjing, China}
  \icmlaffiliation{comp}{MIIT Key Laboratory of Pattern Analysis and Machine Intelligence, Nanjing, China}

  \icmlcorrespondingauthor{Feihu Huang}{huangfeihu2018@gmail.com}

  % You may provide any keywords that you find helpful for describing your
  % paper; these are used to populate the "keywords" metadata in the PDF but
  % will not be shown in the document
  \icmlkeywords{Machine Learning, ICML}

  \vskip 0.3in
]

% this must go after the closing bracket ] following \twocolumn[ ...

% This command actually creates the footnote in the first column listing the
% affiliations and the copyright notice. The command takes one argument, which
% is text to display at the start of the footnote. The \icmlEqualContribution
% command is standard text for equal contribution. Remove it (just {}) if you
% do not need this facility.

% Use ONE of the following lines. DO NOT remove the command.
% If you have no special notice, KEEP empty braces:
\printAffiliationsAndNotice{}  % no special notice (required even if empty)
% Or, if applicable, use the standard equal contribution text:
% \printAffiliationsAndNotice{\icmlEqualContribution}

\begin{abstract}
 Adam and AdamW are a class of default optimizers for training deep learning models in machine learning.
  These adaptive algorithms converge faster but generalize worse compared to SGD. In fact, their proved generalization error $O(\frac{1}{\sqrt{N}})$ also is larger than $O(\frac{1}{N})$
  of SGD, where $N$ denotes training sample size. Recently, although some variants of Adam
  have been proposed to improve its generalization, their improved generalizations are
  still unexplored in theory. To fill this gap, in the paper,
  we restudy generalization of Adam and AdamW via algorithmic stability, and first
   prove that Adam and AdamW without square-root (i.e., Adam(W)-srf) have a generalization error $O(\frac{\hat{\rho}^{-2T}}{N})$, where $T$ denotes iteration number and $\hat{\rho}>0$ denotes the smallest element of second-order momentum plus a small positive number. To improve generalization,
  we propose a class of efficient clever Adam (i.e., HomeAdam(W)) algorithms via sometimes returning momentum-based SGD. Moreover, we prove that our HomeAdam(W) have a smaller generalization error $O(\frac{1}{N})$ than $O(\frac{\hat{\rho}^{-2T}}{N})$ of Adam(W)-srf, since $\hat{\rho}$ is generally very small. In particular, it is also smaller than the existing $O(\frac{1}{\sqrt{N}})$ of Adam(W). Meanwhile, we prove our HomeAdam(W) have a faster convergence rate of $O(\frac{1}{T^{1/4}})$ than $O(\frac{\breve{\rho}^{-1}}{T^{1/4}})$ of the Adam(W)-srf,
  where $\breve{\rho}\leq\hat{\rho}$ also is very small.
  Extensive numerical experiments demonstrate efficiency of our HomeAdam(W) algorithms.
\end{abstract}

\section{Introduction}
Deep learning models have shown great successes in many machine learning applications such as
  computer vision~\citep{lecun2015deep,he2016deep} and natural language processing~\citep{vaswani2017attention,hu2021lora} and reinforcement learning~\citep{schulman2017proximal,tang2025deep}.
In fact, training these deep learning models mainly relies on efficient optimization algorithms~\citep{bottou2018optimization}.
For example, SGD~\citep{robbins1951stochastic,ghadimi2013stochastic} is a basic optimization algorithm
for training these large models by only
querying a mini-batch samples, but it suffers from large variances and
sensitivity of learning rate. To alleviate these defects, Adam~\citep{kingma2014adam}
has been proposed by using
momentum technique and adaptive learning rate, which has shown remarkable performances
in training many deep learning models. In particular, Adam significantly outperforms SGD in training some specific models such as transformers~\citep{zhang2024transformers,zhang2020adaptive}.

\begin{table*}
  \centering
  \caption{ \textbf{Generalization error} comparison of our HomeAdam(W) algorithms and other algorithms
  for nonconvex optimization. Here $N$ denotes the training sample size.}
  \label{tab:1}
   \resizebox{0.80\textwidth}{!}{
\begin{tabular}{c|c|c|c}
  \hline
  % after \\: \hline or \cline{col1-col2} \cline{col3-col4} ...
  \textbf{Algorithm} & \textbf{Reference} & \textbf{Generalization Error} & \textbf{Adaptive Learning Rate}
  \\ \hline
  SGD  & \cite{hardt2016train} & $O(\frac{1}{N})$ &   \\   \hline
  SGDM  & \cite{ramezani2024generalization} & $O(\frac{1}{N})$ &   \\  \hline
  Adam  & \cite{zhou2024towards} & $O(\frac{1}{\sqrt{N}})$  & $\checkmark$ \\   \hline
  AdamW  & \cite{zhou2024towards} & $O(\frac{1}{\sqrt{N}})$ & $\checkmark$  \\  \hline
  HomeAdam & Ours &  \textcolor{red}{$O(\frac{1}{N})$}  & $\checkmark$ \\  \hline
  HomeAdamW & Ours &  \textcolor{red}{$O(\frac{1}{N})$} & $\checkmark$ \\  \hline
\end{tabular}
 }
\end{table*}

Owing to its efficiency
and robustness to hyper-parameters, Adam becomes one of
default optimizers in deep learning.
Thus, its convergence property has taken wide attentions in machine learning community.
Recently, many works~\citep{chen2018convergence,zhou2018convergence,reddi2019convergence,zou2019sufficient,guo2021novel,jin2024comprehensive,
defossez2020simple,zhang2022adam,wang2023closing,xie2024adan,taniguchi2024adopt,peng2025simple} studied convergence properties of the Adam and its variants.
For example, \cite{reddi2019convergence} found a non-convergence case of Adam, and
 provided convergence analysis for a variant of Adam (i.e., AMSGrad).
Meanwhile, \cite{chen2018convergence} studied convergence properties of
a class of Adam-type algorithms for non-convex optimization.
Subsequently, \cite{guo2021novel} provided a generic convergence analysis for
a family of Adam-style methods including Adam and AMSGrad, and proved that Adam has
a convergence rate of $O(\frac{1}{T^{1/4}})$ for non-convex optimization,
where $T$ denotes the total iteration number.
 \cite{wang2023closing} derived a new convergence guarantee of Adam with only assuming smooth condition
and bounded variance assumption.
More recently, \cite{jin2024comprehensive}
provided a comprehensive framework for analyzing convergence properties of the Adam.
Meanwhile, \cite{jin2024comprehensive} studied convergence properties of
Adam and its variance reduced variant under generalized smoothness assumption.

Adam has some good empirical performances, but its
generalization performances still are worse than SGD on some deep learning tasks~\citep{wilson2017marginal}.
Meanwhile, the proved generalization error $O(\frac{1}{\sqrt{N}})$ of Adam~\citep{zhou2024towards} is larger than $O(\frac{1}{N})$ of SGD~\citep{hardt2016train} and momentum-based SGD (SGDM)~\citep{ramezani2024generalization}.
\cite{zou2023understanding} also provided an explanation that
the inferior generalization performance of Adam is
fundamentally tied to the nonconvex landscape of deep learning.
Clearly, this insufficient generalization of Adam hinders its broader application.
Thus, some adaptive gradient methods~\citep{loshchilov2017decoupled,keskar2017improving,chen2018closing,zhuang2020adabelief,jin2025method}
recently have been proposed to improve generalization of Adam.
For example, AdamW~\citep{loshchilov2017decoupled} improves generalization of Adam via
weight decay, which also is one of
default optimizers.
Recently convergence properties of the AdamW have been studied in \citep{zhou2024towards,xie2024implicit,li2025frac}, and it also proved that AdamW has
a convergence rate of $O(\frac{1}{T^{1/4}})$ for non-convex optimization.
In addition, \cite{keskar2017improving} proposed
a simple strategy by switching from Adam to SGD via
a triggering condition to improving generalization performance.
\cite{zhuang2020adabelief} proposed an AdaBelief optimizer to improve generalization of Adam by
 using the 'belief' in the current gradient direction to adaptive stepsize.
 More recently, \cite{jin2025method} proposed a multiple integral Adam (i.e., MIAdam) by guiding
the optimizer towards flatter regions to enhance
generalization capability.

Although these proposed methods improve generalization of Adam shown in some empirical performances,
few works proved an improved generalization for the proposed methods in theory.
So far, based on PAC Bayesian framework,
\cite{zhou2024towards} only proved that AdamW has the same generalization error
of $O(\frac{1}{\sqrt{N}})$ as the Adam, which still is larger
than $O(\frac{1}{N})$ of SGD~\citep{hardt2016train} and SGDM~\citep{ramezani2024generalization} for nonconvex optimization.
In addition, based on the diffusion theory framework, \cite{jin2025method}
only proved that the MIAdam is more likely to escape from sharp
minima and consequently converge to flat minima than the Adam.
Clearly, the proved improved generalization error of Adam is still missing.
To fill this gap, in the paper,
  we restudy generalization of the Adam and AdamW via algorithmic stability used in~\citep{hardt2016train}.
Our main contributions are given as follows:
\begin{itemize}
\item[(1)] We first propose a class of square-root-free Adam (Adam-srf and AdamW-srf i.e., Adam(W)-srf) algorithms via removing square-root as used in~\cite{lin2024can,choudhury2024remove}. Meanwhile, we provide a useful generalization analysis framework based on the mathematical induction, and prove that Adam(W)-srf have a generalization error $O(\frac{\hat{\rho}^{-2T}}{N})$, where $T$ denotes iteration number and $\hat{\rho}>0$ denotes the smallest element of second-order momentum plus a small positive number.
\item[(2)] To improve generalization,
  we further propose a class of efficient clever Adam (HomeAdam and HomeAdamW i.e.,HomeAdam(W)) algorithms via sometimes returning SGDM. Here SGDM can be see as a non-adaptive variant of Adam, which is like Adam's \emph{home}. Moreover, we prove that our HomeAdam(W) have a smaller generalization error $O(\frac{1}{N})$ than the above error $O(\frac{\hat{\rho}^{-2T}}{N})$, since $\hat{\rho}$ is generally very small. In particular, it is also smaller than the existing error $O(\frac{1}{\sqrt{N}})$ of Adam and AdamW~\citep{zhou2024towards}.
\item[(3)] We also provide a convergence analysis for our proposed methods, and prove our HomeAdam(W) methods have a faster convergence rate of $O(\frac{1}{T^{1/4}})$ than $O(\frac{\breve{\rho}^{-1}}{T^{1/4}})$ of the Adam(W)-srf, where $\breve{\rho}\leq\hat{\rho}$ also is very small. It is the same as  the existing $O(\frac{1}{T^{1/4}})$ of Adam and AdamW for nonconvex optimization~\citep{guo2021novel,zhou2024towards}.
\item[(4)] Extensive numerical experiments demonstrate effectiveness of our HomeAdam(W) algorithms.
\end{itemize}
To the best of our knowledge, we first prove that the adaptive gradient methods have the same
 generalization error as the SGD and SGDM methods.

 \begin{figure*}[ht]
\centering
 \subfigure[Adam(W)-srf/Adam(W) ]{\includegraphics[width=0.26\textwidth]{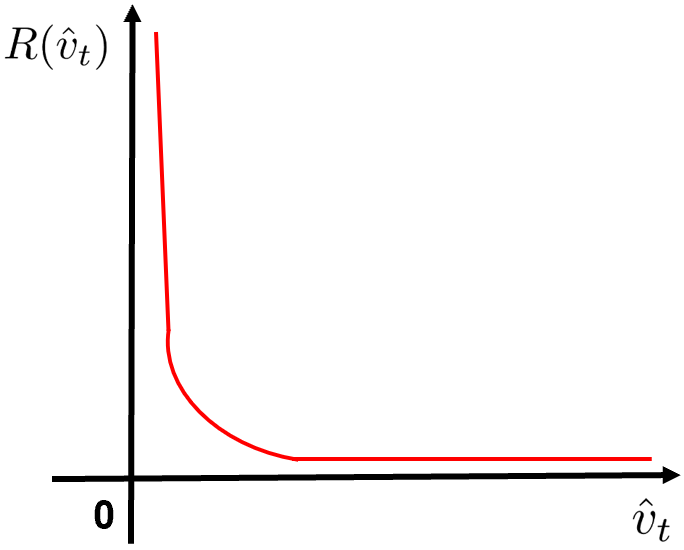}}
 \hfill
 \subfigure[SGD(M)]{\includegraphics[width=0.26\textwidth]{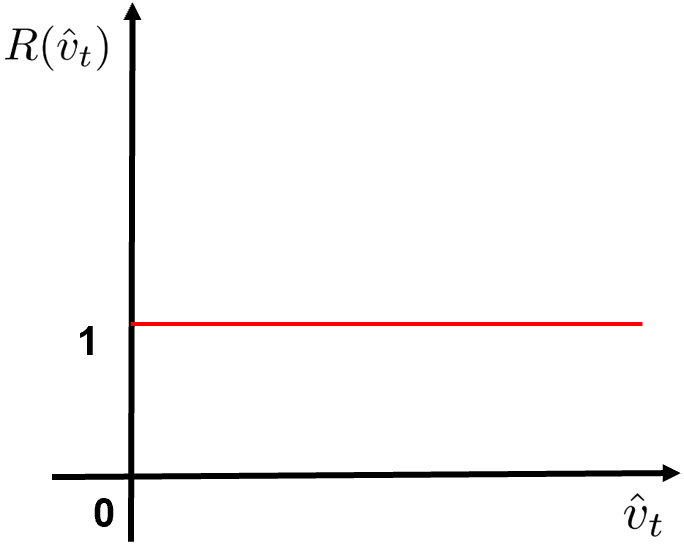}}
  \hfill
 \subfigure[HomeAdam(W)]{\includegraphics[width=0.26\textwidth]{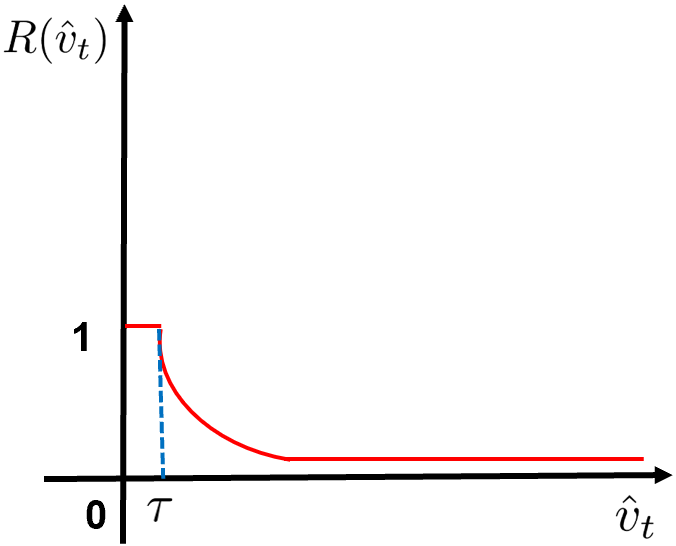}}
  \hfill
\caption{ Illustration of different
stepsize function $R(\hat{v}_t)$: \textbf{(a)} Adam(W)-srf uses $R(\hat{v}_t)=\frac{1}{\hat{v}_t}$ and
Adam(W) uses $R(\hat{v}_t)=\frac{1}{\sqrt{\hat{v}_t}}$; \textbf{(b)} SGD(M) uses $R(\hat{v}_t)=1$; \textbf{(c)} our HomeAdam(W) uses $R(\hat{v}_t)=\frac{1}{\hat{v}_t}$ when $\min_{1\leq j\leq d} (\hat{v}_t)_j \geq \tau>0$, otherwise $R(\hat{v}_t)=1$.}
\label{fig:1}
\end{figure*}
\vspace*{-8pt}
\subsection*{Notations}
$[N]$ denotes $\{1,2,\cdots,N\}$. $\R^+$ denotes set of positive real numbers. $\|\cdot\|$ denotes the $\ell_2$ norm for vectors and spectral norm for matrices. $\|\cdot\|_1$ and $\|\cdot\|_{\infty}$ denote $\ell_1$-norm and  $\ell_{\infty}$-norm, respectively. $\langle x,y\rangle$ denotes the inner product of two vectors $x\in \R^d$ and $y\in \R^d$. Let $x^\gamma\ (\gamma>0)$ denote the element-wise power operation, and $x/y$ denote the element-wise division. $I_{d}$ denotes a $d$-dimensional identity matrix. $a_t=O(b_t)$ denotes that $a_t \leq c b_t$ for some constant $c>0$.
\vspace*{-8pt}
\section{ Preliminaries }
\subsection{Problem Setup}
In machine learning, the ultimate goal of learning is to minimize
the population risk given by
\begin{align}\label{eq:p1}
 \min_{\theta \in \R^{d}} F(\theta) = \E_{z\sim \mathcal{D}}[f(\theta;z)],
\end{align}
where $f(\theta;z): \R^{d}\rightarrow \R^+$ denotes a loss function on a sample $z\sim \mathcal{D}$,
which is possibly non-convex. Here $z$ is a random variable drawn
some fixed but unknown distribution $\mathcal{D}$. Here $F(\theta) = \E_{z\sim \mathcal{D}}[f(\theta;z)]$
denotes a population loss (risk) of machine learning tasks such as training deep learning models.
Since the fixed distribution $\mathcal{D}$ is
 unknown, we only access a finite set of
 training examples $S=\{z_1,z_2,\cdots,z_N\}$ drawn i.i.d. from $\mathcal{D}$.
The above population risk $F(\theta)$ could be approximated by
 the following empirical risk
 \begin{align}
  F_S (\theta) = \frac{1}{N}\sum_{i=1}^N f(\theta;z_i).
 \end{align}
\subsection{ Generalization Gap }
The best model $\theta^*$ is defined as
 \begin{align}
  \theta^*=\mathop{\arg\min}_{\theta\in \R^{d}}F(\theta). \nonumber
 \end{align}
Based on a training dataset $S=\{z_1,z_2,\cdots,z_N\}$, we run a
 randomized algorithm $A$ to minimize the empirical risk to get a output model $A(S)$.
 In fact, we are interested in the excess
 population risk $F(A(S))-F(\theta^*)$, which measures the relative behavior of the output model as compared
 to the best model.

 Here we decompose this excess population risk into the following formation
 \begin{align}\label{eq:ep}
  F(A(S))& -F(\theta^*) = \underbrace{F(A(S)) - F_S(A(S))}_{(a)} \\
   &\quad + \underbrace{F_S(A(S)) - F_S(\theta^*)}_{(b)} + F_S(\theta^*) -F(\theta^*), \nonumber
 \end{align}
 where the term $(a)$ $F(A(S)) - F_S(A(S))$ is generalization error (generalization gap), which measures
 the gap between training loss and population loss;
 and the term $(b)$ $F_S(A(S)) - F_S(\theta^*)$ is optimization error, which quantifies how well the algorithm minimizes the empirical risk. Taking expectation on the equality~(\ref{eq:ep}) with random algorithm $A$ and
 training dataset $S$, we have
\begin{align}
 \E_{A,S} [F(A(S)) -F(\theta^*) ]&= \E_{A,S} [ F(A(S)) - F_S(A(S))] \nonumber \\
  & \ + \E_{A,S} [F_S(A(S)) - F_S(\theta^*)], \nonumber
 \end{align}
where $\E_{A,S}[F_S(\theta^*) -F(\theta^*)]=0$, i.e., $\theta^*$
is independent of $A$ and $S$.

\subsection{ Algorithmic Stability }
In the subsection, we introduce some concepts on algorithmic stability.
 For notational simplicity, let $S=\{z_1,z_2,\cdots,z_N\}$ and $\tilde{S}=\{\tilde{z}_1,\tilde{z}_2,\cdots,\tilde{z}_N\}$ be two
 independent datasets drawn from the distribution $\mathcal{D}$. Then we denote
 the dataset $S^{(i)}=\{z_1,z_2,\cdots,\tilde{z}_i,\cdots,z_N\}$ by replacing the $i$-th example $z_i$ with an independent sample $\tilde{z}_i$ for any $i\in [N]$.
 \begin{definition} (Stability)
  Let $A$ be a random algorithm and $A(S)$ denote the output of the algorithm $A$ run on dataset $S$. We say the random algorithm $A$ is $\epsilon$-uniformly stable if for any $S$ and $S^{(i)}$,
 \begin{align}
  \sup_{z} \E_A [f(A(S),z)-f(A(S^{(i)}),z)] \leq \epsilon.
 \end{align}
% We say $A$ is on-average $\epsilon$-stable if
% \begin{align}
%  \frac{1}{N}\sum_{i=1}^N\E_{A,S,\tilde{S}} [f(A(S^{(i)}),z_i)-f(A(S),z_i)] \leq \epsilon.
% \end{align}
 \end{definition}
 \begin{lemma} \label{lem:gs}
  (Generalization via Stability)~\cite{shalev2010learnability,hardt2016train}. Let algorithm $A$ be $\epsilon$-uniformly stable in function
values, then we have
 $$ |\E_{A,S}[F(A(S)) - F_S(A(S))]|\leq \epsilon.$$
 \end{lemma}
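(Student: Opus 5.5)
The plan is the standard symmetrization (ghost-sample) argument of \cite{hardt2016train}. The idea is to rewrite the expected generalization gap as an average of differences of losses evaluated on the two neighboring datasets $S$ and $S^{(i)}$. Each such difference is then controlled by the uniform stability bound.

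\textbf{Step 1: rewrite the population term.} Let $S=\{z_1,\dots,z_N\}$ and $\tilde S=\{\tilde z_1,\dots,\tilde z_N\}$ be independent samples from $\mathcal{D}$. Since $\tilde z_i$ is independent of $S$ and of the randomness of $A$, for every $i\in[N]$ we have
\begin{align}
\E_{A,S}[F(A(S))]=\E_{A,S,\tilde S}[f(A(S);\tilde z_i)]. \nonumber
\end{align}
Averaging over $i$ gives $\E_{A,S}[F(A(S))]=\frac{1}{N}\sum_{i=1}^N\E_{A,S,\tilde S}[f(A(S);\tilde z_i)]$.

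\textbf{Step 2: exchangeability.} The pairs $(S,\tilde z_i)$ and $(S^{(i)},z_i)$ have the same joint distribution, since swapping $z_i$ and $\tilde z_i$ preserves the i.i.d. law. Therefore
\begin{align}
\E_{A,S,\tilde S}[f(A(S);\tilde z_i)]=\E_{A,S,\tilde S}[f(A(S^{(i)});z_i)]. \nonumber
\end{align}
The empirical term is simply $\E_{A,S}[F_S(A(S))]=\frac1N\sum_{i=1}^N\E_{A,S}[f(A(S);z_i)]$. Combining the two,
\begin{align}
\E_{A,S}[F(A(S))-F_S(A(S))]=\frac1N\sum_{i=1}^N\E_{S,\tilde S}\,\E_A\big[f(A(S^{(i)});z_i)-f(A(S);z_i)\big]. \nonumber
\end{align}

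\textbf{Step 3: apply stability.} For fixed $S,\tilde S$, each inner expectation is at most $\sup_z\E_A[f(A(S^{(i)}),z)-f(A(S),z)]$. The definition is stated for an arbitrary pair $S,S^{(i)}$, and the roles of the two datasets are symmetric, since $S$ is itself obtained from $S^{(i)}$ by replacing one point. Hence this supremum is at most $\epsilon$, which bounds the gap above by $\epsilon$.

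\textbf{Step 4: lower bound.} The same argument with $S$ and $S^{(i)}$ interchanged bounds the negative of the gap by $\epsilon$, which gives the absolute value.

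The only delicate point is this use of symmetry in the definition. The stated definition controls only a one-sided difference, so the two-sided bound needs the observation that the definition applies to every neighboring pair in either order. A secondary point is measurability and integrability, which are needed to exchange expectations in Step 1; these are routine if $f$ is nonnegative and measurable. Otherwise the proof is a direct computation.
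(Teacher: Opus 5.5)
Your proof is correct. The paper does not prove this lemma itself; it cites \cite{shalev2010learnability,hardt2016train}, and your argument is essentially the ghost-sample proof of Theorem 2.2 in \cite{hardt2016train}. The only difference is that they perform the swap $z_i\leftrightarrow\tilde z_i$ on the empirical term rather than the population term, which changes nothing. Your explicit use of the one-sided definition on the reversed neighboring pair $(S^{(i)},S)$ is the right way to get the absolute value.
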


\begin{algorithm}
    \caption{\textbf{Adam(W)-srf} Algorithms }
    \label{alg:1}
    \begin{algorithmic}[1]
        \STATE \textbf{Input}: $\eta>0$, $\beta_1,\beta_2\in (0,1)$, $\varepsilon\geq 0$ and $\lambda\geq0$;
        \STATE \textbf{Initialize:} $\theta_0\in \R^{d}$,
         $m_0=0$ and $v_0=0$;
        \FOR{$t = 1, 2, \ldots, T$}
            \STATE  Draw a sample $z_{t} \sim \mathcal{D}$;
            \STATE $g_t = \nabla f(\theta_{t-1};z_{t})$;
            \STATE $m_{t}= \beta_{1}m_{t-1}+ (1-\beta_{1})g_t$;
            \STATE $v_{t}= \beta_{2}v_{t-1}+ (1-\beta_{2})g_t^2$;
            \STATE $\hat{m}_t=\frac{m_t}{1-\beta_1^t}$;
            \STATE $\hat{v}_t=\frac{v_t}{1-\beta_2^t}$;
            \STATE $\theta_{t} = \theta_{t-1} - \eta (\frac{\hat{m}_t}{{\color{red}{\hat{v}_t}} + \varepsilon}+\lambda \theta_{t-1})$.
        \ENDFOR
        \STATE \textbf{Output:} $\theta_{T}$.
    \end{algorithmic}
\end{algorithm}

\section{ Our HomeAdam(W) Algorithms}
In the section, we first propose a class of square-root-free Adam and AdamW algorithms (Adam(W)-srf),
which are shown in Algorithm~\ref{alg:1}.
When $\lambda=0$, Algorithm~\ref{alg:1} is Adam-srf algorithm, otherwise
Algorithm~\ref{alg:1} shows AdamW-srf algorithm. In fact,
Algorithm~\ref{alg:1} is the similar to AdamW algorithm, expect for
removing square-root in the second-order momentum of adaptive learning rate.

Based on Algorithm~\ref{alg:1}, we propose a class of efficient clever Adam (HomeAdam(W)) algorithms via sometimes returning momentum-based SGD, which are shown in
Algorithm~\ref{alg:2}. When $\lambda=0$, Algorithm~\ref{alg:2} shows HomeAdam algorithm, otherwise
Algorithm~\ref{alg:2} is HomeAdamW algorithm.
In Algorithm~\ref{alg:2}, we also remove the square-root in the second-order momentum of adaptive learning rate.
In particular, when $\min_{1\leq j\leq d} (\hat{v}_t)_j \geq \tau$ (i.e.,
the smallest element in the second-order momentum $\hat{v}_t$ is larger than a threshold $\tau>0$),
we use adaptive stochastic gradient to update variable $\theta_t$ as follows:
\begin{align}
 \theta_{t} = (1-\eta\lambda)\theta_{t-1} - \eta \frac{\hat{m}_t}{{\color{red}{\hat{v}_t}} + \varepsilon},
\end{align}
otherwise we use stochastic gradient to update $\theta_t$ as follows:
\begin{align}
 \theta_{t} = (1-\eta\lambda)\theta_{t-1} - \eta \hat{m}_t.
\end{align}

Here we define a stepsize function $R(\cdot)$ on the second-order momentum $\hat{v}_t$, then we could  uniformly rewrite the lines 11 and 13 of
Algorithm~\ref{alg:2} as follows:
\begin{align}
 \theta_{t} = (1-\eta\lambda)\theta_{t-1} - \eta {\color{blue}{R(\hat{v}_t)}}\hat{m}_t,
\end{align}
where $R(\hat{v}_t)=\frac{1}{\hat{v}_t+\varepsilon}$ when $\min_{1\leq j\leq d} (\hat{v}_t)_j \geq \tau>0$, otherwise $R(\hat{v}_t)=1$.
In fact, our Adam(W)-srf use $R(\hat{v}_t)=\frac{1}{\hat{v}_t+\varepsilon}$, and
the Adam(W) use $R(\hat{v}_t)=\frac{1}{\sqrt{\hat{v}_t}+\varepsilon}$, and
 the SGD(M) use $R(\hat{v}_t)=1$.

Figure~\ref{fig:1} shows different
stepsize function $R(\hat{v}_t)$, where without loss of generality, let
$\hat{v}_t \in \R$ be a scalar, and set $\varepsilon= 0$.
From Figure~\ref{fig:1}, our HomeAdam(W) not only use adaptive learning rate, but also keep learning rate from becoming too large, which protects their generalization ability and stability.
However, when $\hat{v}_t$ is very small, Adam and its some variants (e.g., Adam(W) and Adam(W)-srf) will use too large learning rate to affect  their generalization ability and stability.

\begin{algorithm}
    \caption{\textbf{HomeAdam(W)} Algorithms }
    \label{alg:2}
    \begin{algorithmic}[1]
        \STATE \textbf{Input}: $\eta>0$, $\beta_1,\beta_2\in (0,1)$, $\varepsilon\geq 0$, $\lambda\geq 0$ and $\tau>0$;
        \STATE \textbf{Initialize:} $\theta_0\in \R^{d}$,
         $m_0=0$ and $v_0=0$;
        \FOR{$t = 1, 2, \ldots, T$}
            \STATE  Draw a sample $z_{t} \sim \mathcal{D}$;
            \STATE $g_t = \nabla f(\theta_{t-1};z_{t})$;
            \STATE $m_{t}= \beta_{1}m_{t-1}+ (1-\beta_{1})g_t$;
            \STATE $v_{t}= \beta_{2}v_{t-1}+ (1-\beta_{2})g_t^2$;
            \STATE $\hat{m}_t=\frac{m_t}{1-\beta_1^t}$;
            \STATE $\hat{v}_t=\frac{v_t}{1-\beta_2^t}$;
            \IF {$\min_{1\leq j\leq d} (\hat{v}_t)_j \geq \tau$}
            \STATE $\theta_{t} = \theta_{t-1} - \eta \big(\frac{\hat{m}_t}{{\color{red}{\hat{v}_t}} + \varepsilon}+
            \lambda\theta_{t-1}\big)$;
            \ELSE
            \STATE $\theta_{t} = \theta_{t-1} - \eta (\hat{m}_t+\lambda\theta_{t-1})$.
            \ENDIF
        \ENDFOR
        \STATE \textbf{Output:} $\theta_{T}$.
    \end{algorithmic}
\end{algorithm}

Note that the SWATS~\citep{keskar2017improving} divides the optimization process into two stages via a switchover point: uses Adam to train models in the first stage, then uses SGD
to train models in the second stage. While our HomeAdam(W) algorithms
switch from Adam(W)-srf to SGDM (or from SGDM
to Adam(W)-srf) at any time throughout the optimization process based on the condition $\min_{1\leq j\leq d} (\hat{v}_t)_j < \tau$ (or otherwise).

In the Appendix~\ref{ap:ew}, we also provide an element-wise variant of our HomeAdam(W) algorithms,
which is more suitable for training deep learning models due to matching the back-propagation framework~\citep{sampson1987parallel}.

\section{Generalization Analysis}
In the section, we provide generalization analysis for our proposed methods (i.e., Adam(W)-srf and
HomeAdam(W)) under some mild
assumptions. All related proofs are provided in the following Appendices~\ref{ap:ga1} and~\ref{ap:ga2}.
We first provide some mild conditions.
\begin{assumption} \label{ass:s1}
Assume each component function $f(\theta;z)$ is $L$-Lipschitz smooth, we have for any $z_1, z_2\in \mathcal{D}$,
\begin{align}
    \|\nabla f(\theta_1;z)-\nabla f(\theta_2;z)\| \leq L\|\theta_1-\theta_2\|.
\end{align}
\end{assumption}
\begin{assumption} \label{ass:g}
Assume each component function $f(\theta;z)$ for any $z\sim \mathcal{D}$ is Lipschitz continuous,
such that
\begin{align}
    |f(\theta_1;z)-f(\theta_2;z) | \leq G\|\theta_1-\theta_2\|, \ \theta_1, \theta_2\in \R^d
\end{align}
where $G>0$.
\end{assumption}
\begin{assumption} \label{ass:v}
$\nabla f(\theta;z)$ is an unbiased stochastic estimator of full gradient $\nabla F(\theta)$ and has a bounded variance, we have
\begin{align}
 \E[\nabla f(\theta;z)]=\nabla F(\theta), \ \E\|\nabla f(\theta;z)-\nabla F(\theta)\|^2]\leq \sigma^2. \nonumber
\end{align}
\end{assumption}

Assumption~\ref{ass:s1} assumes smoothness of each component function $f(\theta;z)$, which is commonly applied in generalization analysis~\citep{zhou2024towards,hardt2016train}.
Assumption~\ref{ass:g} assumes Lipschitz continuous of each component function, which is commonly used in
generalization analysis~\citep{hardt2016train,lei2020fine,lei2023stability}.
Assumption~\ref{ass:g} implies the bounded gradient of $f(\theta;z)$, i.e., $\|\nabla f(\theta;z)\|\leq G$
for any $z\in \mathcal{D}$.
Assumption~\ref{ass:v} shows a standard bounded variance assumption used in stochastic optimization~\citep{bottou2018optimization,ghadimi2013stochastic}.
According to Assumptions~\ref{ass:s1} and~\ref{ass:v}, we have $\|\nabla F(\theta_1)-\nabla F(\theta_2)\| = \|\E[\nabla f(\theta_1;z)-\nabla f(\theta_2;z)]\| \leq \E \|\nabla f(\theta_1;z)-\nabla f(\theta_2;z)\| \leq L\|\theta_1-\theta_2\|$. Thus, we could use smoothness of each component function $f(\theta;z)$ to obtain smoothness of function $F(\theta)$.

\subsection{Generalization Errors of Adam(W)-srf Algorithms}
\begin{lemma} \label{lem:1}
Assume the sequences $\{\hat{m}_t,\hat{v}_t,m_t,v_t\}_{t=1}^T$ and $\{\hat{m}_t^{(i)},\hat{v}_t^{(i)},m_t^{(i)},v_t^{(i)}\}_{t=1}^T$ are generated from Algorithm~\ref{alg:1} based on the datasets $S$ and
$S^{(i)}$, respectively, we have
\begin{align}
\big\|\frac{\hat{m}_t}{\hat{v}_t + \varepsilon} & -  \frac{\hat{m}_t^{(i)}}{\hat{v}_t^{(i)} + \varepsilon}\big\|  \leq
 \frac{\sqrt{d}}{(\rho_t+\varepsilon)(1-\beta_1^t)}\big\|m_t - m_t^{(i)}\big\| \nonumber \\
& \  + \frac{G\sqrt{d}}{(1-\beta_1^t)(1-\beta_2^t)(\rho_t+\varepsilon)^2}\big\| v_t -v_t^{(i)}\big\|,
\end{align}
where $\rho_t = \min_{j\in [d]}(\hat{v}_t)_j$.
\end{lemma}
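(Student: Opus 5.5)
We split the difference with the usual add-and-subtract trick, mixing the two datasets in one intermediate term:
\begin{align}
\frac{\hat{m}_t}{\hat{v}_t+\varepsilon}-\frac{\hat{m}_t^{(i)}}{\hat{v}_t^{(i)}+\varepsilon}
= \frac{\hat{m}_t-\hat{m}_t^{(i)}}{\hat{v}_t+\varepsilon}
+ \hat{m}_t^{(i)}\Big(\frac{1}{\hat{v}_t+\varepsilon}-\frac{1}{\hat{v}_t^{(i)}+\varepsilon}\Big). \nonumber
\end{align}
We then bound the two terms separately with the triangle inequality. All operations are element-wise.

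For the first term, every coordinate of $\hat{v}_t+\varepsilon$ is at least $\rho_t+\varepsilon$, so each coordinate of the quotient is bounded by $|(\hat m_t-\hat m_t^{(i)})_j|/(\rho_t+\varepsilon)$. This already gives the $\ell_2$ bound $\frac{1}{\rho_t+\varepsilon}\|\hat m_t-\hat m_t^{(i)}\|$. If one instead passes through $\ell_\infty$/$\ell_1$ norms, a $\sqrt d$ factor appears, which matches the stated constant, and $\sqrt d\ge1$ so the weaker form is harmless. Substituting $\hat m_t-\hat m_t^{(i)}=(m_t-m_t^{(i)})/(1-\beta_1^t)$ produces the first term of the claim.

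For the second term, we write coordinate-wise
$$\frac{1}{a+\varepsilon}-\frac{1}{b+\varepsilon}=\frac{b-a}{(a+\varepsilon)(b+\varepsilon)}.$$
We also use $|(\hat m_t^{(i)})_j|\le G$, since $\hat m_t^{(i)}$ is a bias-corrected convex combination of gradients, each bounded by $G$ under Assumption~\ref{ass:g}. Finally, we use $\hat v_t-\hat v_t^{(i)}=(v_t-v_t^{(i)})/(1-\beta_2^t)$.

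The delicate point is the denominator. We need a lower bound on both $(\hat v_t)_j$ and $(\hat v_t^{(i)})_j$, while $\rho_t$ is defined only from $\hat v_t$. Our plan is to read $\rho_t$ as a common lower bound, the minimum over both runs, which is how it will be used later via $\hat\rho$. With that reading the denominator is at least $(\rho_t+\varepsilon)^2$. The coordinate bound is then $G|(v_t-v_t^{(i)})_j|/((1-\beta_2^t)(\rho_t+\varepsilon)^2)$, giving an $\ell_2$ bound of $\frac{G}{(1-\beta_2^t)(\rho_t+\varepsilon)^2}\|v_t-v_t^{(i)}\|$. Here too we allow the $\sqrt d$ factor from converting $\|\hat m_t^{(i)}\|_\infty\le\|\hat m_t^{(i)}\|\le G$.

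The extra factor $1/(1-\beta_1^t)\ge1$ in the stated second term is also only a loosening, since it is at least one. It arises naturally if we instead bound $\|\hat m_t^{(i)}\|_\infty\le \|m_t^{(i)}\|_\infty/(1-\beta_1^t)\le G/(1-\beta_1^t)$, using $|(m_t)_j|\le G$. We will use this route so the constants match the statement exactly. Summing the two bounds gives the lemma.

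The main obstacle is the denominator justification for the $\hat v_t^{(i)}$ side. We expect to handle it by defining $\rho_t$ symmetrically or noting it is a lower bound for both runs.
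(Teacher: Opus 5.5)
Your proposal is correct and follows essentially the paper's route. It uses the same add-and-subtract split with denominator $\hat{v}_t+\varepsilon$ in the first term, the same coordinate bound $|(\hat{m}_t^{(i)})_j|\le G/(1-\beta_1^t)$, and the same lower bound $(\rho_t+\varepsilon)^2$ on the product of denominators. The one difference is the norm: the paper sums coordinates into an $\ell_1$ bound and then uses $\|\cdot\|_1\le\sqrt{d}\|\cdot\|$, which is where its $\sqrt{d}$ comes from, whereas your direct $\ell_2$ bound shows that factor is unnecessary.

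Your explicit requirement that $\rho_t$ also lower-bound $\hat{v}_t^{(i)}$ is exactly the assumption the paper's proof silently uses; it simply asserts $\rho_t=\min_{j}(\hat{v}_t^{(i)})_j$. This requirement is genuinely necessary, because with $\rho_t$ computed from $\hat{v}_t$ alone the bound fails. Take $d=t=1$, $g_1=G=1$ and $g_1^{(i)}=\sqrt{\varepsilon}$: the left side is about $1/(2\sqrt{\varepsilon})$, while the right side stays at most $1+G/(1-\beta_1)$.
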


\begin{theorem} \label{th:1}
Assume the sequence $\{\theta_t,\hat{v}_t\}_{t=1}^T$ is generated from Algorithm~\ref{alg:1} on dataset $S=\{z_1,z_2,\cdots,z_N\}$, under the Assumptions~\ref{ass:s1},~\ref{ass:g},~\ref{ass:v}, let $\eta=O(\frac{1}{\sqrt{d}})$, $\lambda\in [0,\frac{1}{\eta})$, $\beta_1=O(1)$ with $\beta_1\in (0,1)$, $\beta_2=O(1)$ with $\beta_2\in (0,1)$, $\sigma=O(1)$, $G=O(1)$ and
$L=O(1)$, we have
\begin{align}
 |\E [ F(\theta_T) - F_S(\theta_T)]| \leq O(\frac{\hat{\rho}^{-2T}}{N}),
\end{align}
where $\hat{\rho} =\rho +\varepsilon$ with $\rho = \min_{t\geq 1} \min_{j\in [d]}(\hat{v}_t)_j$.
\end{theorem}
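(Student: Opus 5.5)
We follow the uniform-stability route of \cite{hardt2016train}. Run Algorithm~\ref{alg:1} on $S$ and on $S^{(i)}$ with the same random sample-index sequence, and write $\delta_t=\|\theta_t-\theta_t^{(i)}\|$, $\Delta^m_t=\|m_t-m_t^{(i)}\|$, $\Delta^v_t=\|v_t-v_t^{(i)}\|$. By Assumption~\ref{ass:g}, $\sup_z \E|f(\theta_T;z)-f(\theta_T^{(i)};z)|\le G\,\E\delta_T$. Lemma~\ref{lem:gs} then reduces the claim to showing $\E\delta_T=O(\hat\rho^{-2T}/N)$. The bound is uniform in $i$ because every index is treated symmetrically.

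\textbf{Per-step recursion.} At step $t$, with probability $1-1/N$ the drawn sample is common to both runs. In that case Assumption~\ref{ass:s1} gives $\|g_t-g_t^{(i)}\|\le L\delta_{t-1}$. With probability $1/N$ the sample differs, and Assumption~\ref{ass:g} gives $\|g_t-g_t^{(i)}\|\le 2G$. The moment updates yield
\[
\Delta^m_t\le\beta_1\Delta^m_{t-1}+(1-\beta_1)\|g_t-g_t^{(i)}\|,
\]
and, since $\|g_t^2-(g_t^{(i)})^2\|\le 2G\|g_t-g_t^{(i)}\|$,
\[
\Delta^v_t\le\beta_2\Delta^v_{t-1}+2G(1-\beta_2)\|g_t-g_t^{(i)}\|.
\]
For the parameter update, $\lambda<1/\eta$ gives $\delta_t\le(1-\eta\lambda)\delta_{t-1}+\eta\|\cdot\|$, and Lemma~\ref{lem:1} bounds the last term. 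Using $\rho_t+\varepsilon\ge\hat\rho$, and assuming $\hat\rho\le1$ (the interesting regime, consistent with the statement), the coefficients in Lemma~\ref{lem:1} are at most $C\sqrt d\,\hat\rho^{-2}$. Here $C$ collects the factors $(1-\beta_1^t)^{-1}(1-\beta_2^t)^{-1}\le(1-\beta_1)^{-1}(1-\beta_2)^{-1}=O(1)$. With $\eta=O(1/\sqrt d)$ this gives
\[
\delta_t\le\delta_{t-1}+c\,\hat\rho^{-2}\big(\Delta^m_t+\Delta^v_t\big).
\]
Take expectations, and define the potential $u_t=\E[\delta_t+\Delta^m_t+\Delta^v_t]$.

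\textbf{Induction.} Combining the three inequalities gives
\[
u_t\le\big(1+c'\hat\rho^{-2}\big)u_{t-1}+\frac{c''\hat\rho^{-2}}{N}.
\]
The constants absorb $L$, $G$, $\beta_1$, $\beta_2$, which are all $O(1)$, and the contraction factors $\beta_1,\beta_2<1$ are simply bounded by $1$. We then prove by induction on $t$ that $u_t\le K\hat\rho^{-2t}/N$ for a suitable constant $K$, starting from $u_0=0$. To make the inductive step close, we bound $1+c'\hat\rho^{-2}\le(1+c')\hat\rho^{-2}$ (valid since $\hat\rho\le 1$) and absorb the geometric factor $(1+c')^t$ into the $O(\cdot)$. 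This matches the paper's "induction framework" and its stated $O(\hat\rho^{-2T}/N)$ with constants suppressed. Taking $t=T$ then gives $\E\delta_T\le u_T$, which is the desired bound.

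\textbf{Main obstacle.} The key difficulty is the coupling in the recursion: $\Delta^v$ is multiplied by $\hat\rho^{-2}$, and the growth factor per step is genuinely $\hat\rho^{-2}$ rather than $1+O(\eta)$. This is exactly why the bound is exponential in $T$. Obtaining a clean induction hypothesis requires careful bookkeeping of the constant accumulation $(1+c')^T$ and of the bias-correction denominators at small $t$. We would handle the latter by the uniform bound $1-\beta^t\ge1-\beta$.
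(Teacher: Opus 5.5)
Your proposal follows essentially the same route as the paper: coupled runs on $S$ and $S^{(i)}$ with a shared index sequence, and Lemma~\ref{lem:1} with $\rho_t+\varepsilon\ge\hat\rho$, giving a per-step growth factor of order $\hat\rho^{-2}$ in the coupled recursions for $\|m_t-m_t^{(i)}\|$, $\|v_t-v_t^{(i)}\|$ and $\|\theta_t-\theta_t^{(i)}\|$, then induction to $O(\hat\rho^{-2T}/N)$, $G$-Lipschitzness and Lemma~\ref{lem:gs}. Merging the three sequences into one potential $u_t$ and bounding the swapped-sample gradient gap by $2G$ (instead of the paper's $2\sigma$ variance split) are only bookkeeping changes, and the $(1+c')^T$ factor you absorb into the $O(\cdot)$ is hidden in the same way in the paper's induction.
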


\begin{remark}
Based on the above Theorem~\ref{th:1},
when $\lambda=0$ in Algorithm~\ref{alg:1}, our Adam-srf algorithm has a generalization error of $O(\frac{1}{(\rho+\varepsilon)^{2T}N})$;
when $\lambda\in (0,\frac{1}{\eta})$ in Algorithm~\ref{alg:1},
our AdamW-srf algorithm also has a generalization error of $O(\frac{1}{(\rho+\varepsilon)^{2T}N})$.
From the proof of Theorem~\ref{th:1}, we have for all $t\geq1$
\begin{align}
  & \E \|\theta_{t+1} - \theta_{t+1}^{(i)}\| \leq \frac{\varphi_{t+1}}{N} = O(\frac{1}{(\rho+\varepsilon)^{2(t+1)}N}), \nonumber \\
  &  \varphi_{t+1} = \underbrace{\textcolor{blue}{(1-\eta\lambda)\varphi_t}}_{(i)} + \frac{\eta\sqrt{d}\phi_{t+1}}{(1-\beta_1^{t+1})(\rho+\varepsilon)} \nonumber \\
  & \qquad \qquad +  \frac{\eta G\sqrt{d}\psi_{t+1}}{(1-\beta_1^{t+1})(1-\beta_2^{t+1})(\rho+\varepsilon)^2}, \nonumber
\end{align}
where
$\phi_{t+1}= \beta_1\phi_t + 2(1-\beta_1)\sigma + (1-\beta_1)L\varphi_t$ and $\psi_{t+1} = \beta_2 \psi_t + 4(1-\beta_2)G\sigma + 2(1-\beta_2)GL\varphi_t$.
For notational simplicity, let $\phi_{t+1}^{a}$, $\psi_{t+1}^{a}$ and $\varphi_{t+1}^{a}$ for our Adam-srf algorithm (i.e., Algorithm~\ref{alg:1} with $\lambda=0$); let $\phi_{t+1}^{w}$, $\psi_{t+1}^{w}$ and $\varphi_{t+1}^{w}$ for our AdamW-srf algorithm (i.e., Algorithm~\ref{alg:1} with $\lambda\in (0,\frac{1}{\eta})$).
From the above term $(i)$, we can find
$$\phi_{t+1}^{w} < \phi_{t+1}^{a}, \ \psi_{t+1}^{w} < \psi_{t+1}^{a}, \ \varphi_{t+1}^{w}< \varphi_{t+1}^{a}, \ t\geq 1. $$
Since $\varphi_{t+1}^{w} < \varphi_{t+1}^{a}$ for all
$t\geq1$, clearly, our AdamW-srf has a smaller
 generalization error than our Adam-srf due to using decoupled weight decay regularization.

 From the above results, generalization errors of the Adam(W)-srf algorithms rely on the smallest element of second-order momentum $\min_{j\in [d]}(\hat{v}_t)_j$. When element of the second-order momentum $(\hat{v}_t)_j$
 is very small, adaptive learning rate becomes too large (see Figure~\ref{fig:1} (a)), which affects the algorithm's generalization ability and stability.
\end{remark}

\subsection{Generalization Errors of our HomeAdam(W) Algorithms}

\begin{theorem} \label{th:2}
Assume the sequence $\{\theta_t\}_{t=1}^T$ is generated
from Algorithm~\ref{alg:2} on dataset $S=\{z_1,z_2,\cdots,z_N\}$, under the Assumptions~\ref{ass:s1},~\ref{ass:g},~\ref{ass:v}, without loss of generality, let $\tau\geq 1$, $\lambda\in [0,\frac{1}{\eta})$, $\beta_1=O(1)$ with $\beta_1\in (0,1)$, $\beta_2=O(1)$ with $\beta_2\in (0,1)$, $\sigma=O(1)$, $G=O(1)$ and $L=O(1)$.
 If the iteration number is small (i.e., $T=O(1)$) set
 $\eta=\frac{1}{\sqrt{d}}$, otherwise set $\eta=\frac{1}{\sqrt{d}T}$, we have
\begin{align}
|\E [ F(\theta_T) - F_S(\theta_T)]| \leq O(\frac{1}{N}).
\end{align}
\end{theorem}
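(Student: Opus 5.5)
We follow the stability route of Theorem~\ref{th:1}, but the threshold $\tau$ controls the effective stepsize at every iteration. By Lemma~\ref{lem:gs} together with Assumption~\ref{ass:g}, it suffices to show
\[
\E\|\theta_T-\theta_T^{(i)}\|\le \varphi_T/N \quad\text{with } \varphi_T=O(1).
\]
Then $\sup_z \E[f(\theta_T;z)-f(\theta_T^{(i)};z)]\le G\varphi_T/N=O(1/N)$.

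\textbf{Per-step bound.} Run Algorithm~\ref{alg:2} on $S$ and on $S^{(i)}$ with the same sample order. We then prove by induction on $t$ the coupled recursions from the remark after Theorem~\ref{th:1}:
\begin{align}
\E\|m_t-m_t^{(i)}\|&\le \phi_t/N, \nonumber\\
\E\|v_t-v_t^{(i)}\|&\le \psi_t/N, \nonumber\\
\E\|\theta_t-\theta_t^{(i)}\|&\le \varphi_t/N. \nonumber
\end{align}
The $m$ and $v$ recursions are unchanged, since lines 5--9 coincide with Algorithm~\ref{alg:1}. They come from splitting on whether the sampled index hits $i$ (probability $1/N$, bounded gradient gap $2G$ or $2\sigma$-type terms) or not (gap $\le L\|\theta_{t-1}-\theta_{t-1}^{(i)}\|$ by Assumption~\ref{ass:s1}).

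For the $\theta$-update, write $\theta_t=(1-\eta\lambda)\theta_{t-1}-\eta R(\hat v_t)\hat m_t$. We bound $\|R(\hat v_t)\hat m_t-R(\hat v_t^{(i)})\hat m_t^{(i)}\|$ by cases on the branches taken by the two runs.
\begin{itemize}
\item[(a)] Both runs take the adaptive branch. Apply Lemma~\ref{lem:1} with $\rho_t\ge\tau\ge1$, so the factors $(\rho_t+\varepsilon)^{-1}$ and $(\rho_t+\varepsilon)^{-2}$ are at most $1$.
\item[(b)] Both runs take the SGDM branch. The gap is $\|m_t-m_t^{(i)}\|/(1-\beta_1^t)$.
\item[(c)] The runs take different branches. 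Decompose through one intermediate term, e.g.
\[
\hat m_t/(\hat v_t+\varepsilon)-\hat m_t^{(i)}=\bigl(\hat m_t/(\hat v_t+\varepsilon)-\hat m_t^{(i)}/(\hat v_t+\varepsilon)\bigr)+\hat m_t^{(i)}\bigl(1/(\hat v_t+\varepsilon)-1\bigr).
\]
The first piece is at most $\|\hat m_t-\hat m_t^{(i)}\|$. For the second, note that $\hat v_t\ge\tau$ coordinatewise while some coordinate of $\hat v_t^{(i)}$ lies below $\tau$. Hence this coordinate sits within $\|\hat v_t-\hat v_t^{(i)}\|_\infty$ of the threshold, which lets us bound $|1/(\hat v_t+\varepsilon)-1|$ by a multiple of $\|v_t-v_t^{(i)}\|$.
\end{itemize}

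\textbf{Main obstacle.} Case (c) is where I expect the difficulty. The discontinuity of $R$ at $\tau$ means the natural bound is only $O(G)$, which is not proportional to the parameter gap. What I would try first is to exploit $\tau\ge1$: on the SGDM branch the factor is $1$, and on the adaptive branch every factor lies in $(0,1]$. Their difference therefore equals $|1-1/(\hat v_t+\varepsilon)|$ coordinatewise. On the adaptive side I would bound the larger of the two stepsize factors uniformly by $1$ and control the switch through $\|\hat v_t-\hat v_t^{(i)}\|$ near the threshold. Failing that, a coarse fallback is to absorb the mismatch event into the $1/N$ term. The branches can only begin to differ after the differing sample has been drawn, so this step at least costs $O(\eta G\sqrt d)$ times the accumulated discrepancy.

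\textbf{Unrolling the recursion.} In all cases we obtain the recursion of the remark after Theorem~\ref{th:1} with $\rho+\varepsilon$ replaced by a constant $\ge1$:
\[
\varphi_{t}\le(1-\eta\lambda)\varphi_{t-1}+c\,\eta\sqrt d\,(\phi_t+G\psi_t),
\]
with $c$ depending on $\beta_1$ and $\beta_2$. Stacking $(\phi_t,\psi_t,\varphi_t)$ gives a linear recursion $x_t\le Mx_{t-1}+b$ in which $\|M\|\le1+C\eta\sqrt d$ for an $O(1)$ constant $C$, and $b=O(1)$. Hence $\varphi_T\le (1+C\eta\sqrt d)^T\,T\,O(1)$. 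If $T=O(1)$ and $\eta=1/\sqrt d$, this is $(1+C)^{O(1)}=O(1)$. If $\eta=1/(\sqrt dT)$, then $(1+C/T)^T\le e^C$. For the additive term, each step contributes $\eta\sqrt d\cdot O(1)=O(1/T)$, so the sum over $T$ steps is $O(1)$. In both regimes $\varphi_T=O(1)$, and Lemma~\ref{lem:gs} yields $|\E[F(\theta_T)-F_S(\theta_T)]|\le O(1/N)$.
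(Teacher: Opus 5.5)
Your skeleton is the paper's: the same coupling, the same $\phi_t,\psi_t,\varphi_t$ recursions, and Lemma~\ref{lem:1} with $\rho_t\ge\tau\ge1$ on the adaptive branch. However, case (c) is a genuine gap, and neither remedy you sketch closes it.

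Your first idea fails because Algorithm~\ref{alg:2} switches on $\min_j(\hat v_t)_j$, not coordinatewise. When the runs disagree, only the minimizing coordinate is forced to within $\|\hat v_t-\hat v_t^{(i)}\|_\infty$ of $\tau$. Every other coordinate $j$ has its factor jump from $1/((\hat v_t)_j+\varepsilon)$ to $1$. For instance, take $\hat m_t=\hat m_t^{(i)}$, $\hat v_t=(\tau,5\tau)$ and $\hat v_t^{(i)}=(\tau-\delta,5\tau)$. Then the second coordinates of the two updates differ by at least $\tfrac{4}{5}\eta|(\hat m_t)_2|$ for every $\delta>0$. Even at the minimizing coordinate, the jump $1-1/(\tau+\varepsilon)$ vanishes only when $\tau+\varepsilon=1$. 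So no bound proportional to $\|v_t-v_t^{(i)}\|$ exists. You would instead need the \emph{probability} of a mismatch to be controlled by the discrepancy, i.e.\ anti-concentration of $\min_j(\hat v_t)_j$ near $\tau$, and Assumptions~\ref{ass:s1}--\ref{ass:v} do not give that.

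Your fallback, made precise, goes as follows. A mismatch at step $t$ requires index $i$ to have been drawn by then, which has probability at most $t/N$. It costs at most $2\eta G/(1-\beta_1)$, since all stepsize factors are $\le 1$ when $\tau\ge1$. Summing over steps adds $O(\eta G T^2/N)$ to $\E\|\theta_T-\theta_T^{(i)}\|$. This settles the regime $T=O(1)$. With $\eta=1/(\sqrt d\,T)$, however, the extra term is $O(T/(\sqrt d\,N))$, so the large-$T$ claim remains unproved.

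The paper does not supply the missing idea either. Its proof splits on the branch taken by the $S$-run and writes the same branch for the $S^{(i)}$-run. That is, it tacitly assumes the coupled runs never disagree, which is exactly your case (c). To close the argument you must do one of the following:
\begin{itemize}
\item[(1)] make that assumption explicit;
\item[(2)] add an anti-concentration assumption on $\min_j(\hat v_t)_j$ near $\tau$;
\item[(3)] change the switching rule. For the coordinatewise switch of Algorithm~\ref{alg:3} with $\tau+\varepsilon=1$, your decomposition does work: if $(\hat v_t)_j\ge\tau>(\hat v_t^{(i)})_j$, then $|1-\frac{1}{(\hat v_t)_j+\varepsilon}|=\frac{(\hat v_t)_j-\tau}{(\hat v_t)_j+\varepsilon}\le(\hat v_t)_j-(\hat v_t^{(i)})_j$.
\end{itemize}

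Your unrolling is sounder than the paper's. The paper claims $\varphi_t=O(t)$ under $\eta=O(1/\sqrt d)$, even though the recursion is multiplicative. Still, as written, $(1+C\eta\sqrt d)^T\,T\,O(1)$ gives only $O(T)$, because the $\phi$ and $\psi$ entries of $b$ are $O(1)$. Also, $\|M\|\le 1+C\eta\sqrt d$ needs a weighted norm when $L>1$. Use instead the contraction bounds $\phi_t\le 2\sigma+L\max_{s<t}\varphi_s$ and $\psi_t\le 2G^2+4G\sigma+2GL\max_{s<t}\varphi_s$. These give $\max_{s\le t}\varphi_s\le(1+C\eta\sqrt d)\max_{s<t}\varphi_s+C\eta\sqrt d$, which yields $\varphi_T=O(1)$ in both regimes.
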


\begin{remark} \label{re:2}
Based on the above Theorem~\ref{th:2},
when $\lambda=0$ in Algorithm~\ref{alg:2}, our HomeAdam algorithm has a generalization error of $O(\frac{1}{N})$.
When $\lambda\in (0,\frac{1}{\eta})$ in Algorithm~\ref{alg:2}, our HomeAdamW algorithm also has a generalization error of $O(\frac{1}{N})$.
From the proof of Theorem~\ref{th:2}, we have for all $t\geq1$
\begin{align}
  & \E \|\theta_{t+1} - \theta_{t+1}^{(i)}\| \leq \frac{\varphi_{t+1}}{N} = O(\frac{1}{N}), \nonumber \\
  &  \varphi_{t+1} = \underbrace{\textcolor{blue}{(1-\eta\lambda)\varphi_t}}_{(i)} + \frac{\eta\sqrt{d}\phi_{t+1}}{1-\beta_1^{t+1}} +  \frac{\eta G\sqrt{d}\psi_{t+1}}{(1-\beta_1^{t+1})(1-\beta_2^{t+1})}, \nonumber
\end{align}
where
$\phi_{t+1}= \beta_1\phi_t + 2(1-\beta_1)\sigma + (1-\beta_1)L\varphi_t$ and $\psi_{t+1} = \beta_2 \psi_t + 4(1-\beta_2)G\sigma + 2(1-\beta_2)GL\varphi_t$.
Similarly, let $\phi_{t+1}^{a}$, $\psi_{t+1}^{a}$ and $\varphi_{t+1}^{a}$ for our HomeAdam algorithm (i.e., Algorithm~\ref{alg:2} with $\lambda=0$); let $\phi_{t+1}^{w}$, $\psi_{t+1}^{w}$ and $\varphi_{t+1}^{w}$ for our HomeAdamW algorithm (i.e., Algorithm~\ref{alg:2} with $\lambda\in (0,\frac{1}{\eta})$).
From the above term $(i)$, we can find
$$\phi_{t+1}^{w} < \phi_{t+1}^{a}, \ \psi_{t+1}^{w} < \psi_{t+1}^{a}, \ \varphi_{t+1}^{w}< \varphi_{t+1}^{a}, \ t\geq 1. $$
Since $\varphi_{t+1}^{w} < \varphi_{t+1}^{a}$ for all
$t\geq1$, our HomeAdamW has a smaller
 generalization error than our HomeAdam due to using decoupled weight decay regularization.

 From Figure~\ref{fig:1} (c), our HomeAdam(W) algorithms keep the learning rate from becoming too large, which protects the algorithm's generalization ability and stability. Thus, our HomeAdam(W) algorithms have a smaller generalization error of $O(\frac{1}{N})$ than $O(\frac{1}{(\rho+\varepsilon)^{2T}N})$ of the Adam(W)-srf algorithms. Meanwhile, it also is smaller than $O(\frac{1}{\sqrt{N}})$ of the Adam and AdamW algorithms~\citep{zhou2024towards}.
\end{remark}

\section{Convergence Analysis}
In the section, we provide convergence analysis for our Adam(W)-srf and HomeAdam(W) algorithms under some mild conditions. All related proofs are provided in the following Appendices~\ref{ap:ca1} and~\ref{ap:ca2}.
We first give some mild assumptions.

\begin{assumption} \label{ass:s2}
The objective function $F(\theta)$ is $L$-Lipschitz smooth, such that
\begin{align}
    \|\nabla F(\theta_1)-\nabla F(\theta_2)\| \leq L\|\theta_1-\theta_2\|, \ \theta_1, \theta_2\in \R^{d}.
\end{align}
\end{assumption}

\begin{assumption} \label{ass:f}
 The objective function $F(\theta)$ has a smaller bounded, i.e.,  $F^* = \inf_{\theta\in \R^d} F(\theta)>-\infty$.
\end{assumption}

Assumption~\ref{ass:s2} shows smoothness of objective function $F(\theta)$, which is milder than the above
Assumption~\ref{ass:s1}. Assumption~\ref{ass:f} guarantees feasibility of the above problem~(\ref{eq:p1}), which also is commonly used in optimization~\citep{bottou2018optimization,ghadimi2013stochastic}.

\subsection{Convergence Properties of Adam(W)-srf Algorithms}

\begin{lemma} \label{lem:2}
Assume the sequence $\{m_t\}_{t=0}^T$ is generated from Algorithm~\ref{alg:1}, let $\beta_1 = 1-c\eta\in (0,1)$, $\beta_2\in (0,1)$, we have
\begin{align}
 &\E\|\nabla F(\theta_{t}) - m_{t+1}\|^2 \leq (1-c\eta)\mathbb{E} \|\nabla F(\theta_{t-1}) - m_{t} \|^2 \nonumber \\
 & \qquad \qquad \qquad + \frac{2}{c \eta}L^2\mathbb{E}\|\theta_t-\theta_{t-1}\|^2  + c^2\eta^2\sigma^2,
\end{align}
where $c>0$.
\end{lemma}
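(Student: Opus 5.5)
This is the standard STORM/momentum-tracking recursion. Write $\beta_1 = 1-c\eta$, so that the momentum update reads
\[
m_{t+1}=(1-c\eta)m_t + c\eta\, g_{t+1},
\]
where $g_{t+1}=\nabla f(\theta_t;z_{t+1})$. I would then decompose the tracking error as
\begin{align}
\nabla F(\theta_t) - m_{t+1} &= (1-c\eta)\big(\nabla F(\theta_{t-1}) - m_t\big) + (1-c\eta)\big(\nabla F(\theta_t)-\nabla F(\theta_{t-1})\big) \nonumber \\
&\quad + c\eta\big(\nabla F(\theta_t) - g_{t+1}\big). \nonumber
\end{align}
Conditioning on the history up to $\theta_t$, the last term has zero mean by the unbiasedness in Assumption~\ref{ass:v}. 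Moreover, it is independent of the first two terms, which are measurable with respect to that history. So when we take $\E\|\cdot\|^2$ the cross term vanishes. The noise term then contributes at most $c^2\eta^2\sigma^2$, again by Assumption~\ref{ass:v}.

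For the remaining deterministic part $(1-c\eta)(a+b)$, with $a=\nabla F(\theta_{t-1})-m_t$ and $b=\nabla F(\theta_t)-\nabla F(\theta_{t-1})$, I would use Young's inequality
\[
\|a+b\|^2\le (1+\alpha)\|a\|^2+(1+1/\alpha)\|b\|^2
\]
with the choice $\alpha=c\eta$. This gives
\[
(1-c\eta)^2(1+c\eta)\|a\|^2 \le (1-c\eta)\|a\|^2,
\]
since $(1-c\eta)(1+c\eta)\le 1$. For the other piece,
\[
(1-c\eta)^2\Big(1+\frac{1}{c\eta}\Big)\|b\|^2 \le \frac{(1-c\eta)(1+c\eta)}{c\eta}\|b\|^2 \le \frac{1}{c\eta}\|b\|^2 .
\]
Smoothness (Assumption~\ref{ass:s2}, or equivalently the consequence of Assumption~\ref{ass:s1}) gives $\|b\|^2\le L^2\|\theta_t-\theta_{t-1}\|^2$. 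Together these yield the claim, even with the constant $\frac{1}{c\eta}$, which is better than the stated $\frac{2}{c\eta}$. So the stated bound follows a fortiori. If one prefers to mirror the stated constant, it is enough to use the cruder bound $(1+1/\alpha)\le 2/\alpha$, valid for $\alpha\le1$.

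Note that $\beta_2$ and $v_t$ play no role here; the bound concerns only $m_t$. The only delicate point is the measurability and independence argument that kills the cross term. This requires that $z_{t+1}$ is drawn fresh, independent of $\theta_t$ and $m_t$, as in Algorithm~\ref{alg:1}. I would state this conditioning explicitly and then take total expectation.
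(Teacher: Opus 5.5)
Your proposal is correct and follows the paper's proof essentially step for step: the same three-term decomposition of $\nabla F(\theta_t)-m_{t+1}$, the cross term removed via conditional unbiasedness of $\nabla f(\theta_t;z_{t+1})$, Young's inequality with parameter $1-\beta_1=c\eta$, and smoothness to bound $\|\nabla F(\theta_t)-\nabla F(\theta_{t-1})\|^2$. Your sharper coefficient $\frac{1}{c\eta}$ is valid (the paper just uses the looser $\beta_1^2(1+\frac{1}{1-\beta_1})\le \frac{2}{1-\beta_1}$); one wording fix is that the noise term is not independent of the first two terms, since its law depends on $\theta_t$, but it is conditionally mean-zero given the history, which is all your cross-term argument uses.
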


\begin{theorem} \label{th:3}
Assume the sequence $\{\theta_t\}_{t=0}^T$ is generated
from Algorithm~\ref{alg:1}. Under the Assumptions~\ref{ass:s2},~\ref{ass:g},~\ref{ass:v},~\ref{ass:f}, and let $0\leq \lambda < \min(\frac{1}{\eta},\frac{1}{\eta T^\gamma \bar{G}\hat{G}})$, $\|\theta_0\| \leq \eta \bar{G}$, $c\geq \frac{16L}{\breve{\rho}}$, $\beta_1 = 1-c\eta\in (0,1)$, $\beta_2\in (0,1)$ and $0<\eta \leq \frac{\breve{\rho}}{4L}$, we have
\begin{align}
\frac{1}{T+1}\sum_{t=0}^T\mathbb{E}\|\nabla F(\theta_{t})\|
  & \leq \frac{4\sqrt{2\Delta}\hat{G}}{\sqrt{T\eta\breve{\rho}}} + \frac{4\sqrt{2}\hat{G}}{T^{\gamma-1}\breve{\rho}} \nonumber \\
   & \quad  +
  \frac{4c\sigma\sqrt{\eta}\hat{G}}{\sqrt{L\breve{\rho}}} + \frac{1}{T^{\gamma-1}},
\end{align}
where $\bar{G}=\frac{G}{(1-\beta_1)(\rho+\varepsilon)}$, $\hat{G} =\frac{G^2+\varepsilon}{1-\beta_2}$, $\breve{\rho}=(1-\beta_1)(\rho+(1-\beta_2)\varepsilon)$ and $\rho = \min_{t\geq 1} \min_{j\in [d]}(\hat{v}_t)_j$.
\end{theorem}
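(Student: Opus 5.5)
We follow the standard nonconvex momentum-with-variance-tracking template: a descent inequality for $F$ along the Adam(W)-srf step, combined with Lemma~\ref{lem:2} through a Lyapunov function.

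\textbf{Bounding the preconditioner.} Write the update as $\theta_{t+1}=\theta_t-\eta(H_{t+1}^{-1}m_{t+1}+\lambda\theta_t)$, with diagonal $H_{t+1}=(1-\beta_1^{t+1})(\hat v_{t+1}+\varepsilon)$ (the bias corrections folded in). Two-sided spectral bounds follow directly.
\begin{itemize}
\item Lower bound: since $\beta_1\in(0,1)$ and $\hat v_{t+1}\ge\rho$, we get $H_{t+1}\succeq(1-\beta_1)(\rho+(1-\beta_2)\varepsilon)I=\breve\rho I$.
\item Upper bound: since $\|g_t\|\le G$ by Assumption~\ref{ass:g}, $v_t\le G^2$ elementwise, so $\hat v_t\le G^2/(1-\beta_2)$ and $H_{t+1}\preceq\hat G I$.
\end{itemize}

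\textbf{Handling weight decay.} We first show by induction that $\|\theta_t\|\le \eta\bar G\,t$ (or a similar crude bound). Each step moves by at most $\eta\|m_{t+1}\|/\breve\rho\lesssim\eta\bar G$ plus the contraction $(1-\eta\lambda)$, and $\|\theta_0\|\le\eta\bar G$. The condition $\lambda<1/(\eta T^\gamma\bar G\hat G)$ then makes $\eta\lambda\|\theta_t\|\hat G$ of order $T^{1-\gamma}/T$-ish. These terms produce the $T^{-(\gamma-1)}$ contributions in the bound.

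\textbf{Descent step.} By $L$-smoothness (Assumption~\ref{ass:s2}),
\begin{align}
F(\theta_{t+1})\le F(\theta_t)-\eta\langle\nabla F(\theta_t),H_{t+1}^{-1}m_{t+1}+\lambda\theta_t\rangle+\frac{L}{2}\|\theta_{t+1}-\theta_t\|^2. \nonumber
\end{align}
Split $m_{t+1}=\nabla F(\theta_t)+(m_{t+1}-\nabla F(\theta_t))$ and apply Young's inequality with the $H^{-1}$-weighted norm. This yields a $-\frac{\eta}{2\hat G}\|\nabla F(\theta_t)\|^2$ term and a $+\frac{\eta}{2\breve\rho}\|\nabla F(\theta_t)-m_{t+1}\|^2$ term. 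It also yields a negative $-\frac{\breve\rho}{4\eta}\|\theta_{t+1}-\theta_t\|^2$ term, obtained by writing the step as $\eta H^{-1}m$ and using $\eta\le\breve\rho/(4L)$ to absorb the $L/2$ term. The $\lambda\theta_t$ cross-term is bounded using the $\|\theta_t\|$ bound above.

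\textbf{Lyapunov argument.} Define $\Phi_t=F(\theta_t)+\frac{\breve\rho^{-1}}{c}\,\E\|\nabla F(\theta_{t-1})-m_t\|^2$, scaled appropriately. Adding Lemma~\ref{lem:2} makes the $(1-c\eta)$ contraction cancel the $+\frac{\eta}{2\breve\rho}$ error term. The extra $\frac{2L^2}{c\eta}\|\theta_t-\theta_{t-1}\|^2$ term is absorbed by the negative movement term precisely when $c\ge 16L/\breve\rho$. Telescoping over $t=0,\dots,T$ gives
\begin{align}
\frac{1}{T+1}\sum_t\E\|\nabla F(\theta_t)\|^2\lesssim \hat G\Big(\frac{\Delta}{T\eta}+\frac{c^2\eta\sigma^2}{L\breve\rho}\Big)\frac{1}{\breve\rho}+\text{(weight-decay terms)}. \nonumber
\end{align}
Finally, Jensen's inequality and $\sqrt{a+b}\le\sqrt a+\sqrt b$ convert this to the stated bound on the average of $\E\|\nabla F(\theta_t)\|$.

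\textbf{Main obstacle.} The delicate part is the constant bookkeeping in this absorption. The $\breve\rho$ weighting must simultaneously satisfy the Young split, the absorption of $L^2/(c\eta)$, and the $\eta\le\breve\rho/(4L)$ requirement, and must produce exactly the stated factors $4\sqrt2$ and $\hat G/\sqrt{\breve\rho}$. A secondary difficulty is the non-constant $\beta_1^{t+1}$ bias correction, which we handle crudely via $1-\beta_1^{t+1}\ge1-\beta_1$.
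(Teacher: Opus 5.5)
Your outline is correct, but it takes a different route from the paper.

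\textbf{How the paper argues.} It never extracts a $-\|\nabla F\|^2$ term from the descent step. It writes the update as the minimizer of $\langle m_t,\theta\rangle+\frac{1}{2\eta}(\theta-(1-\eta\lambda)\theta_{t-1})^T H_t^{-1}(\theta-(1-\eta\lambda)\theta_{t-1})$; its $H_t$ is your $H^{-1}$. The first-order optimality condition then gives $\langle m_t,\theta_{t-1}-\theta_t\rangle\ge\frac{\breve\rho}{\eta}\|\theta_t-\theta_{t-1}\|^2+\lambda\langle H_t^{-1}\theta_{t-1},\theta_t-\theta_{t-1}\rangle$, so smoothness plus Young leaves only a negative movement term. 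Telescoping $F(\theta_t)+\frac{1}{2L}\|\nabla F(\theta_t)-m_{t+1}\|^2$ with Lemma~\ref{lem:2} controls the average of $\frac{1}{\breve\rho}\|\nabla F(\theta_{t-1})-m_t\|+\frac{1}{\eta}\|\theta_t-\theta_{t-1}\|$. The gradient is recovered only at the end, via $\frac{1}{\breve\rho}\|\nabla F(\theta_{t-1})-m_t\|+\|H_tm_t\|\ge\|H_t\nabla F(\theta_{t-1})\|\ge\|\nabla F(\theta_{t-1})\|/\hat G$. That is why its bound carries the factor $\hat G/\sqrt{\breve\rho}$ and a standalone $\lambda\|\theta_{t-1}\|$ contribution (the final $T^{1-\gamma}$ term).

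\textbf{How your route compares.} Your polarization identity $-\eta\langle\nabla F,H^{-1}m\rangle=-\frac{\eta}{2}\|\nabla F\|_{H^{-1}}^2-\frac{\eta}{2}\|m\|_{H^{-1}}^2+\frac{\eta}{2}\|\nabla F-m\|_{H^{-1}}^2$ is the more standard template. It actually gives a sharper bound: the leading term is $O(\sqrt{\hat G\Delta/(T\eta)})$ rather than $\hat G\sqrt{\Delta/(T\eta\breve\rho)}$, and the noise and weight-decay terms are similarly smaller. So do not try to reproduce the constants $4\sqrt2$ and $\hat G/\sqrt{\breve\rho}$. Just observe that $\breve\rho\le\hat G$ (since $(\hat v_t)_j\le G^2$) and dominate the stated bound term by term. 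In exchange, the paper's template works purely with the movement $\|\theta_t-\theta_{t-1}\|$, so it extends to projected or proximal updates where no $-\|\nabla F\|^2$ term is available.

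\textbf{Three bookkeeping fixes.} First, for $\|\theta_t\|\le(t+1)\eta\bar G$ you need $H\succeq(1-\beta_1)(\rho+\varepsilon)I$. The per-step bound $\eta G/\breve\rho$ you wrote exceeds $\eta\bar G$, so it would not match the hypothesis on $\lambda$. Second, with weight decay $\|\theta_{t+1}-\theta_t\|^2\le2\eta^2\|H^{-1}m_{t+1}\|^2+2\eta^2\lambda^2\|\theta_t\|^2$. Your movement coefficient therefore drops to $-\frac{\breve\rho}{8\eta}$, which is still enough under $c\ge16L/\breve\rho$. Third, split the cross term $-\eta\lambda\langle\nabla F(\theta_t),\theta_t\rangle$ by Young against half of $\frac{\eta}{2\hat G}\|\nabla F(\theta_t)\|^2$. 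This leaves $\eta\hat G\lambda^2\|\theta_t\|^2\le\eta(t+1)^2/(\hat G T^{2\gamma})$, which after telescoping and the square root gives an $O(T^{1-\gamma})$ term of the stated shape. The crude bound $\eta\lambda G\|\theta_t\|$ would instead give $\sqrt{G}\,T^{(1-\gamma)/2}$, which in general does not fit under the stated terms.
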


\begin{remark}
Based on the above Theorem~\ref{th:3},
Let $\eta=\frac{1}{\sqrt{T}}$ and $\gamma=\frac{3}{4}$, we can obtain
\begin{align}
  \frac{1}{T+1}\sum_{t=0}^T\mathbb{E}\|\nabla F(\theta_{t})\|
   & \leq \frac{4\sqrt{2\Delta}\hat{G}}{\sqrt{\breve{\rho}}T^{1/4}} + \frac{4\sqrt{2}\hat{G}}{\breve{\rho}T^{1/4}} \nonumber \\
   & \quad +
  \frac{4c\sigma\hat{G}}{\sqrt{L\breve{\rho}}T^{1/4}}+ \frac{1}{T^{1/4}}. \nonumber
\end{align}
Set $G=O(1)$, $L=O(1)$, $\sigma=O(1)$, $\beta_1=O(1)$ with $\beta_1\in(0,1)$, $\beta_2=O(1)$ with $\beta_2\in(0,1)$ and $c=O(1)$, we have $\hat{G}=\frac{G^2+\varepsilon}{1-\beta_2}=O(1)$ and $\Delta = F(\theta_0) + \frac{1}{L}(\sigma^2 + \beta_1^2G^2) - F^*=O(1)$. Since $\breve{\rho}=(1-\beta_1)(\rho+(1-\beta_2)\varepsilon) \leq \rho+\varepsilon$ and $\varepsilon$ is very small, $\breve{\rho}$ also is very small. Then we have
\begin{align}
  \frac{1}{T+1}\sum_{t=0}^T\mathbb{E}\|\nabla F(\theta_{t})\|  \leq O(\frac{\breve{\rho}^{-1}}{T^{1/4}}).
\end{align}

\end{remark}

\begin{figure*}[ht]
\centering
 \subfigure{\includegraphics[width=0.23\textwidth]{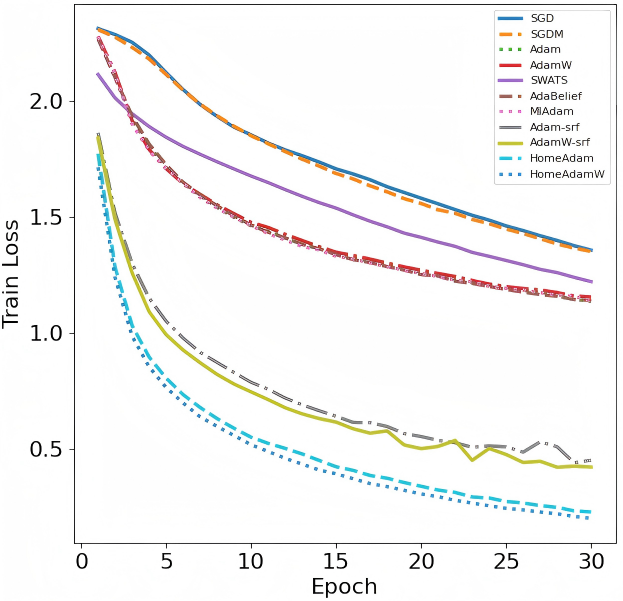}}
  \hfill
 \subfigure{\includegraphics[width=0.23\textwidth]{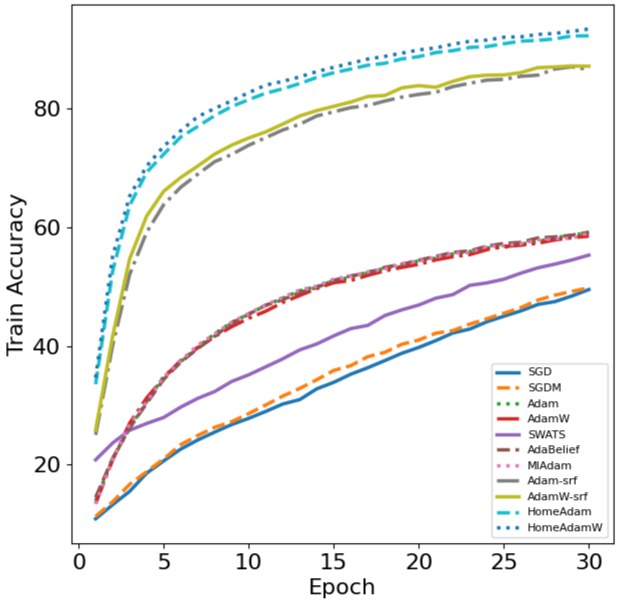}}
  \hfill
  \subfigure{\includegraphics[width=0.23\textwidth]{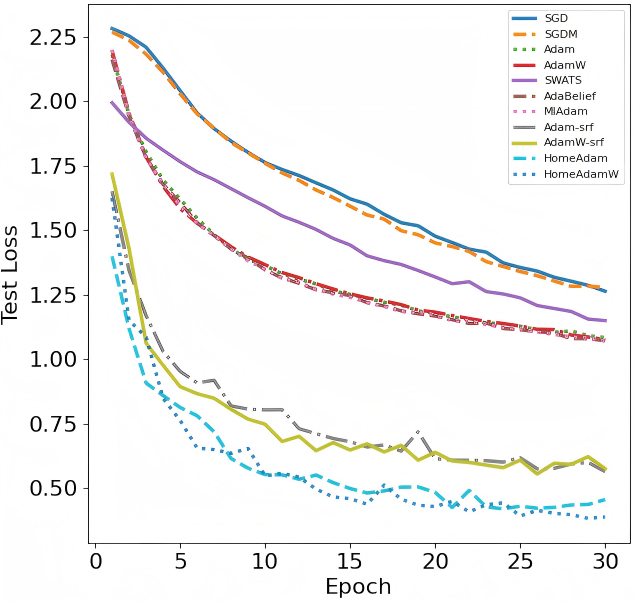}}
  \hfill
  \subfigure{\includegraphics[width=0.23\textwidth]{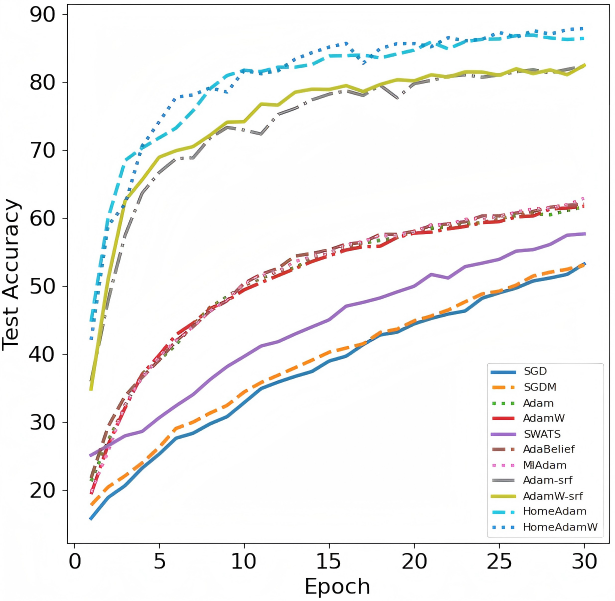}}
  \hfill
\caption{Results of image classification at \emph{Cifar-10} dataset.}
\label{fig:2}
\end{figure*}

\begin{figure*}[ht]
\centering
 \subfigure{\includegraphics[width=0.23\textwidth]{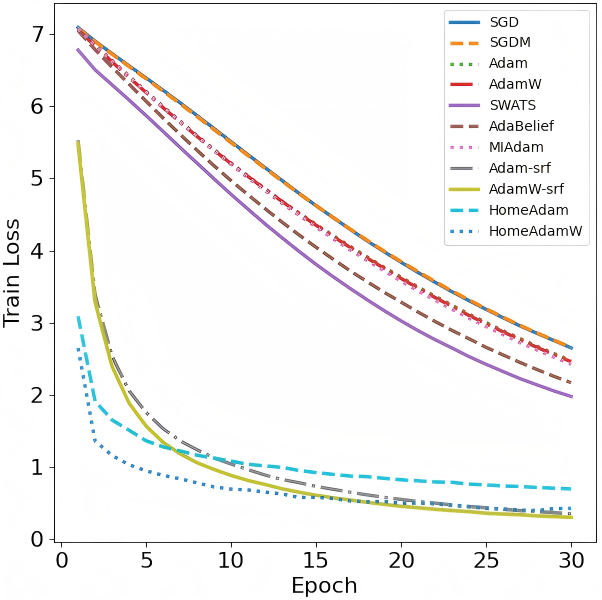}}
  \hfill
 \subfigure{\includegraphics[width=0.23\textwidth]{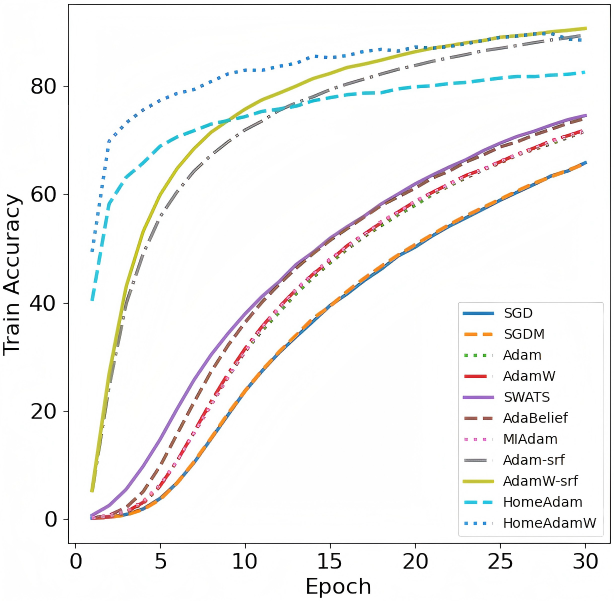}}
  \hfill
  \subfigure{\includegraphics[width=0.23\textwidth]{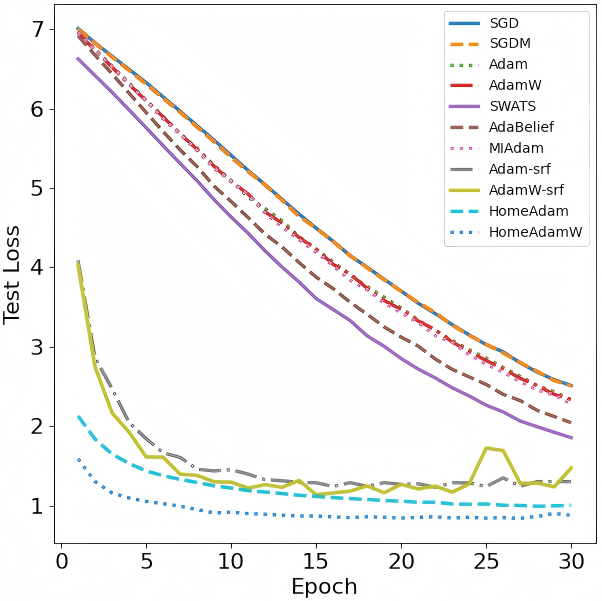}}
  \hfill
  \subfigure{\includegraphics[width=0.23\textwidth]{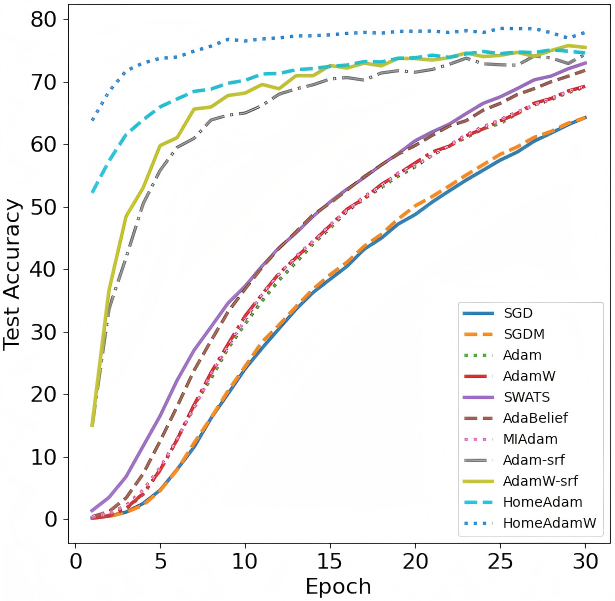}}
  \hfill
\caption{Results of image classification at \emph{tiny-ImageNet} dataset.}
\label{fig:3}
\end{figure*}

\subsection{Convergence Properties of our HomeAdam(W) Algorithms}

\begin{theorem} \label{th:4}
Assume the sequence $\{\theta_t\}_{t=0}^T$ is generated
from Algorithm~\ref{alg:2}. Under the Assumptions~\ref{ass:s2},~\ref{ass:g},~\ref{ass:v},~\ref{ass:f}, and let $0\leq \lambda < \min(\frac{1}{\eta},\frac{1}{\eta T^\gamma \tilde{G}\hat{G}})$, $\|\theta_0\| \leq \eta \tilde{G}$, $c\geq \frac{32L}{\breve{\tau}}$, $\beta_1 = 1-c\eta\in (0,1)$, $\beta_2\in (0,1)$, $0<\eta \leq \frac{\hat{\tau}}{4L}$ and $\tau>0$, we have
\begin{align}
\frac{1}{T+1}\sum_{t=0}^T\mathbb{E}\|\nabla F(\theta_{t})\|
  & \leq \frac{8\sqrt{\Delta}\breve{G}}{\sqrt{{T}\eta\breve{\tau}}}
  + \frac{8\breve{G}}{\breve{\tau}T^{\gamma-1}} \nonumber \\
  & \quad +
  \frac{4\sqrt{2\eta}c\sigma\breve{G}}{\sqrt{L\breve{\tau}}}+ \frac{\breve{G}}{\hat{G}T^{\gamma-1}},
\end{align}
where $\tilde{G}=\max(\frac{ G}{(1-\beta_1)(\tau+(1-\beta_2)\varepsilon)},\frac{G}{1-\beta_1})$, $\hat{G} =\frac{G^2+\varepsilon}{1-\beta_2}$, $\breve{G} = \max(1,\frac{G^2+\varepsilon}{1-\beta_2})$, $\hat{\tau} = (1-\beta_1)(\tau + (1-\beta_2)\varepsilon)$ and $\breve{\tau}=\min(1-\beta_1,\hat{\tau})$.
\end{theorem}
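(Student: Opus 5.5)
The plan follows the template of Theorem~\ref{th:3}, with one change. The adaptive preconditioner $R(\hat v_t)$ now switches between two regimes, and we bound both regimes uniformly. Write the update as $\theta_t=(1-\eta\lambda)\theta_{t-1}-\eta R(\hat v_t)\hat m_t$, where $R(\hat v_t)$ is a diagonal matrix. In the adaptive branch $\min_j(\hat v_t)_j\ge\tau$, and its entries lie in $[1/(\hat G+\varepsilon),\,1/(\tau+\varepsilon)]$, because $\hat v_t\le G^2/(1-\beta_2)$ by Assumption~\ref{ass:g}. In the SGDM branch $R=I_d$. Absorbing the factor $1/(1-\beta_1^t)$ and the $\varepsilon$ terms as in Theorem~\ref{th:3}, the effective stepsize $H_t:=R(\hat v_t)/(1-\beta_1^t)$ satisfies $\breve\tau/\breve G\preceq$ (up to constants) $H_t\preceq 1/\breve\tau$ in both regimes. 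This is where $\breve\tau=\min(1-\beta_1,\hat\tau)$ and $\breve G=\max(1,\hat G)$ come from. Similarly $\|\theta_t\|$ is controlled by induction using $\|\theta_0\|\le\eta\tilde G$, with $\tilde G$ taking the worse of the two branch bounds, so that the weight-decay term $\lambda\theta_{t-1}$ can be treated as a small perturbation.

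The first step is a descent inequality. Using $L$-smoothness (Assumption~\ref{ass:s2}),
\[
F(\theta_t)\le F(\theta_{t-1})-\eta\langle\nabla F(\theta_{t-1}),H_t m_t+\lambda\theta_{t-1}\rangle+\tfrac{L\eta^2}{2}\|H_tm_t+\lambda\theta_{t-1}\|^2 .
\]
Split $m_t=\nabla F(\theta_{t-1})+(m_t-\nabla F(\theta_{t-1}))$ and use the lower and upper bounds on $H_t$ together with Young's inequality. This yields a decrease of order $\eta\frac{\breve\tau}{\breve G}\|\nabla F(\theta_{t-1})\|^2$ and a penalty $\frac{\eta}{\breve\tau}\|m_t-\nabla F(\theta_{t-1})\|^2$. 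It also yields a term $-\frac{c'}{\eta}\|\theta_t-\theta_{t-1}\|^2$ (kept via $\eta\le\hat\tau/(4L)$), and a weight-decay error bounded by $\eta\lambda\|\theta_{t-1}\|\,\|\nabla F\|\lesssim \eta\lambda T^{\gamma}\tilde G\hat G\cdot(\cdot)/T^{\gamma}$. The condition on $\lambda$ turns this last error into the $T^{1-\gamma}$ terms.

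The second step is to control the momentum error. Lemma~\ref{lem:2} applies verbatim to Algorithm~\ref{alg:2}, since the $m_t$ recursion is identical and the proof only uses $\theta_t-\theta_{t-1}$. We then build the Lyapunov function $\Phi_t=F(\theta_t)+\frac{\alpha}{c}\|\nabla F(\theta_{t-1})-m_t\|^2$ with $\alpha\propto 1/\breve\tau$. The condition $c\ge 32L/\breve\tau$ makes the $\frac{2L^2}{c\eta}\|\theta_t-\theta_{t-1}\|^2$ term from Lemma~\ref{lem:2} cancel against the negative movement term, and the $(1-c\eta)$ contraction absorbs the penalty $\frac{\eta}{\breve\tau}\|m_t-\nabla F\|^2$. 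Telescoping over $t$ gives
\[
\frac{1}{T+1}\sum_t\E\|\nabla F(\theta_t)\|^2\lesssim \frac{\breve G}{\breve\tau}\Big(\frac{\Delta}{T\eta}+c^2\eta\sigma^2/L+\text{decay terms}\Big),
\]
where $\Delta$ collects $F(\theta_0)-F^*$ and the initial momentum error. Finally, Jensen's inequality $\frac1{T+1}\sum\E\|\nabla F\|\le(\frac1{T+1}\sum\E\|\nabla F\|^2)^{1/2}$ and $\sqrt{a+b}\le\sqrt a+\sqrt b$ produce the four stated terms.

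The main obstacle is the first step: obtaining a single two-sided spectral bound on $H_t$ valid in both branches. The switching rule is data-dependent, so $H_t$ correlates with $m_t$, and the cross term $\langle\nabla F,H_t(m_t-\nabla F)\rangle$ cannot be handled by taking conditional expectations. I would deal with it deterministically using Young's inequality with the worst-case constants $1/\breve\tau$ and $\breve G$. Doing so is exactly what forces the $\min$/$\max$ in the definitions of $\breve\tau$ and $\breve G$. I would carefully track that the SGDM branch contributes $1-\beta_1$ rather than $\hat\tau$.
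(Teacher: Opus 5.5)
Your approach is sound in outline, but it follows a genuinely different route from the paper's.

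\textbf{How the paper argues.} The paper never extracts $\|\nabla F(\theta_{t-1})\|^2$ from the descent step.
\begin{itemize}
\item In each branch it writes $\theta_t$ as the minimizer of $\langle m_t,\theta\rangle+\frac{1}{2\eta}(\theta-(1-\eta\lambda)\theta_{t-1})^\top H_t^{-1}(\theta-(1-\eta\lambda)\theta_{t-1})$.
\item The optimality condition gives $\langle m_t,\theta_{t-1}-\theta_t\rangle\ge\frac{\kappa}{\eta}\|\theta_t-\theta_{t-1}\|^2+\lambda\langle H_t^{-1}\theta_{t-1},\theta_t-\theta_{t-1}\rangle$ with $\kappa\in\{\hat\tau,1-\beta_1\}$. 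So the descent inequality involves only the movement and the momentum error.
\item With the Lyapunov function $F(\theta_t)+\frac{1}{2L}\|\nabla F(\theta_t)-m_{t+1}\|^2$ and Lemma~\ref{lem:2}, it averages $\frac{1}{\breve\tau^2}\|\nabla F(\theta_{t-1})-m_t\|^2+\frac{1}{\eta^2}\|\theta_t-\theta_{t-1}\|^2$. It then applies Jensen to the corresponding first-order quantity.
\item Only at the end does it recover the gradient, via the deterministic bound $\frac{1}{\breve\tau}\|\nabla F(\theta_{t-1})-m_t\|+\frac{1}{\eta}\|\theta_t-\theta_{t-1}\|\ge\frac{1}{\breve G}\|\nabla F(\theta_{t-1})\|-\lambda\|\theta_{t-1}\|$.
\end{itemize}
That last step dictates the shape of the statement. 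It is why $\breve G$ enters linearly in every term. It is also why weight decay appears additively, outside the square root, as $\frac{\breve G}{\hat G T^{\gamma-1}}$, which is the average of $\breve G\lambda\|\theta_{t-1}\|\le\breve G t/(T^\gamma\hat G)$.

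\textbf{How your route compares.} Your descent-lemma route produces $\|\nabla F\|^2$ directly. The cleanest way to do your Young step is the polarization identity $2a^\top Hb=a^\top Ha+b^\top Hb-(a-b)^\top H(a-b)$. It supplies three things at once:
\begin{itemize}
\item the gradient decrease $-\frac{\eta}{2}\nabla F^\top H_t\nabla F$;
\item a term $-\frac{\eta}{2}m_t^\top H_t m_t\le-\frac{\eta\breve\tau}{2}\|H_tm_t\|^2$ that controls the movement;
\item the penalty $\frac{\eta}{2\breve\tau}\|m_t-\nabla F(\theta_{t-1})\|^2$.
\end{itemize}
It also gives a sharper, $\sqrt{\breve G}$-type dependence in the $\Delta$ and $\sigma$ terms.

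\textbf{Points that need fixing.}

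(i) The lower bound available is $H_t\succeq\breve G^{-1}I$ in both branches: $H_t\succeq I$ for SGDM, and $H_t\succeq\frac{1-\beta_2}{G^2+\varepsilon}I$ in the adaptive branch. Your $\breve\tau/\breve G$ is valid but needlessly lossy.

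(ii) Your final sentence overclaims. In your route the weight-decay errors $\eta\lambda\|\theta_{t-1}\|\|\nabla F(\theta_{t-1})\|$ and $O(\eta\lambda^2\|\theta_{t-1}\|^2)$ sit under the square root, multiplied by $\breve G$ (or by $\breve G/\breve\tau$ with your weaker bound). So $\sqrt{a+b}\le\sqrt a+\sqrt b$ does not return the four stated terms. It returns a differently shaped bound, and you must check term by term that it is dominated by the stated right-hand side. This is immediate for the $\Delta$ and $\sigma$ terms, using $\breve G\ge 1$, $\breve\tau<1$ and $c\breve\tau\ge 32L$. For the weight-decay terms it is not automatic and needs extra case analysis, for instance using $\|\nabla F\|\le G\le\breve G$.

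(iii) Keeping the movement term requires $\eta\lesssim\breve\tau/L$; in the SGDM branch this means $\eta\le(1-\beta_1)/(4L)$. That does not follow from $\eta\le\hat\tau/(4L)$ when $\hat\tau>1-\beta_1$. It follows instead from $c\ge 32L/\breve\tau$ together with $\eta=(1-\beta_1)/c$, which give $\eta\le\breve\tau/(32L)$.

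(iv) Put $\|\nabla F(\theta_t)-m_{t+1}\|^2$ in $\Phi_t$, not $\|\nabla F(\theta_{t-1})-m_t\|^2$. Then the indices in Lemma~\ref{lem:2} line up with the descent step at time $t$, and $\Phi_0$ is well defined.
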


\begin{remark}
Let $\eta=\frac{1}{\sqrt{T}}$ and $\gamma=\frac{3}{4}$, we have
\begin{align}
  \frac{1}{T}\sum_{t=0}^T\mathbb{E}\|\nabla F(\theta_{t})\| & \leq \breve{G}\big(\frac{8\sqrt{\Delta}}{\sqrt{\breve{\tau}}T^{1/4}} + \frac{8}{\breve{\tau}T^{1/4}} \nonumber \\
  &\quad  +
  \frac{4\sqrt{2}c\sigma}{\sqrt{L\breve{\tau}}T^{1/4}}+ \frac{1}{\hat{G}T^{1/4}}\big).
\end{align}
Set $G=O(1)$, $L=O(1)$, $\tau=O(1)$ and $c=O(1)$,
since $\eta=\frac{1}{\sqrt{T}}$, we have $\beta_1=1-c\eta=O(1)$, $\hat{\tau} = (1-\beta_1)(\tau + (1-\beta_2)\varepsilon)=O(1)$ and $\breve{\tau}=\min(1-\beta_1,\hat{\tau})=\min(1-\beta_1,(1-\beta_1)(\tau + (1-\beta_2)\varepsilon))=O(1)$.
Then we have
\begin{align}
  \frac{1}{T+1}\sum_{t=0}^T\mathbb{E}\|\nabla F(\theta_{t})\| \leq O(\frac{1}{T^{1/4}}).
\end{align}
Here our HomeAadm(W) algorithms have a faster convergence rate of $O(\frac{1}{T^{1/4}})$ than $O(\frac{\breve{\rho}^{-1}}{T^{1/4}})$ of
the Aadm(W)-srf algorithms, since $\breve{\rho}$ is generally very small.

\end{remark}

\begin{figure*}[ht]
\centering
 \subfigure{\includegraphics[width=0.23\textwidth]{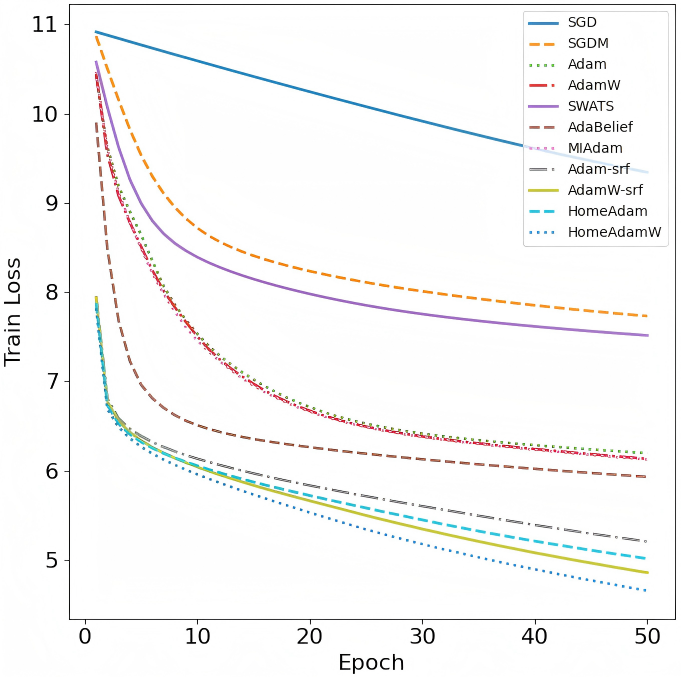}}
  \hfill
 \subfigure{\includegraphics[width=0.23\textwidth]{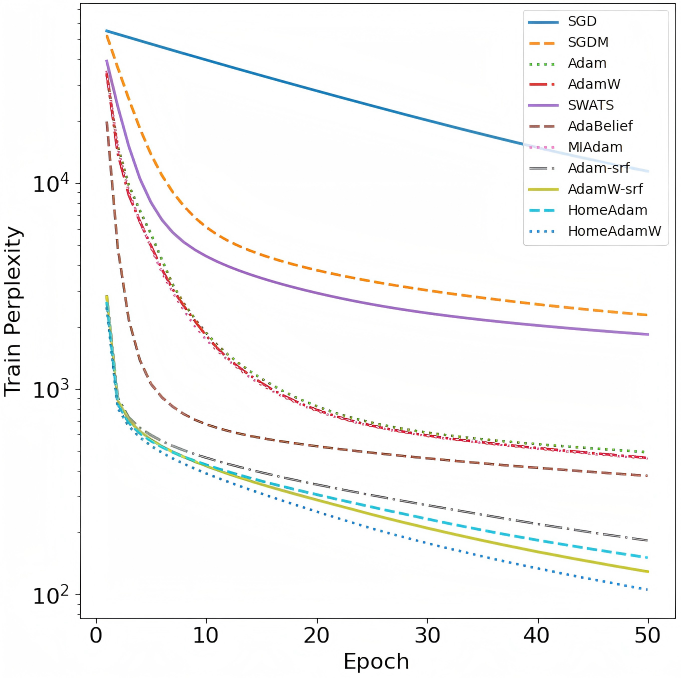}}
  \hfill
  \subfigure{\includegraphics[width=0.23\textwidth]{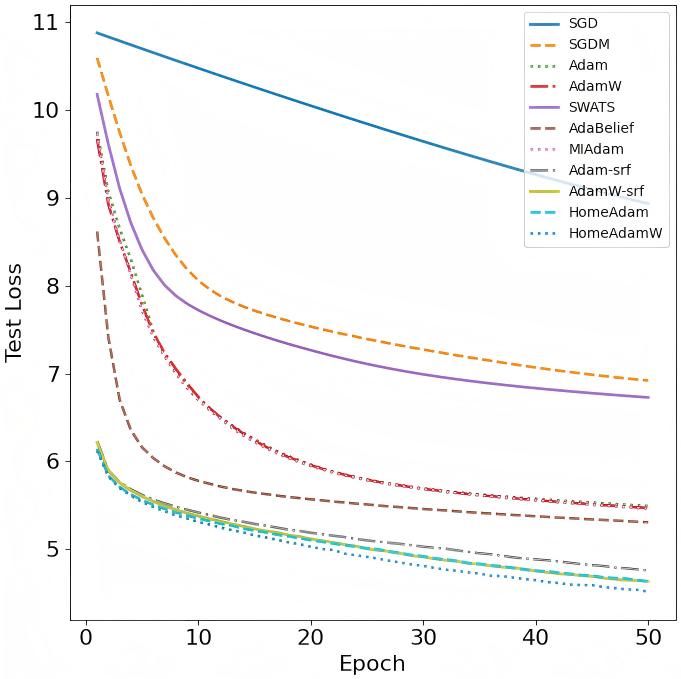}}
  \hfill
  \subfigure{\includegraphics[width=0.23\textwidth]{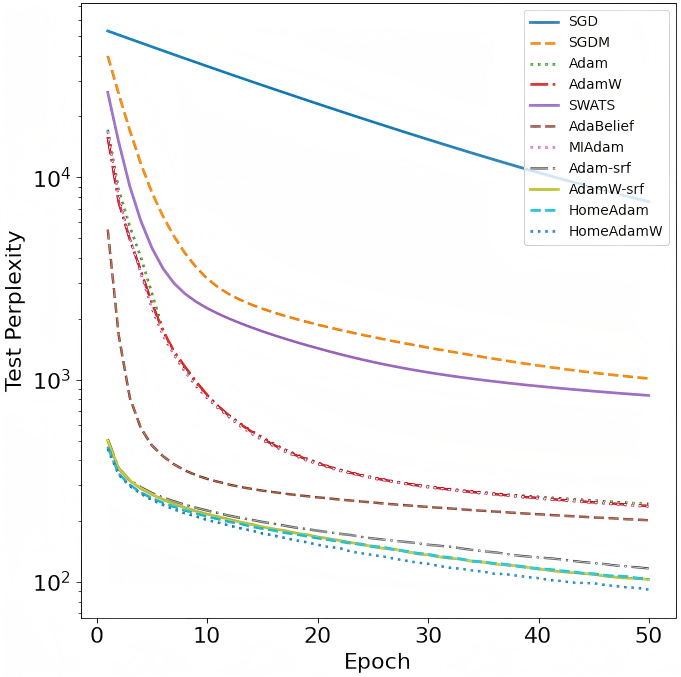}}
  \hfill
\caption{Results of language modeling task at \emph{Wikitext-2} dataset.}
\label{fig:4}
\end{figure*}

\begin{figure*}[ht]
\centering
 \subfigure{\includegraphics[width=0.23\textwidth]{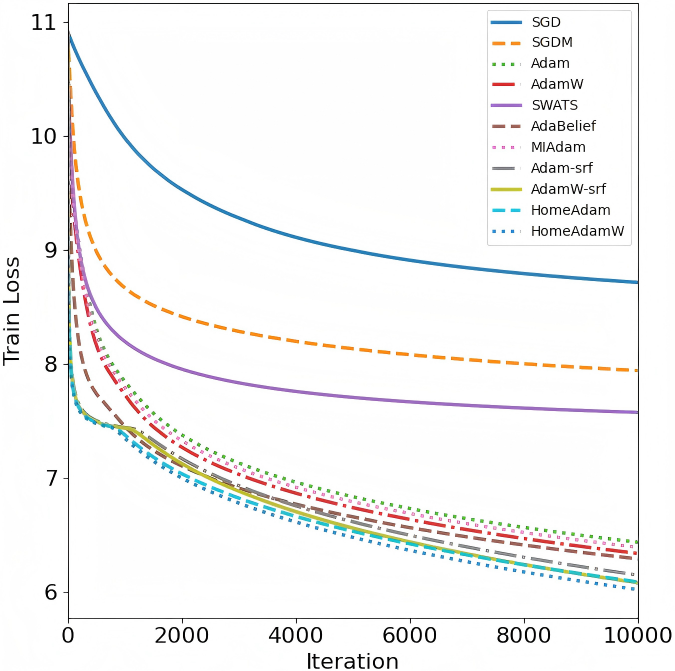}}
  \hfill
 \subfigure{\includegraphics[width=0.23\textwidth]{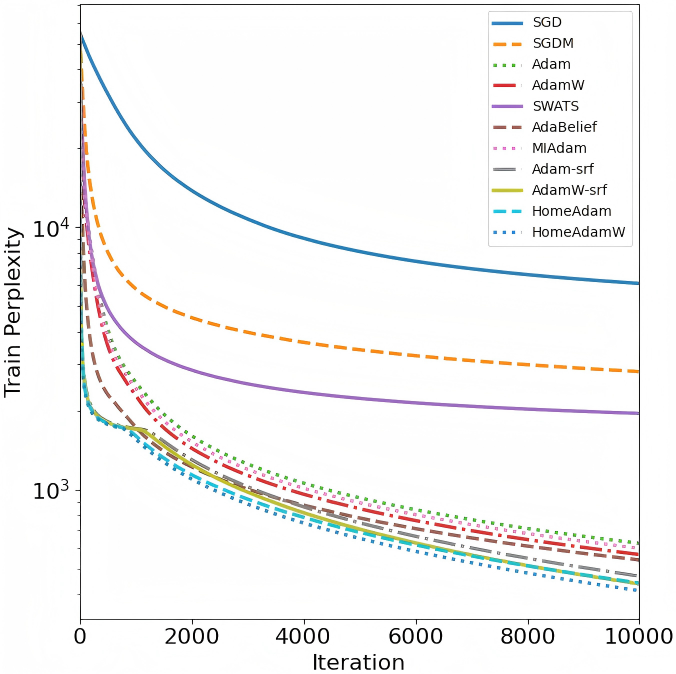}}
  \hfill
  \subfigure{\includegraphics[width=0.23\textwidth]{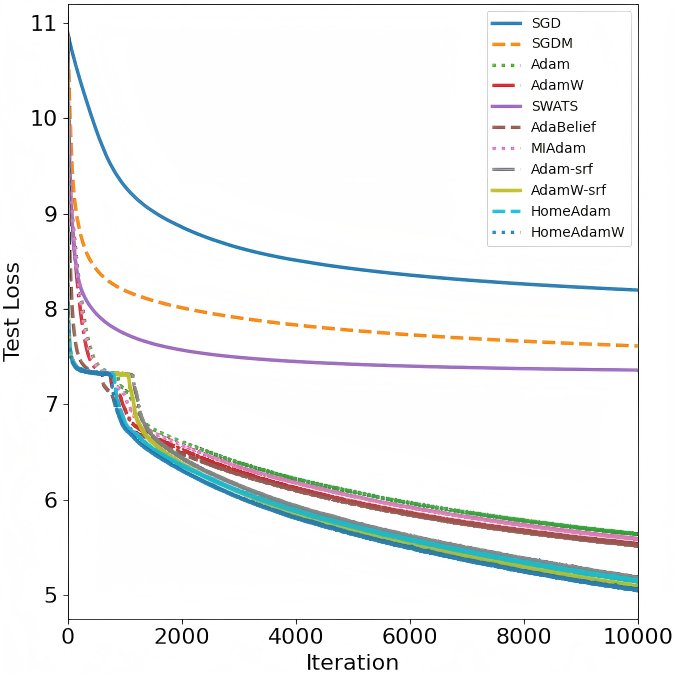}}
  \hfill
  \subfigure{\includegraphics[width=0.23\textwidth]{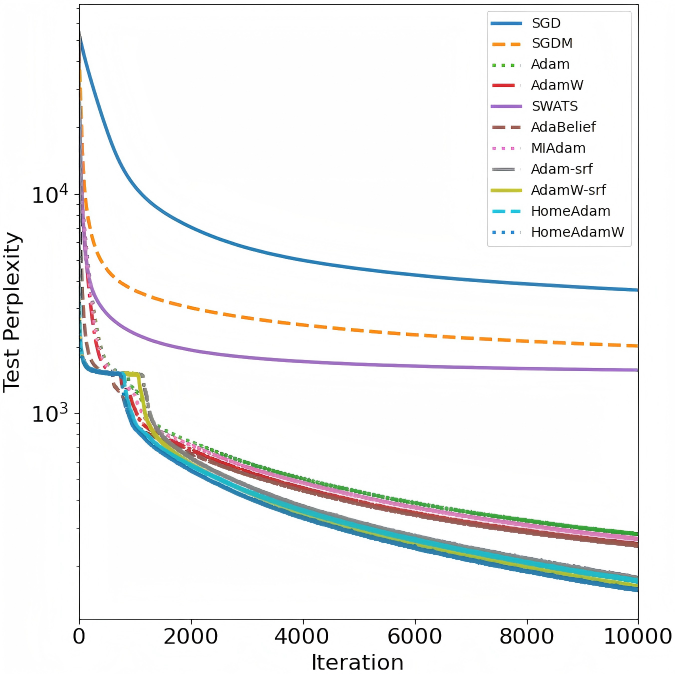}}
  \hfill
\caption{Results of language modeling task at \emph{Wikitext-103} dataset.}
\label{fig:5}
\end{figure*}
\vspace*{-8pt}
\section{Numerical Experiments}
In the section, we conduct some numerical experiments on Computer Vision (CV) and
 Natural Language Processing (NLP) tasks to
demonstrate effectiveness of our algorithms.
In the experiments, we compare our Adam(W)-srf and HomeAdam(W) methods with
some typical methods such as SGD, SGDM, Adam~\cite{kingma2014adam}, AdamW~~\cite{loshchilov2017decoupled}, SWATS~\citep{keskar2017improving}, AdaBelief~\cite{zhuang2020adabelief} and MIAdam~\citep{jin2025method}.
All experiments are run over a machine with a 24 vCPU 13th Gen Intel(R) Core(TM) i9-13900KF CPU
and 2 Nvidia RTX 4090 GPU.
\vspace*{-8pt}
\subsection{CV Task}
In this experiment, we implement image classification task on the CIFAR-10~\citep{krizhevsky2009learning} and
Tiny-ImageNet~\citep{le2015tiny} datasets. Specifically, we train the VGG16~\citep{simonyan2014very} and
ResNet34~\citep{he2016deep} models on the CIFAR-10 and Tiny-ImageNet datasets, respectively.
The hyper-parameters used in all algorithms are provided in the Appendix~\ref{ap:ex}.

Figures~\ref{fig:2} and~\ref{fig:3} show the training loss, training accuracy, test loss and test accuracy via  \emph{epoch} at image classification task, where the iteration number of each epoch equals training sample size. From these results, our Adam(W)-srf and HomeAdam(W) algorithms outperform other comparisons, which verifies effectiveness of square-root-free in Adam(W). Meanwhile, our HomeAdam(W) algorithms have higher test accuracy than our Adam(W)-srf algorithms, which verifies better generalization of our HomeAdam(W) algorithms that
also has been supported in the above generalization results. In addition, our HomeAdamW algorithm has higher test accuracy than our HomeAdam, which verifies better generalization of our HomeAdamW algorithm by using weight decay
that also has a theoretical support in Remark~\ref{re:2}.

\vspace*{-8pt}
\subsection{NLP Task}
In this experiment, we conduct language modeling task on the WikiText2~\citep{merity2016pointer} and WikiText-103~\citep{merity2016pointer} datasets. Specifically, we train the 8-layer Transformer~\cite{vaswani2017attention} and
24-layer Transformer models on the WikiText2 and WikiText-103 datasets, respectively.
The 8-layer Transformer and 24-layer Transformer models are described in the Appendix~\ref{ap:ex}.
The hyper-parameters used in all algorithms also are provided in the Appendix~\ref{ap:ex}.

Figures~\ref{fig:4} and~\ref{fig:5} provide training loss, training perplexity, test loss and test perplexity
via epoch or iteration number at language modeling task. From these results, our Adam(W)-srf and HomeAdam(W) algorithms outperform other comparisons, which demonstrates effectiveness of square-root-free in Adam(W). Meanwhile, our HomeAdam(W) algorithms have smaller test perplexity than our Adam(W)-srf algorithms,
which further verifies better generalization of our HomeAdam(W) algorithms that
also has been supported in the above generalization results.
In addition, our HomeAdamW algorithm has smaller test perplexity than
our HomeAdam, which verifies better generalization of our HomeAdamW algorithm by using weight decay
that also has a theoretical support in Remark~\ref{re:2}.
\vspace*{-8pt}
\section{Conclusion}
In the paper, we restudied generalization properties of the popular Adam and AdamW algorithms.
We first introduced a class of square-root-free Adam (i.e., Adam(W)-srf) algorithms, and proved that our Adam(W)-srf algorithms have a generalization error of $O(\frac{\hat{\rho}^{-2T}}{N})$.
To improve its generalization, we further designed a class of
efficient Adam (i.e., HomeAdam(W)) algorithms by sometimes returning momentum-based SGD.
Moreover, we proved that
our HomeAdam(W) methods have a smaller generalization error of $O(\frac{1}{N})$ than
the existing $O(\frac{1}{\sqrt{N}})$ of the Adam and AdamW algorithms.
From our generalization analysis, we also find that our HomeAdamW has a better
generalization than our HomeAdam, due to using weight decay in our HomeAdamW.
In the Appendix~\ref{ap:ew}, we also provide an element-wise variant of our HomeAdam(W) algorithms,
which is more suitable for training deep learning models due to matching the back-propagation framework.

% \section*{Accessibility}

% Authors are kindly asked to make their submissions as accessible as possible
% for everyone including people with disabilities and sensory or neurological
% differences. Tips of how to achieve this and what to pay attention to will be
% provided on the conference website \url{http://icml.cc/}.

% \section*{Software and Data}

% If a paper is accepted, we strongly encourage the publication of software and
% data with the camera-ready version of the paper whenever appropriate. This can
% be done by including a URL in the camera-ready copy. However, \textbf{do not}
% include URLs that reveal your institution or identity in your submission for
% review. Instead, provide an anonymous URL or upload the material as
% ``Supplementary Material'' into the OpenReview reviewing system. Note that
% reviewers are not required to look at this material when writing their review.

% % Acknowledgements should only appear in the accepted version.
% \section*{Acknowledgements}

% \textbf{Do not} include acknowledgements in the initial version of the paper
% submitted for blind review.

% If a paper is accepted, the final camera-ready version can (and usually should)
% include acknowledgements.  Such acknowledgements should be placed at the end of
% the section, in an unnumbered section that does not count towards the paper
% page limit. Typically, this will include thanks to reviewers who gave useful
% comments, to colleagues who contributed to the ideas, and to funding agencies
% and corporate sponsors that provided financial support.
\vspace*{-8pt}
\section*{Impact Statement}
This paper presents work whose goal is to advance the field of Machine
Learning. There are many potential societal consequences of our work, none
which we feel must be specifically highlighted here.

% In the unusual situation where you want a paper to appear in the
% references without citing it in the main text, use \nocite
\nocite{langley00}

\bibliography{HomeAdam}
\bibliographystyle{icml2026}

%%%%%%%%%%%%%%%%%%%%%%%%%%%%%%%%%%%%%%%%%%%%%%%%%%%%%%%%%%%%%%%%%%%%%%%%%%%%%%%
%%%%%%%%%%%%%%%%%%%%%%%%%%%%%%%%%%%%%%%%%%%%%%%%%%%%%%%%%%%%%%%%%%%%%%%%%%%%%%%
% APPENDIX
%%%%%%%%%%%%%%%%%%%%%%%%%%%%%%%%%%%%%%%%%%%%%%%%%%%%%%%%%%%%%%%%%%%%%%%%%%%%%%%
%%%%%%%%%%%%%%%%%%%%%%%%%%%%%%%%%%%%%%%%%%%%%%%%%%%%%%%%%%%%%%%%%%%%%%%%%%%%%%%
\newpage
\appendix
\onecolumn

\section{Generalization Analysis of Adam(W)-srf Algorithms}
\label{ap:ga1}
In this section, we provide the detailed generalization analysis of our Adam(W)-srf algorithm.

\begin{lemma} \label{lem:A1} (Restatement of Lemma~\ref{lem:1})
Assume the sequences $\{\hat{m}_t,\hat{v}_t,m_t,v_t\}_{t=1}^T$ and $\{\hat{m}_t^{(i)},\hat{v}_t^{(i)},m_t^{(i)},v_t^{(i)}\}_{t=1}^T$ are generated from Algorithm~\ref{alg:1} based on the dataset $S$ and
$S^{(i)}$, respectively, we have
\begin{align}
\big\|\frac{\hat{m}_t}{\hat{v}_t + \varepsilon} -  \frac{\hat{m}_t^{(i)}}{\hat{v}_t^{(i)} + \varepsilon}\big\|  \leq
 \frac{\sqrt{d}}{(1-\beta_1^t)(\rho_t+\varepsilon)}\big\|m_t - m_t^{(i)}\big\| + \frac{G\sqrt{d}}{(1-\beta_1^t)(1-\beta_2^t)(\rho_t+\varepsilon)^2}\big\| v_t -v_t^{(i)}\big\|,
\end{align}
where $\rho_t = \min_{j\in [d]}(\hat{v}_t)_j$.
\end{lemma}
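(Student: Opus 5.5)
The plan is to reduce the vector inequality to a coordinate-wise estimate using the elementary add-and-subtract identity for quotients, and then convert back to $\ell_2$ norms. Fix $j\in[d]$ and write $a_j=(\hat m_t)_j$, $a_j'=(\hat m_t^{(i)})_j$, $b_j=(\hat v_t)_j+\varepsilon$, $b_j'=(\hat v_t^{(i)})_j+\varepsilon$. Then
\begin{align}
\frac{a_j}{b_j}-\frac{a_j'}{b_j'} = \frac{a_j-a_j'}{b_j} + a_j'\,\frac{b_j'-b_j}{b_j b_j'} . \nonumber
\end{align}
Applying the triangle inequality to the vector of these quantities splits the left-hand side of Lemma~\ref{lem:A1} into a first-moment term and a second-moment term, which I will bound separately.

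\emph{First term.} Since $b_j\geq \rho_t+\varepsilon$ for every $j$, we get $\|(\hat m_t-\hat m_t^{(i)})/(\hat v_t+\varepsilon)\|\leq \|\hat m_t-\hat m_t^{(i)}\|/(\rho_t+\varepsilon)$. By the bias correction, $\hat m_t-\hat m_t^{(i)}=(m_t-m_t^{(i)})/(1-\beta_1^t)$. The factor $\sqrt d\geq 1$ in the statement is slack; it appears if one passes through $\|\cdot\|_\infty$ and back via $\|x\|\leq\sqrt d\|x\|_\infty$.

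\emph{Second term.} I need two facts.
\begin{itemize}
\item[(i)] A bound on $|a_j'|$. Unrolling $m_t=(1-\beta_1)\sum_{s\le t}\beta_1^{t-s}g_s$ and using $\|g_s\|\le G$ (from Assumption~\ref{ass:g}) gives $\|m_t^{(i)}\|_\infty\le (1-\beta_1^t)G$. Hence $|a_j'|\le G$, and a fortiori $|a_j'|\le G/(1-\beta_1^t)$, which matches the form in the statement.
\item[(ii)] A lower bound on $b_jb_j'$. Here I will use $b_j b_j'\geq(\rho_t+\varepsilon)^2$.
\end{itemize}
Together with $|b_j-b_j'|=|(v_t)_j-(v_t^{(i)})_j|/(1-\beta_2^t)$, this gives the coordinatewise bound
\[
\frac{G}{(1-\beta_1^t)(1-\beta_2^t)(\rho_t+\varepsilon)^2}\,\bigl|(v_t-v_t^{(i)})_j\bigr| .
\]
Taking the $\ell_2$ norm over $j$ (again with $\sqrt d$ to spare) yields the second term.

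The main obstacle is step (ii). The denominator $b_j'$ comes from the perturbed run on $S^{(i)}$, while $\rho_t$ is defined only through $\hat v_t$. I would handle this by interpreting $\rho_t$ (as Theorem~\ref{th:1} effectively does, with $\rho$ a global minimum over iterates) as a lower bound valid for both runs, i.e.\ $\rho_t\le\min_j\min\bigl((\hat v_t)_j,(\hat v_t^{(i)})_j\bigr)$. I would state this explicitly as the convention under which the lemma is applied.

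If one insists on $\rho_t$ from $S$ alone, a fallback is the symmetric decomposition with $b_j'$ in the first denominator, which still needs a lower bound on both runs. So the clean route is to make this convention explicit, after which everything else is routine.
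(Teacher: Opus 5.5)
Your proposal is correct and is essentially the paper's proof. It uses the same per-coordinate add-and-subtract split and the same bound on $|(\hat m_t^{(i)})_j|$; your bound $G$ is in fact slightly sharper than the paper's $G/(1-\beta_1^t)$. It also uses the same $(\rho_t+\varepsilon)^2$ lower bound on the product of denominators, and here the paper simply asserts that $\rho_t$ is also $\min_j(\hat v_t^{(i)})_j$. So your explicit two-run convention is exactly the hypothesis the paper's argument silently relies on, and it cannot be dropped: for $d=t=1$, $g_1=G$, $g_1^{(i)}=\sqrt{\varepsilon}$, the left side is about $1/(2\sqrt{\varepsilon})$ while the right side stays bounded as $\varepsilon\to 0$.

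The only other difference is in the final norm step. The paper sums the coordinate bounds into an $\ell_1$ inequality and picks up the factor $\sqrt d$ through $\|x\|\le\|x\|_1\le\sqrt d\|x\|$. Your coordinatewise domination gives the $\ell_2$ bound directly, which confirms that the $\sqrt d$ is slack.
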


\begin{proof}
From Algorithm~\ref{alg:1}, since $S, S^{(i)}\sim \mathcal{D}$, we have $m_{t}= \beta_{1}m_{t-1}+ (1-\beta_{1})g_t$, $v_{t}= \beta_{2}v_{t-1}+ (1-\beta_{2})g_t^2$, $m_{t}^{(i)}= \beta_{1}m_{t-1}^{(i)}+ (1-\beta_{1})g_t^{(i)}$ and $v_{t}^{(i)}= \beta_{2}v_{t-1}^{(i)}+ (1-\beta_{2})(g_t^{(i)})^2$. Meanwhile, we have $\hat{m}_{t}=\frac{m_{t}}{1-\beta_1^t}$, $\hat{v}_{t}=\frac{v_{t}}{1-\beta_2^t}$, $\hat{m}_{t}^{(i)}=\frac{m_{t}^{(i)}}{1-\beta_1^t}$ and $\hat{v}_{t}^{(i)}=\frac{v_{t}^{(i)}}{1-\beta_2^t}$.

Let $(\hat{m}_t)_j$, $(\hat{v}_t)_j$, $(\hat{m}_t^{(i)})_j$ and $(\hat{v}_t^{(i)})_j$ denote the $j$-th element of vectors $\hat{m}_{t}$, $\hat{v}_t$,  $\hat{m}_{t}^{(i)}$ and $\hat{v}_{t}^{(i)}$, respectively.

Since $\rho_t=\min_{j\in [d]}(\hat{v}_t)_j$, where $\hat{v}_t$ be generated from Algorithm~\ref{alg:1} for any dataset $S\sim \mathcal{D}$, we have $\rho_t=\min_{j\in [d]}(\hat{v}_t^{(i)})_j$, where $\hat{v}_t^{(i)}$ be generated from Algorithm~\ref{alg:1} based on dataset $S^{(i)}\sim \mathcal{D}$. Since $m_t^{(i)}$ is exponential moving average of $g_t^{(i)}$, by using Assumption~\ref{ass:g}, we have $\|m_t^{(i)}\|\leq G$.
Then we have
\begin{align}
 & \big| \frac{(\hat{m}_t)_j}{(\hat{v}_t)_j + \varepsilon} -  \frac{(\hat{m}_t^{(i)})_j}{(\hat{v}_t^{(i)})_j + \varepsilon}\big| \nonumber \\
 & = \frac{1}{((\hat{v}_t)_j + \varepsilon)((\hat{v}_t^{(i)})_j + \varepsilon)}\big|((\hat{v}_t^{(i)})_j + \varepsilon)(\hat{m}_t)_j -  (\hat{m}_t^{(i)})_j((\hat{v}_t)_j + \varepsilon)\big| \nonumber \\
 & \leq \frac{1}{((\hat{v}_t)_j + \varepsilon)((\hat{v}_t^{(i)})_j + \varepsilon)}\Big(\big|((\hat{v}_t^{(i)})_j + \varepsilon)(\hat{m}_t)_j - (\hat{m}_t^{(i)})_j((\hat{v}_t^{(i)})_j +\varepsilon) \big| \nonumber \\
 & \qquad+\big|(\hat{m}_t^{(i)})_j((\hat{v}_t^{(i)})_j + \varepsilon) -  (\hat{m}_t^{(i)})_j((\hat{v}_t)_j + \varepsilon)\big| \Big)\nonumber \\
 & = \frac{1}{(\hat{v}_t)_j + \varepsilon}\big|(\hat{m}_t)_j - (\hat{m}_t^{(i)})_j\big| + \frac{(\hat{m}_t^{(i)})_j}{((\hat{v}_t)_j + \varepsilon)((\hat{v}_t^{(i)})_j + \varepsilon)}\big|(\hat{v}_t^{(i)})_j - (\hat{v}_t)_j\big| \nonumber \\
 & \leq \frac{1}{\rho_t+\varepsilon}\big|(\hat{m}_t)_j - (\hat{m}_t^{(i)})_j\big| + \frac{G}{(1-\beta_1^t)(\rho_t+\varepsilon)^2}\big|(\hat{v}_t^{(i)})_j - (\hat{v}_t)_j\big| \nonumber \\
 & = \frac{1}{(1-\beta_1^t)(\rho_t+\varepsilon)}\big|(m_t)_j - (m_t^{(i)})_j\big| + \frac{G}{(1-\beta_1^t)(1-\beta_2^t)(\rho_t+\varepsilon)^2}\big|(v_t^{(i)})_j - (v_t)_j\big|,
\end{align}
where the last inequality is due to $(\hat{m}_t^{(i)})_j\leq \|\hat{m}_t^{(i)}\|\leq \frac{G}{1-\beta_1^t}$.

Thus we can obtain
\begin{align}
 \big\|\frac{\hat{m}_t}{\hat{v}_t + \varepsilon} -  \frac{\hat{m}_t^{(i)}}{\hat{v}_t^{(i)} + \varepsilon}\big\|_1 \leq
 \frac{1}{(\rho_t+\varepsilon)(1-\beta_1^t)}\big\|m_t - m_t^{(i)}\big\|_1 + \frac{G}{(1-\beta_1^t)(1-\beta_2^t)(\rho_t+\varepsilon)^2}\big\| v_t -v_t^{(i)}\big\|_1.
\end{align}
Since $\|\cdot\| \leq \|\cdot\|_1 \leq \sqrt{d}\|\cdot\|$, we have
\begin{align}
\big\|\frac{\hat{m}_t}{\hat{v}_t + \varepsilon} -  \frac{\hat{m}_t^{(i)}}{\hat{v}_t^{(i)} + \varepsilon}\big\| & \leq \big\|\frac{\hat{m}_t}{\hat{v}_t + \varepsilon} -  \frac{\hat{m}_t^{(i)}}{\hat{v}_t^{(i)} + \varepsilon}\big\|_1 \nonumber \\
 &\leq
 \frac{1}{(\rho_t+\varepsilon)(1-\beta_1^t)}\big\|m_t - m_t^{(i)}\big\|_1 + \frac{G}{(1-\beta_1^t)(1-\beta_2^t)(\rho_t+\varepsilon)^2}\big\| v_t -v_t^{(i)}\big\|_1 \nonumber \\
 & \leq
 \frac{\sqrt{d}}{(\rho_t+\varepsilon)(1-\beta_1^t)}\big\|m_t - m_t^{(i)}\big\| + \frac{G\sqrt{d}}{(1-\beta_1^t)(1-\beta_2^t)(\rho_t+\varepsilon)^2}\big\| v_t -v_t^{(i)}\big\|.
\end{align}

\end{proof}

\begin{theorem} \label{th:A1} (Restatement of Theorem~\ref{th:1})
Assume the sequence $\{\theta_t,\hat{v}_t\}_{t=1}^T$ is generated from Algorithm~\ref{alg:1} on dataset $S=\{z_1,z_2,\cdots,z_N\}$. Under the Assumptions~\ref{ass:s1},~\ref{ass:g},~\ref{ass:v}, let $\eta=O(\frac{1}{\sqrt{d}})$, $\lambda\in [0,\frac{1}{\eta})$, $\beta_1=O(1)$ with $\beta_1\in (0,1)$, $\beta_2=O(1)$ with $\beta_2\in (0,1)$, $\sigma=O(1)$, $G=O(1)$ and
$L=O(1)$, we have
\begin{align}
 |\E [ F(\theta_T) - F_S(\theta_T)]| \leq O(\frac{1}{(\rho+\varepsilon)^{2T}N}),
\end{align}
where $\rho = \min_{t\geq 1} \min_{j\in [d]}(\hat{v}_t)_j$.
\end{theorem}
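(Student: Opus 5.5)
We follow the Hardt et al. stability template: bound $\E\|\theta_T-\theta_T^{(i)}\|$ by $\varphi_T/N$, then invoke Assumption~\ref{ass:g} and Lemma~\ref{lem:gs}. Run Algorithm~\ref{alg:1} on $S$ and $S^{(i)}$ with the same sampling randomness and the same initialization $\theta_0$. At each step the sampled index equals $i$ with probability $1/N$. On that event the gradients come from different samples, so $\|g_t-g_t^{(i)}\|$ is bounded by a constant (we use $2\sigma$ via Assumption~\ref{ass:v}, or $2G$). Otherwise the gradients come from the same sample, and Assumption~\ref{ass:s1} gives $\|g_t-g_t^{(i)}\|\le L\|\theta_{t-1}-\theta_{t-1}^{(i)}\|$. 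Taking expectations yields
\begin{align}
\E\|g_{t+1}-g_{t+1}^{(i)}\|\le \frac{2\sigma}{N}+L\,\E\|\theta_t-\theta_t^{(i)}\|. \nonumber
\end{align}
For the squared gradients, $|(g_t)_j^2-(g^{(i)}_t)_j^2|\le 2G|(g_t)_j-(g_t^{(i)})_j|$, which gives the analogous bound with an extra factor $2G$.

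We then set up three coupled sequences $\varphi_t,\phi_t,\psi_t$ with $\E\|\theta_t-\theta_t^{(i)}\|\le\varphi_t/N$, $\E\|m_t-m_t^{(i)}\|\le\phi_t/N$ and $\E\|v_t-v_t^{(i)}\|\le\psi_t/N$, and prove all three by induction on $t$. The base case holds with $\varphi_0=\phi_0=\psi_0=0$. In the inductive step, the moment recursions give
\begin{align}
\phi_{t+1}&=\beta_1\phi_t+2(1-\beta_1)\sigma+(1-\beta_1)L\varphi_t, \nonumber\\
\psi_{t+1}&=\beta_2\psi_t+4(1-\beta_2)G\sigma+2(1-\beta_2)GL\varphi_t. \nonumber
\end{align}
The parameter update is $\theta_{t+1}=(1-\eta\lambda)\theta_t-\eta\,\hat m_{t+1}/(\hat v_{t+1}+\varepsilon)$. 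Since $\lambda<1/\eta$, the factor $1-\eta\lambda$ lies in $(0,1]$. Applying Lemma~\ref{lem:1} with $\rho_t\ge\rho$ to the adaptive term gives exactly the $\varphi_{t+1}$ recursion stated in the remark after Theorem~\ref{th:1}.

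Next we unroll the recursions. With $\eta=O(1/\sqrt d)$, the prefactors $\eta\sqrt d$ are $O(1)$, and $1-\beta_1^{t+1}\ge 1-\beta_1$ and $1-\beta_2^{t+1}\ge1-\beta_2$ are constants. Assume w.l.o.g. $\hat\rho=\rho+\varepsilon\le1$; otherwise the bound is only easier. Then $\hat\rho^{-1}\le\hat\rho^{-2}$, and
\begin{align}
\varphi_{t+1}\le \varphi_t+\frac{C_1\phi_{t+1}+C_2\psi_{t+1}}{\hat\rho^{2}}. \nonumber
\end{align}
Substituting the $\phi,\psi$ recursions, the vector $(\varphi_t,\phi_t,\psi_t)$ obeys a linear recursion whose growth factor per step is at most $C\hat\rho^{-2}$, plus additive constants. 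We aim to show inductively that $\varphi_t,\phi_t,\psi_t\le K\hat\rho^{-2t}$ for a constant $K$ depending only on $\beta_1,\beta_2,G,L,\sigma$, so $\varphi_T=O(\hat\rho^{-2T})$. Finally, $|\E[f(\theta_T;z)-f(\theta_T^{(i)};z)]|\le G\varphi_T/N$ by Assumption~\ref{ass:g}, and Lemma~\ref{lem:gs} finishes.

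The main obstacle is this last bookkeeping step. The accumulated constant $C$ per iteration would naively give $(C\hat\rho^{-2})^T$, i.e. an extra $C^T$ that is not $O(1)$. We first try to absorb $C$ by using the smallness of $\eta$ (choosing the constant in $\eta=O(1/\sqrt d)$ small enough that $C\le1$), so that the per-step multiplier is at most $\hat\rho^{-2}$. If that fails, we accept the weaker reading that $O(\cdot)$ hides $T$-independent factors only after the hypotheses on constants. A secondary subtlety is Lemma~\ref{lem:1}'s use of $\rho_t$ for both runs. We simply take $\rho$ as the minimum over both trajectories, which the statement's definition permits.
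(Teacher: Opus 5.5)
Your proposal follows the paper's proof step for step: the same coupled runs with a case split on whether the sampled index equals $i$, the identical recursions for $\phi_{t+1}$, $\psi_{t+1}$ and (via Lemma~\ref{lem:1} with $\rho_t\ge\rho$) $\varphi_{t+1}$, induction on $t$, then Assumption~\ref{ass:g} with Lemma~\ref{lem:gs}; the paper closes the last step by per-step $O(\cdot)$ bookkeeping (your ``weaker reading''), so the $C^T$ factor you flag is silently absorbed there too. If you want the sharper version, $C\le 1$ is unattainable as you set it up, because with $\lambda=0$ the carry-over term $\varphi_t$ already contributes a factor $1$, and bounding $1\le\hat\rho^{-2}$ throws away the slack; instead keep the hypothesis in the form $\varphi_t\le A\hat\rho^{2}\cdot\hat\rho^{-2(t+1)}$ and use separate constants for $\phi_t,\psi_t$, which closes the induction when $\hat\rho^{2}+c\,\eta\sqrt{d}\le 1$ for a suitable constant $c$, so $\hat\rho$ must be bounded away from $1$ and your ``w.l.o.g.\ $\hat\rho\le1$, otherwise easier'' is backwards (for $\hat\rho\ge1$ and $\lambda=0$ the target $\hat\rho^{-2T}$ does not grow while $\varphi_t$ does).
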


\begin{proof}
Implementing Algorithm~\ref{alg:1} on datasets $S$ and $S^{(i)}$ with
 the same random index sequence $\{j_t\}_{t=1}^T$, and let $\{\theta_t\}_{t=1}^T$ and $\{\theta_t^{(i)}\}_{t=1}^T$ be generated from Algorithm~\ref{alg:1} with $S$ and $S^{(i)}$.

 From Algorithm~\ref{alg:1}, $\theta_{t} = \theta_{t-1} - \eta (\frac{\hat{m}_t}{\hat{v}_t+\varepsilon}+\lambda \theta_{t-1})$ and $\theta_{t}^{(i)} = \theta_{t-1}^{(i)} - \eta (\frac{\hat{m}_t^{(i)}}{\hat{v}_t^{(i)}+\varepsilon}+\lambda\theta_{t-1}^{(i)})$, we have
\begin{align}
 \theta_{t} - \theta_{t}^{(i)}  = (1-\eta\lambda)(\theta_{t-1} -\theta_{t-1}^{(i)}) - \eta \big(\frac{\hat{m}_t}{\hat{v}_t+\varepsilon} -
 \frac{\hat{m}_t^{(i)}}{\hat{v}_t^{(i)}+\varepsilon}\big).
\end{align}
Then we have
 \begin{align} \label{eq:re}
 \|\theta_{t} - \theta_{t}^{(i)}\| & =\|(1-\eta\lambda)(\theta_{t-1} -\theta_{t-1}^{(i)}) - \eta \big(\frac{\hat{m}_t}{\hat{v}_t+\varepsilon} -
 \frac{\hat{m}_t^{(i)}}{\hat{v}_t^{(i)}+\varepsilon}\big)\| \nonumber \\
 & \leq (1-\eta\lambda)\|\theta_{t-1} -\theta_{t-1}^{(i)}\| + \eta \|\frac{\hat{m}_t}{\hat{v}_t+\varepsilon} -
 \frac{\hat{m}_t^{(i)}}{\hat{v}_t^{(i)}+\varepsilon}\| \nonumber \\
 & \leq (1-\eta\lambda)\|\theta_{t-1} -\theta_{t-1}^{(i)}\| +\frac{\eta\sqrt{d}}{(1-\beta_1^t)(\rho_t+\varepsilon)}\big\|m_t - m_t^{(i)}\big\| \nonumber \\
 &\quad + \frac{\eta G\sqrt{d}}{(1-\beta_1^t)(1-\beta_2^t)(\rho_t+\varepsilon)^2}\big\| v_t -v_t^{(i)}\big\|
 \nonumber \\
 & \leq (1-\eta\lambda)\|\theta_{t-1} -\theta_{t-1}^{(i)}\| +\frac{\eta\sqrt{d}}{(1-\beta_1^t)(\rho+\varepsilon)}\big\|m_t - m_t^{(i)}\big\| \nonumber \\
 &\quad + \frac{\eta G\sqrt{d}}{(1-\beta_1^t)(1-\beta_2^t)(\rho+\varepsilon)^2}\big\| v_t -v_t^{(i)}\big\|,
 \end{align}
where the first inequality is due to $0\leq \lambda< \frac{1}{\eta}$, and the second last inequality holds by
Lemma~\ref{lem:1}, and the last inequality is due to $\rho = \min_{t\geq 1} \min_{j\in [d]}(\hat{v}_t)_j$.

 If $j_1\neq i$ with probability $1-\frac{1}{N}$, since $m_1=(1-\beta_1)\nabla f(\theta_0;z_{j_1})$, $m_1^{(i)}=(1-\beta_1)\nabla f(\theta_0^{(i)};z_{j_1})$, $v_1=(1-\beta_2)(\nabla f(\theta_0;z_{j_1}))^2$, $v_1^{(i)}=(1-\beta_2)(\nabla f(\theta_0^{(i)};z_{j_1}))^2$ and $\theta_0=\theta_0^{(i)}$, we have $m_1=m_1^{(i)}$ and $v_1=v_1^{(i)}$.

 If $j_1= i$ with probability $\frac{1}{N}$, we have
 \begin{align}
 & \E \|m_1 - m_1^{(i)}\| \nonumber \\
 & = \frac{1}{N}\E\|(1-\beta_1)\nabla f(\theta_0;z_i)-(1-\beta_1)\nabla f(\theta_0^{(i)};\tilde{z}_i)\|
 \nonumber \\
  & =\frac{(1-\beta_1)}{N}\E\|\nabla f(\theta_0;z_i)-\nabla F(\theta_0) + \nabla F(\theta_0) - \nabla F(\theta_0^{(i)})+\nabla F(\theta_0^{(i)})-\nabla f(\theta_0^{(i)};\tilde{z}_i)\| \nonumber \\
  & \leq \frac{2(1-\beta_1)\sigma}{N} + \frac{(1-\beta_1)}{N}\E\| \nabla F(\theta_0) - \nabla F(\theta_0^{(i)})\| \nonumber \\
  & = \frac{2(1-\beta_1)\sigma}{N},
 \end{align}
where the last equality is due to $\theta_0=\theta_0^{(i)}$.
 We also have
 \begin{align}
   \E \|v_1 - v_1^{(i)}\|
  & = \frac{1}{N}\E\|(1-\beta_2)(\nabla f(\theta_0;z_i))^2-(1-\beta_2)\nabla f(\theta_0^{(i)};\tilde{z}_i)\| \nonumber \\
  & =\frac{(1-\beta_2)}{N}\E\|(\nabla f(\theta_0;z_i))^2-(\nabla f(\theta_0^{(i)};\tilde{z}_i))^2\| \nonumber \\
  & \leq\frac{(1-\beta_2)}{N}\big(\E\|(\nabla f(\theta_0;z_i))^2\|+\E\|(\nabla f(\theta_0^{(i)};\tilde{z}_i))^2\| \big) \nonumber \\
  & \leq \frac{2(1-\beta_2)G^2}{N},
 \end{align}
Let $\phi_1=2(1-\beta_1)\sigma$ and $\psi_1 = 2(1-\beta_2)G^2$, we have
\begin{align}
 \E \|m_1 - m_1^{(i)}\| \leq \frac{\phi_1}{N}, \quad   \E \|v_1 - v_1^{(i)}\| \leq \frac{\psi_1}{N}.
\end{align}
Let $\varphi_1 = \frac{2\eta\sqrt{d}\sigma}{\rho_1+\varepsilon}+\frac{2\eta\sqrt{d}G^3}{(1-\beta_1)(\rho_1+\varepsilon)^2}$, since $\theta_{0} =\theta_{0}^{(i)}$, we have
 \begin{align}
  \E \|\theta_1 - \theta_1^{(i)}\| & \leq (1-\eta\lambda)\E \|\theta_{0} -\theta_{0}^{(i)}\| + \frac{\eta\sqrt{d}}{(1-\beta_1)(\rho+\varepsilon)} \E \big\|m_1 - m_1^{(i)}\big\| \nonumber \\
  &\quad + \frac{\eta G\sqrt{d}}{(1-\beta_1)(1-\beta_2)(\rho+\varepsilon)^2} \E \big\| v_1 -v_1^{(i)}\big\| \nonumber \\
  & \leq  \frac{\eta\sqrt{d}}{(1-\beta_1)(\rho+\varepsilon)}\frac{2(1-\beta_1)\sigma}{N} +   \frac{\eta G\sqrt{d}}{(1-\beta_1)(1-\beta_2)(\rho+\varepsilon)^2}\frac{2(1-\beta_2)G^2}{N} \nonumber \\
  & = \frac{1}{N}\big(\frac{2\eta\sqrt{d}\sigma}{\rho+\varepsilon}+\frac{2\eta\sqrt{d}G^3}{(1-\beta_1)(\rho+\varepsilon)^2}\big)
  \nonumber \\
  & = \frac{\varphi_1}{N}.
  \end{align}
Let $\eta = O(\frac{1}{\sqrt{d}})$, $\beta_1=O(1)$, $\beta_2=O(1)$,
$\sigma=O(1)$, $G=O(1)$ and $L=O(1)$, we have
\begin{align}
 & \phi_1=2(1-\beta_1)\sigma = O(1), \nonumber \\
 & \psi_1 = 2(1-\beta_2)G^2=O(1),  \nonumber \\
 & \varphi_1 = \frac{2\eta\sqrt{d}\sigma}{\rho+\varepsilon}+\frac{2\eta\sqrt{d}G^3}{(1-\beta_1)(\rho+\varepsilon)^2} = O(\frac{1}{(\rho+\varepsilon)^2}),
\end{align}
where the parameters $\rho=\min_{t\geq1}\min_{j\in [d]}(\hat{v}_t)_j\geq 0$ and $\varepsilon>0$ generally is very small.
Thus, we have
 \begin{align} \label{eq:t1}
  \E \|\theta_1 - \theta_1^{(i)}\|  \leq  \frac{\varphi_1}{N} =O(\frac{1}{(\rho+\varepsilon)^2N}).
  \end{align}

 If $j_2\neq i$ with probability $1-\frac{1}{N}$, since $m_2=\beta_1m_1 + (1-\beta_1)\nabla f(\theta_1;z_{j_2})$ and $m_2^{(i)}=\beta_1 m_1^{(i)} + (1-\beta_1)\nabla f(\theta_1^{(i)};z_{j_2})$, we have
 \begin{align} \label{eq:u1}
  \E\|m_2 - m_2^{(i)}  \|
  & = (1-\frac{1}{N})\E \|\beta_1(m_1-m_1^{(i)} )  + (1-\beta_1)(\nabla f(\theta_1;z_{j_2}) -\nabla f(\theta_1^{(i)};z_{j_2}) )\| \nonumber \\
  & \leq (1-\frac{1}{N})\big(\beta_1 \E\|m_1-m_1^{(i)}\| + (1-\beta_1) \E \|\nabla f(\theta_1;z_{j_2}) -\nabla f(\theta_1^{(i)};z_{j_2})\| \big) \nonumber \\
  & \leq (1-\frac{1}{N})\beta_1\frac{\phi_1}{N} + (1-\frac{1}{N})(1-\beta_1) L \E \|\theta_1-\theta_1^{(i)}\| \nonumber \\
  & \leq (1-\frac{1}{N})\beta_1\frac{\phi_1}{N} + (1-\frac{1}{N})(1-\beta_1) L\frac{\varphi_1}{N},
 \end{align}
 where the second last inequality holds by Assumption~\ref{ass:s1}.
 Since $v_2=\beta_2v_1 + (1-\beta_2)(\nabla f(\theta_1;z_{j_2}))^2$ and $v_2^{(i)}=\beta_2 v_1^{(i)} + (1-\beta_2)(\nabla f(\theta_1^{(i)};z_{j_2}))^2$, we have
 \begin{align}
  \E\|v_2 - v_2^{(i)}  \|  \label{eq:v1}
  & = (1-\frac{1}{N})\E \big\|\beta_2(v_1-v_1^{(i)} )  + (1-\beta_2)((\nabla f(\theta_1;z_{j_2}))^2 -(\nabla f(\theta_1^{(i)};z_{j_2}) )^2\big\| \nonumber \\
  & \leq (1-\frac{1}{N})\big(\beta_2 \E\|v_1-v_1^{(i)}\| + (1-\beta_2) \E \|(\nabla f(\theta_1;z_{j_2}))^2 -\nabla f(\theta_1;z_{j_2})\nabla f(\theta_1^{(i)};z_{j_2}) \nonumber \\
  &\quad +\nabla f(\theta_1;z_{j_2})\nabla f(\theta_1^{(i)};z_{j_2}) -(\nabla f(\theta_1^{(i)};z_{j_2}))^2\| \big) \nonumber \\
  & \mathop{\leq}^{(i)} (1-\frac{1}{N})\beta_2\frac{\psi_1}{N} + (1-\frac{1}{N})(1-\beta_2) 2G \E \|\nabla f(\theta_1;z_{j_2})-\nabla f(\theta_1^{(i)};z_{j_2})\| \nonumber \\
  & \leq (1-\frac{1}{N})\beta_2\frac{\psi_1}{N} + (1-\frac{1}{N})(1-\beta_2) 2G L\E \|\theta_1-\theta_1^{(i)}\| \nonumber \\
  & \leq (1-\frac{1}{N})\beta_2\frac{\psi_1}{N} + (1-\frac{1}{N})(1-\beta_2) 2G L\frac{\varphi_1}{N},
 \end{align}
 where the above inequality~$(i)$ holds by Assumption~\ref{ass:g}, i.e., $\|\nabla f(\theta;z)\|\leq G$ for all
 $\theta\in \R^d, \ z\sim \mathcal{D}$.

If $j_2= i$ with probability $\frac{1}{N}$, we have
\begin{align}  \label{eq:u2}
   \E\|m_2 - m_2^{(i)} \|
  & = \frac{1}{N}\E \big\| \beta_1(m_1-m_1^{(i)} )  + (1-\beta_1)(\nabla f(\theta_1;z_{i}) -\nabla f(\theta_1^{(i)};\tilde{z}_{i}) )\big\| \nonumber \\
  & \leq \frac{1}{N}\beta_1 \E\|m_1-m_1^{(i)}\| + \frac{1}{N}(1-\beta_1) \E \|\nabla f(\theta_1;z_{i}) -\nabla f(\theta_1^{(i)};\tilde{z}_{i})\| \nonumber \\
  & \leq \frac{1}{N}\beta_1\frac{\phi_1}{N} + \frac{1-\beta_1}{N} \E \big\|\nabla f(\theta_1;z_{i}) - \nabla F(\theta_1) + \nabla F(\theta_1) - \nabla F(\theta_1^{(i)}) \nonumber \\
  & \quad + \nabla F(\theta_1^{(i)})-\nabla f(\theta_1^{(i)};\tilde{z}_{i})\big\| \nonumber \\
  & \leq  \frac{1}{N}\beta_1\frac{\phi_1}{N} + \frac{1-\beta_1}{N} \big( \E \|\nabla f(\theta_1;z_{i}) - \nabla F(\theta_1)\| + \E\|\nabla F(\theta_1) - \nabla F(\theta_1^{(i)})\| \nonumber \\
  &\quad + \E\|\nabla F(\theta_1^{(i)})-\nabla f(\theta_1^{(i)};\tilde{z}_{i})\| \big) \nonumber \\
  & \mathop{\leq}^{(i)}  \frac{1}{N}\frac{\beta_1\phi_1}{N}  + \frac{2(1-\beta_1)\sigma}{N} + \frac{(1-\beta_1)L}{N} \E \|\theta_1-\theta_1^{(i)}\| \nonumber \\
  & \leq \frac{1}{N}\frac{\beta_1\phi_1}{N}  + \frac{2(1-\beta_1)\sigma}{N} + \frac{(1-\beta_1)L}{N} \frac{\varphi_1}{N},
 \end{align}
 where the above inequality $(i)$ holds by Assumption~\ref{ass:v}.
We also have
 \begin{align}  \label{eq:v2}
   \E\|v_2 - v_2^{(i)}  \|
  & = \frac{1}{N}\E \big\|\beta_2(v_1-v_1^{(i)} )  + (1-\beta_2)((\nabla f(\theta_1;z_{i}))^2 -(\nabla f(\theta_1^{(i)};\tilde{z}_{i}) )^2\big\| \nonumber \\
  & \leq \frac{1}{N}\big(\beta_2 \E\|v_1-v_1^{(i)}\| + (1-\beta_2) \E \|(\nabla f(\theta_1;z_{i}))^2 -\nabla f(\theta_1;z_{i})\nabla f(\theta_1^{(i)};\tilde{z}_{i}) \nonumber \\
  &\qquad +\nabla f(\theta_1;z_{i})\nabla f(\theta_1^{(i)};\tilde{z}_{i}) -(\nabla f(\theta_1^{(i)};\tilde{z}_{i}))^2\| \big) \nonumber \\
  & \leq \frac{1}{N}\beta_2\frac{\psi_1}{N} + \frac{1}{N}(1-\beta_2) 2G \E \|\nabla f(\theta_1;z_{i})-\nabla f(\theta_1^{(i)};\tilde{z}_{i})\| \nonumber \\
  & \leq  \frac{1}{N}\beta_2\frac{\psi_1}{N} + \frac{1}{N}(1-\beta_2) 2G \E \|\nabla f(\theta_1;z_{i})-\nabla F(\theta_1) + \nabla F(\theta_1) - \nabla F(\theta_1^{(i)}) \nonumber \\
  & \qquad + \nabla F(\theta_1^{(i)})-\nabla f(\theta_1^{(i)};\tilde{z}_{i})\| \nonumber \\
  & \leq \frac{1}{N}\beta_2\frac{\psi_1}{N} + \frac{1}{N}(1-\beta_2) 2G \big(2\sigma+ L \E\|\theta_1-\theta_1^{(i)}\| \big) \nonumber \\
  & \leq  \frac{1}{N}\beta_2\frac{\psi_1}{N} + \frac{1}{N}(1-\beta_2) 4G\sigma + \frac{1}{N}(1-\beta_2) 2GL \frac{\varphi_1}{N},
 \end{align}
where the last inequality holds by (\ref{eq:t1}).

 Let $\phi_2= \beta_1\phi_1 + 2(1-\beta_1)\sigma + (1-\beta_1)L\varphi_1$, by
 using the above inequalities~(\ref{eq:u1}) and~(\ref{eq:u2}), we have
 \begin{align}
  \E\|m_2 - m_2^{(i)}  \|
  & \leq (1-\frac{1}{N})\frac{\beta_1\phi_1}{N} + (1-\frac{1}{N})(1-\beta_1) L\frac{\varphi_1}{N} \nonumber \\
  &\quad + \frac{1}{N}\frac{\beta_1\phi_1}{N}  + \frac{2(1-\beta_1)\sigma}{N} + \frac{(1-\beta_1)L}{N}\frac{\varphi_1}{N} \nonumber \\
  & = \frac{\beta_1\phi_1}{N} + \frac{2(1-\beta_1)\sigma}{N} + \frac{(1-\beta_1)L\varphi_1}{N} \nonumber \\
  & = \frac{\phi_2}{N}.
 \end{align}

Let $\psi_2 = \beta_2\psi_1 + 4(1-\beta_2)G\sigma + 2(1-\beta_2)GL\varphi_1$, by
 using the above inequalities~(\ref{eq:v1}) and~(\ref{eq:v2}), we have
 \begin{align}
 \E\|v_2 - v_2^{(i)}  \|
  & \leq (1-\frac{1}{N})\beta_2\frac{\psi_1}{N} + (1-\frac{1}{N})(1-\beta_2) 2G L\frac{\varphi_1}{N} \nonumber \\
  & \quad +\frac{1}{N}\beta_2\frac{\psi_1}{N} + \frac{1}{N}(1-\beta_2) 4G\sigma + \frac{1}{N}(1-\beta_2) 2GL \frac{\varphi_1}{N} \nonumber \\
  & = \frac{\beta_2 \psi_1}{N} +\frac{ 4(1-\beta_2)G\sigma}{N} + \frac{2(1-\beta_2)GL\varphi_1}{N} \nonumber \\
  & = \frac{\psi_2}{N}.
 \end{align}
 According to the above inequality~(\ref{eq:re}), then we can obtain
 \begin{align}
  \E \|\theta_2 - \theta_2^{(i)}\| & \leq (1-\eta\lambda)\E \|\theta_{1} -\theta_{1}^{(i)}\| + \frac{\eta\sqrt{d}}{(1-\beta_1^2)(\rho+\varepsilon)} \E \big\|m_2 - m_2^{(i)}\big\| \nonumber \\
  & \quad + \frac{\eta G\sqrt{d}}{(1-\beta_1^2)(1-\beta_2^2)(\rho+\varepsilon)^2} \E \big\| v_2 -v_2^{(i)}\big\| \nonumber \\
  & \leq  (1-\eta\lambda)\frac{\varphi_1}{N} + \frac{\eta\sqrt{d}}{(1-\beta_1^2)(\rho+\varepsilon)} \frac{\phi_2}{N} + \frac{\eta G\sqrt{d}}{(1-\beta_1^2)(1-\beta_2^2)(\rho+\varepsilon)^2}\frac{\psi_2}{N} \nonumber \\
  & = \frac{\varphi_2}{N},
  \end{align}
where $\varphi_2 = (1-\eta\lambda)\varphi_1 +  \frac{\eta\sqrt{d}\phi_2}{(\rho+\varepsilon)(1-\beta_1^2)} + \frac{\eta G\sqrt{d}\psi_2}{(1-\beta_1^2)(1-\beta_2^2)(\rho+\varepsilon)^2}$.

Let $\eta = O(\frac{1}{\sqrt{d}})$, $\lambda\in [0,\frac{1}{\eta})$, $G=O(1)$, $L=O(1)$, $\beta_1=O(1)$ and $\beta_2=O(1)$ with $\beta_1, \beta_2 \in (0,1)$, since $\varphi_1=O(\frac{1}{(\rho+\varepsilon)^2})$,
$\phi_1=O(1)$ and $\psi_1=O(1)$,we have
\begin{align}
 & \phi_2= \beta_1\phi_1 + 2(1-\beta_1)\sigma + (1-\beta_1)L\varphi_1 = O(\frac{1}{(\rho+\varepsilon)^2}), \nonumber \\
 & \psi_2 = \beta_2\psi_1 + 4(1-\beta_2)G\sigma + 2(1-\beta_2)GL\varphi_1 = O(\frac{1}{(\rho+\varepsilon)^2}), \nonumber \\
 & \varphi_2 = (1-\eta\lambda)\varphi_1 +  \frac{\eta\sqrt{d}\phi_2}{(1-\beta_1^2)(\rho+\varepsilon)} + \frac{\eta G\sqrt{d}\psi_2}{(1-\beta_1^2)(1-\beta_2^2)(\rho+\varepsilon)^2}  =
 O(\frac{1}{(\rho+\varepsilon)^4}).
\end{align}
Thus, we have
\begin{align}
  \E \|\theta_2 - \theta_2^{(i)}\| \leq \frac{\varphi_2}{N} = O(\frac{1}{(\rho+\varepsilon)^4N}).
  \end{align}

Based on mathematical induction, we assume $\E \|\theta_t - \theta_t^{(i)}\| \leq \frac{\varphi_t}{N}$ with
$\varphi_t=O(\frac{1}{(\rho+\varepsilon)^{2t}})$, and $\E\|m_t-m_t^{(i)}\|\leq \frac{\phi_t}{N}$ with $\phi_t=O(\frac{1}{(\rho+\varepsilon)^{2(t-1)}})$, and
$\E\|v_t-v_t^{(i)}\|\leq \frac{\psi_t}{N}$ with $\psi_t=O(\frac{1}{(\rho+\varepsilon)^{2(t-1)}})$.

If $j_{t+1}\neq i$ with probability $1-\frac{1}{N}$, since $m_{t+1}=\beta_1m_t + (1-\beta_1)\nabla f(\theta_t;z_{j_{t+1}})$ and $m_{t+1}^{(i)}=\beta_1 m_t^{(i)} + (1-\beta_1)\nabla f(\theta_{t}^{(i)};z_{j_{t+1}})$, we have
 \begin{align} \label{eq:u3}
  \E\|m_{t+1} - m_{t+1}^{(i)}  \|
  & = (1-\frac{1}{N})\E \|\beta_1(m_{t}-m_{t}^{(i)} )  + (1-\beta_1)(\nabla f(\theta_t;z_{j_{t+1}}) -\nabla f(\theta_t^{(i)};z_{j_{t+1}}) )\| \nonumber \\
  & \leq (1-\frac{1}{N})\big(\beta_1 \E\|m_t-m_t^{(i)}\| + (1-\beta_1) \E \|\nabla f(\theta_t;z_{j_{t+1}}) -\nabla f(\theta_t^{(i)};z_{j_{t+1}})\| \big) \nonumber \\
  & \leq (1-\frac{1}{N})\beta_1\frac{\phi_t}{N} + (1-\frac{1}{N})(1-\beta_1) L \E \|\theta_t-\theta_t^{(i)}\| \nonumber \\
  & \leq (1-\frac{1}{N})\beta_1\frac{\phi_t}{N} + (1-\frac{1}{N})(1-\beta_1) L\frac{\varphi_t}{N},
 \end{align}
 where the second last inequality holds by Assumption~\ref{ass:s1}.
 Since $v_{t+1}=\beta_2v_t + (1-\beta_2)(\nabla f(\theta_t;z_{j_{t+1}}))^2$ and $v_{t+1}^{(i)}=\beta_2 v_t^{(i)} + (1-\beta_2)(\nabla f(\theta_t^{(i)};z_{j_{t+1}}))^2$, we have
 \begin{align} \label{eq:v3}
   \E\|v_{t+1} - v_{t+1}^{(i)}  \|
  & = (1-\frac{1}{N})\E \big\|\beta_2(v_t-v_t^{(i)} )  + (1-\beta_2)((\nabla f(\theta_t;z_{j_{t+1}}))^2 -(\nabla f(\theta_t^{(i)};z_{j_{t+1}}) )^2\big\| \nonumber \\
  & \leq (1-\frac{1}{N})\big(\beta_2 \E\|v_t-v_t^{(i)}\| + (1-\beta_2) \E \|(\nabla f(\theta_t;z_{j_{t+1}}))^2 -\nabla f(\theta_t;z_{j_{t+1}})\nabla f(\theta_t^{(i)};z_{j_{t+1}}) \nonumber \\
  &\quad +\nabla f(\theta_t;z_{j_{t+1}})\nabla f(\theta_t^{(i)};z_{j_{t+1}}) -(\nabla f(\theta_t^{(i)};z_{j_{t+1}}))^2\| \big) \nonumber \\
  & \leq (1-\frac{1}{N})\beta_2\frac{\psi_t}{N} + (1-\frac{1}{N})(1-\beta_2) 2G \E \|\nabla f(\theta_t;z_{j_{t+1}})-\nabla f(\theta_t^{(i)};z_{j_{t+1}})\| \nonumber \\
  & \leq (1-\frac{1}{N})\beta_2\frac{\psi_t}{N} + (1-\frac{1}{N})(1-\beta_2) 2G L\E \|\theta_t-\theta_t^{(i)}\| \nonumber \\
  & \leq (1-\frac{1}{N})\beta_2\frac{\psi_t}{N} + (1-\frac{1}{N})(1-\beta_2) 2G L\frac{\varphi_t}{N},
 \end{align}
where the second inequality holds by Assumption~\ref{ass:g}.

If $j_{t+1}= i$ with probability $\frac{1}{N}$, we have
\begin{align} \label{eq:u4}
   \E\|m_{t+1} - m_{t+1}^{(i)} \|_\F
  & = \frac{1}{N}\E \big\| \beta_1(m_t-m_t^{(i)} )  + (1-\beta_1)(\nabla f(\theta_t;z_{i}) -\nabla f(\theta_t^{(i)};\tilde{z}_{i}) )\big\| \nonumber \\
  & \leq \frac{1}{N}\beta_1 \E\|m_t-m_t^{(i)}\| + \frac{1}{N}(1-\beta_1) \E \|\nabla f(\theta_t;z_{i}) -\nabla f(\theta_t^{(i)};\tilde{z}_{i})\| \nonumber \\
  & \leq \frac{1}{N}\beta_1\frac{\phi_t}{N} + \frac{1-\beta_1}{N} \E \big\|\nabla f(\theta_t;z_{i}) - \nabla F(\theta_t) + \nabla F(\theta_t) - \nabla F(\theta_t^{(i)}) \nonumber \\
  & \quad + \nabla F(\theta_t^{(i)})-\nabla f(\theta_t^{(i)};\tilde{z}_{i})\big\| \nonumber \\
  & \leq  \frac{1}{N}\beta_1\frac{\phi_t}{N} + \frac{1-\beta_1}{N} \big( \E \|\nabla f(\theta_t;z_{i}) - \nabla F(\theta_t)\| + \E\|\nabla F(\theta_t) - \nabla F(\theta_t^{(i)})\| \nonumber \\
  &\quad + \E\|\nabla F(\theta_t^{(i)})-\nabla f(\theta_t^{(i)};\tilde{z}_{i})\| \big) \nonumber \\
  & \leq  \frac{1}{N}\beta_1\frac{\phi_t}{N}  + \frac{2(1-\beta_1)\sigma}{N} + \frac{(1-\beta_1)L}{N} \E \|\theta_t-\theta_t^{(i)}\| \nonumber \\
  & \leq \frac{1}{N}\beta_1\frac{\phi_t}{N}  + \frac{2(1-\beta_1)\sigma}{N} + \frac{(1-\beta_1)L}{N} \frac{\varphi_t}{N}.
 \end{align}
We also have
 \begin{align} \label{eq:v4}
   \E\|v_{t+1} - v_{t+1}^{(i)}  \|
  & = \frac{1}{N}\E \big\|\beta_2(v_t-v_t^{(i)} )  + (1-\beta_2)((\nabla f(\theta_t;z_{i}))^2 -(\nabla f(\theta_t^{(i)};\tilde{z}_{i}) )^2\big\| \nonumber \\
  & \leq \frac{1}{N}\big(\beta_2 \E\|v_t-v_t^{(i)}\| + (1-\beta_2) \E \|(\nabla f(\theta_t;z_{i}))^2 -\nabla f(\theta_t;z_{i})\nabla f(\theta_t^{(i)};\tilde{z}_{i}) \nonumber \\
  &\qquad +\nabla f(\theta_t;z_{i})\nabla f(\theta_t^{(i)};\tilde{z}_{i}) -(\nabla f(\theta_t^{(i)};\tilde{z}_{i}))^2\| \big) \nonumber \\
  & \leq \frac{1}{N}\beta_2\frac{\psi_t}{N} + \frac{1}{N}(1-\beta_2) 2G \E \|\nabla f(\theta_t;z_{i})-\nabla f(\theta_t^{(i)};\tilde{z}_{i})\| \nonumber \\
  & \leq  \frac{1}{N}\beta_2\frac{\psi_t}{N} + \frac{1}{N}(1-\beta_2) 2G \E \|\nabla f(\theta_t;z_{i})-\nabla F(\theta_t) + \nabla F(\theta_t) - \nabla F(\theta_t^{(i)}) \nonumber \\
  & \qquad + \nabla F(\theta_t^{(i)})-\nabla f(\theta_t^{(i)};\tilde{z}_{i})\| \nonumber \\
  & \leq \frac{1}{N}\beta_2\frac{\psi_t}{N} + \frac{1}{N}(1-\beta_2) 2G \big(2\sigma+ L \E\|\theta_t-\theta_t^{(i)}\| \big) \nonumber \\
  & \leq  \frac{1}{N}\beta_2\frac{\psi_t}{N} + \frac{1}{N}(1-\beta_2) 4G\sigma + \frac{1}{N}(1-\beta_2) 2GL \frac{\varphi_t}{N}.
 \end{align}

 Let $\phi_{t+1}= \beta_1\phi_t + 2(1-\beta_1)\sigma + (1-\beta_1)L\varphi_t$, by
 using the above inequalities~(\ref{eq:u3}) and~(\ref{eq:u4}), we have
 \begin{align} \label{eq:m}
  \E\|m_{t+1} - m_{t+1}^{(i)}  \|
  & \leq (1-\frac{1}{N})\frac{\beta_1\phi_t}{N} + (1-\frac{1}{N})(1-\beta_1) L\frac{\varphi_t}{N} \nonumber \\
  &\quad + \frac{1}{N}\frac{\beta_1\phi_t}{N}  + \frac{2(1-\beta_1)\sigma}{N} + \frac{(1-\beta_1)L}{N}\frac{\varphi_t}{N} \nonumber \\
  & = \frac{\beta_1\phi_t}{N}  + \frac{2(1-\beta_1)\sigma}{N} + \frac{(1-\beta_1)L\varphi_t}{N} \nonumber \\
  & = \frac{\phi_{t+1}}{N}.
 \end{align}
Let $\psi_{t+1} = \beta_2 \psi_t + 4(1-\beta_2)G\sigma + 2(1-\beta_2)GL\varphi_t$, by
 using the above inequalities~(\ref{eq:v3}) and~(\ref{eq:v4}), we have
 \begin{align} \label{eq:v}
 \E\|v_{t+1} - v_{t+1}^{(i)}  \|
  & \leq (1-\frac{1}{N})\beta_2\frac{\psi_t}{N} + (1-\frac{1}{N})(1-\beta_2) 2G L\frac{\varphi_t}{N} \nonumber \\
  &\quad +\frac{1}{N}\beta_2\frac{\psi_t}{N} + \frac{1}{N}(1-\beta_2) 4G\sigma + \frac{1}{N}(1-\beta_2) 2GL \frac{\varphi_t}{N} \nonumber \\
  & = \frac{\beta_2 \psi_t}{N} +\frac{ 4(1-\beta_2)G\sigma}{N} + \frac{2(1-\beta_2)GL\varphi_t}{N} \nonumber \\
  & = \frac{\psi_{t+1}}{N}.
 \end{align}
 According to the above inequality~(\ref{eq:re}), then we can obtain
 \begin{align}
  \E \|\theta_{t+1} - \theta_{t+1}^{(i)}\| & \leq (1-\eta\lambda)\E \|\theta_{t} -\theta_{t}^{(i)}\| + \frac{\eta\sqrt{d}}{(1-\beta_1^{t+1})(\rho+\varepsilon)} \E \big\|m_{t+1} - m_{t+1}^{(i)}\big\| \nonumber \\
  & \quad + \frac{\eta G\sqrt{d}}{(1-\beta_1^{t+1})(1-\beta_2^{t+1})(\rho+\varepsilon)^2} \E \big\| v_{t+1} -v_{t+1}^{(i)}\big\| \nonumber \\
  & \leq  (1-\eta\lambda)\frac{\varphi_t}{N} + \frac{\eta\sqrt{d}}{(1-\beta_1^{t+1})(\rho+\varepsilon)} \frac{\phi_{t+1}}{N} + \frac{\eta G\sqrt{d}}{(1-\beta_1^{t+1})(1-\beta_2^{t+1})(\rho+\varepsilon)^2}\frac{\psi_{t+1}}{N} \nonumber \\
  & = \frac{\varphi_{t+1}}{N},
  \end{align}
where $\varphi_{t+1} = (1-\eta\lambda)\varphi_t + \frac{\eta\sqrt{d}\phi_{t+1}}{(1-\beta_1^{t+1})(\rho+\varepsilon)}+  \frac{\eta G\sqrt{d}\psi_{t+1}}{(1-\beta_1^{t+1})(1-\beta_2^{t+1})(\rho+\varepsilon)^2}$.

Let $\eta = O(\frac{1}{\sqrt{d}})$, $\lambda\in [0,\frac{1}{\eta})$, $G=O(1)$, $L=O(1)$, $\beta_1=O(1)$ and $\beta_2=O(1)$ with $\beta_1, \beta_2 \in (0,1)$, since $\varphi_t=O(\frac{1}{(\rho+\varepsilon)^{2(t-1)}})$, $\phi_t=O(\frac{1}{(\rho+\varepsilon)^{2(t-1)}})$ and $\psi_t=O(\frac{1}{(\rho+\varepsilon)^{2t}})$, we have
\begin{align}
 & \phi_{t+1}= \beta_1\phi_t + 2(1-\beta_1)\sigma + (1-\beta_1)L\varphi_t = O(\frac{1}{(\rho+\varepsilon)^{2t}}), \nonumber \\
 & \psi_{t+1} = \beta_2 \psi_t + 4(1-\beta_2)G\sigma + 2(1-\beta_2)GL\varphi_t = O(\frac{1}{(\rho+\varepsilon)^{2t}}), \nonumber \\
 & \varphi_{t+1} = (1-\eta\lambda)\varphi_t + \frac{\eta\sqrt{d}\phi_{t+1}}{(1-\beta_1^{t+1})(\rho+\varepsilon)}+  \frac{\eta G\sqrt{d}\psi_{t+1}}{(1-\beta_1^{t+1})(1-\beta_2^{t+1})(\rho+\varepsilon)^2} = O(\frac{1}{(\rho+\varepsilon)^{2(t+1)}}).
\end{align}
Thus, we have
\begin{align}
  \E \|\theta_{t+1} - \theta_{t+1}^{(i)}\| \leq \frac{\varphi_{t+1}}{N} = O(\frac{1}{(\rho+\varepsilon)^{2(t+1)}N}).
\end{align}

By using mathematical induction, we have
\begin{align}
  \E \|\theta_{T} - \theta_{T}^{(i)}\| = O\Big(\frac{1}{(\rho+\varepsilon)^{2T}N}\Big).
\end{align}

By using Assumption~\ref{ass:g}, i.e., the condition of $G$-Lipschitz $f(\theta;z)$
for any $z\in \mathcal{D}$, then we have
\begin{align} \label{eq:f1}
 \E |f(\theta_T;z)-f(\theta_T^{(i)};z)| \leq G \E \|\theta_{T} - \theta_{T}^{(i)}\| =O\Big(\frac{1}{(\rho+\varepsilon)^{2T}N}\Big).
\end{align}
By taking expectations over $S$, $S^{(i)}$ and the algorithm's randomness on
the above inequality~(\ref{eq:f1}), and according to the above lemma~\ref{lem:gs}, we
can obtain
\begin{align}
 |\E [ F(\theta_T) - F_S(\theta_T)]| \leq O\Big(\frac{1}{(\rho+\varepsilon)^{2T}N}\Big).
\end{align}

\end{proof}

\section{Generalization Analysis of our HomeAdam(W) Algorithms}
\label{ap:ga2}
\begin{theorem} (Restatement of Theorem~\ref{th:2})
Assume the sequence $\{\theta_t\}_{t=1}^T$ is generated
from Algorithm~\ref{alg:2} on dataset $S=\{z_1,z_2,\cdots,z_N\}$. Under the Assumptions~\ref{ass:s1},~\ref{ass:g},~\ref{ass:v}, without loss of generality, let $\tau\geq 1$, $\lambda\in [0,\frac{1}{\eta})$, $\beta_1=O(1)$ with $\beta_1\in (0,1)$, $\beta_2=O(1)$ with $\beta_2\in (0,1)$, $\sigma=O(1)$, $G=O(1)$ and $L=O(1)$.
 If the iteration number is small (i.e., $T=O(1)$) set
 $\eta=\frac{1}{\sqrt{d}}$, otherwise set $\eta=\frac{1}{\sqrt{d}T}$, we have
\begin{align}
|\E [ F(\theta_T) - F_S(\theta_T)]| \leq O(\frac{1}{N}).
\end{align}
\end{theorem}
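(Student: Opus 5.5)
We follow the same coupling and induction template as in the proof of Theorem~\ref{th:A1}. We run Algorithm~\ref{alg:2} on $S$ and $S^{(i)}$ with the same random index sequence $\{j_t\}_{t=1}^T$ and the same $\theta_0$, and bound $\E\|\theta_T-\theta_T^{(i)}\|\le \varphi_T/N$. We then conclude via Assumption~\ref{ass:g} and Lemma~\ref{lem:gs}, since $\sup_z\E|f(\theta_T;z)-f(\theta_T^{(i)};z)|\le G\varphi_T/N$. The new ingredient is a per-step analogue of Lemma~\ref{lem:1} for the switched direction $R(\hat v_t)\hat m_t$, in which the factor $(\rho_t+\varepsilon)^{-1}$ is replaced by an $O(1)$ constant.

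\textbf{Step 1: per-step stability bound.} We show that
\[
\|R(\hat v_t)\hat m_t-R(\hat v_t^{(i)})\hat m_t^{(i)}\|\le \frac{\sqrt d}{1-\beta_1^t}\|m_t-m_t^{(i)}\|+\frac{G\sqrt d}{(1-\beta_1^t)(1-\beta_2^t)}\|v_t-v_t^{(i)}\|.
\]
There are three cases. If both runs are in the SGDM branch, the difference is $\|\hat m_t-\hat m_t^{(i)}\|$, which is bounded by the first term. If both runs are in the adaptive branch, we repeat the coordinatewise computation of Lemma~\ref{lem:1}, using $(\hat v_t)_j,(\hat v_t^{(i)})_j\ge\tau\ge1$; the denominators $\rho_t+\varepsilon$ and $(\rho_t+\varepsilon)^2$ are then at least $1$ and disappear. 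The mixed case, where one run is adaptive and the other is SGDM, is the main obstacle, because the map $\hat v\mapsto R(\hat v)$ is discontinuous at the threshold. Our first attempt is to decompose
\[
\frac{\hat m_t}{\hat v_t+\varepsilon}-\hat m_t^{(i)}=\Big(\frac{\hat m_t}{\hat v_t+\varepsilon}-\frac{\hat m_t^{(i)}}{\hat v_t+\varepsilon}\Big)+\hat m_t^{(i)}\Big(\frac{1}{\hat v_t+\varepsilon}-1\Big).
\]
The first bracket is controlled by $\|m_t-m_t^{(i)}\|$. The second term has to be charged to $\|v_t-v_t^{(i)}\|$. Since some coordinate has $(\hat v_t^{(i)})_k<\tau\le(\hat v_t)_k$, we can compare $|1-1/((\hat v_t)_j+\varepsilon)|$ with $|(\hat v_t)_k-(\hat v_t^{(i)})_k|$. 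If this comparison is too lossy, we fall back on arguing, as the paper does for $\rho_t$ in Lemma~\ref{lem:1}, that both coupled runs take the same branch in distribution. With that symmetry, the mixed case contributes nothing at leading order.

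\textbf{Step 2: recursion.} Plugging Step 1 into $\theta_t=(1-\eta\lambda)\theta_{t-1}-\eta R(\hat v_t)\hat m_t$ gives
\[
\E\|\theta_{t+1}-\theta_{t+1}^{(i)}\|\le(1-\eta\lambda)\frac{\varphi_t}{N}+\frac{\eta\sqrt d\,\phi_{t+1}}{(1-\beta_1^{t+1})N}+\frac{\eta G\sqrt d\,\psi_{t+1}}{(1-\beta_1^{t+1})(1-\beta_2^{t+1})N}.
\]
The bounds $\E\|m_{t+1}-m_{t+1}^{(i)}\|\le\phi_{t+1}/N$ and $\E\|v_{t+1}-v_{t+1}^{(i)}\|\le\psi_{t+1}/N$ come out exactly as in~(\ref{eq:m}) and~(\ref{eq:v}): we split on whether $j_{t+1}=i$ and use Assumptions~\ref{ass:s1}, \ref{ass:g} and~\ref{ass:v}. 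This yields the recursions for $\phi_{t+1}$, $\psi_{t+1}$ and $\varphi_{t+1}$ displayed in Remark~\ref{re:2}, with base case $\phi_1=2(1-\beta_1)\sigma$, $\psi_1=2(1-\beta_2)G^2$ and $\varphi_1=2\eta\sqrt d\sigma+2\eta\sqrt d G^3/(1-\beta_1)$.

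\textbf{Step 3: solving the recursion.} We substitute $\phi_{t+1}$ and $\psi_{t+1}$ into the recursion for $\varphi_{t+1}$ and bound $1/(1-\beta_k^{t+1})\le 1/(1-\beta_k)=O(1)$. The result is a linear system in $(\varphi_t,\phi_t,\psi_t)$ of the form
\[
x_{t+1}\le (I+\eta\sqrt d\,C)x_t+\eta\sqrt d\,c_0
\]
with $O(1)$ matrix $C$ and vector $c_0$; the $\beta_1,\beta_2$ parts of $C$ are contractive on the momentum coordinates. If $T=O(1)$ and $\eta=1/\sqrt d$, then $T$ iterations multiply constants only $O(1)$ times, so $\varphi_T=O(1)$. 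If instead $\eta=1/(\sqrt dT)$, the amplification factor is at most
\[
(1+C'/T)^T\le e^{C'}=O(1),
\]
and the accumulated additive terms sum to at most $T\cdot O(1/T)=O(1)$. In both regimes $\varphi_T=O(1)$, so the generalization error is $O(1/N)$. For the second regime we verify this carefully by unrolling the momentum recursions, so that $\phi_t$ and $\psi_t$ are bounded by $O(1)+O(\max_{s<t}\varphi_s)$, and then applying a discrete Gr\"onwall argument to $\varphi_t$.
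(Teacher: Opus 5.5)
Your coupling, the moment recursions in Step 2, and the recursion for $(\varphi_t,\phi_t,\psi_t)$ coincide with the paper's proof. Your Step 3 is in fact more careful than the paper's. The paper asserts $\varphi_t=O(t)$ at $\eta=\Theta(1/\sqrt d)$, but the recursion does not support this: with $\lambda=0$ and $\eta=1/\sqrt d$ it gives $\varphi_{t+1}\ge(1+(1-\beta_1)L)\varphi_t$. Only a Gr\"onwall-type bound $(1+C'/T)^T\le e^{C'}$ at $\eta=1/(\sqrt d\,T)$, as you propose, legitimately gives $\varphi_T=O(1)$.

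The genuine gap is the mixed case of Step 1, and neither of your two routes closes it.

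\emph{The charging argument fails.} The update direction jumps across the switching surface $\{\min_j\hat v_j=\tau\}$. Suppose the two runs sit on opposite sides of it with $\|m_t-m_t^{(i)}\|+\|v_t-v_t^{(i)}\|=O(\delta)$. Every coordinate on the adaptive side has $(\hat v_t)_j\ge\tau$, so the left side of your Step 1 inequality is at least $(1-(\tau+\varepsilon)^{-1})\|\hat m_t\|-O(\delta)$. The right side is $O(\delta)$. So for $\tau+\varepsilon>1$ no Lipschitz-type bound of that form holds, whichever coordinate $k$ you compare against.

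\emph{The fallback is not an argument.} Equality in law of $S$ and $S^{(i)}$ gives the two branch indicators the same \emph{marginal} law. But $\E\|\theta_t-\theta_t^{(i)}\|$ depends on the \emph{joint} law under your coupling. There the indicators can disagree, precisely when $\min_j(\hat v_t)_j$ is close to $\tau$.

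This is the same unproved step the paper uses. Its (\ref{eq:re1})--(\ref{eq:re3}) treat only the two matched cases, backed by the claim in Lemma~\ref{lem:A1} that $\rho_t$ is common to both runs. So your proposal inherits the paper's hole rather than fixing it.

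Closing it needs a new ingredient. One option is a bound $\Pr(\text{branches differ at step } t)=O(1/N)$. A mismatch changes the update by at most $2\eta G/(1-\beta_1)$, so such a bound would slot directly into your recursion and Gr\"onwall step. But it requires an anti-concentration assumption on $\min_j(\hat v_t)_j$ near $\tau$, beyond Assumptions~\ref{ass:s1}--\ref{ass:v}. The other option is to replace the hard switch by a continuous rule, e.g.\ the coordinatewise factor $\min\{1,((\hat v_t)_j+\varepsilon)^{-1}\}$, for which your Step 1 inequality does hold.
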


\begin{proof}
Implementing Algorithm~\ref{alg:2} on datasets $S$ and $S^{(i)}$ with
 the same random index sequence $\{j_t\}_{t=1}^T$, and let $\{\theta_t\}_{t=1}^T$ and $\{\theta_t^{(i)}\}_{t=1}^T$ be generated from Algorithm~\ref{alg:2} with $S$ and $S^{(i)}$.

 Without loss of generality, let $\tau \geq 1$ in Algorithm~\ref{alg:2}. When $\min_{1\leq j\leq d} (v_t)_j \geq \tau\geq 1$, we have $\theta_{t} = \theta_{t-1} - \eta (\frac{\hat{m}_t}{\hat{v}_t+\varepsilon}+\lambda\theta_{t-1})$ and $\theta_{t}^{(i)} = \theta_{t-1}^{(i)} - \eta (\frac{\hat{m}_t^{(i)}}{\hat{v}_t^{(i)}+\varepsilon}+\lambda\theta_{t-1}^{(i)} )$. Then we have
\begin{align}
 \theta_{t} - \theta_{t}^{(i)}  = (1-\eta\lambda)(\theta_{t-1} -\theta_{t-1}^{(i)}) - \eta \big(\frac{\hat{m}_t}{\hat{v}_t+\varepsilon} -
 \frac{\hat{m}_t^{(i)}}{\hat{v}_t^{(i)}+\varepsilon}\big).
\end{align}
Let $\rho_t=\min_{1\leq j\leq d} (v_t)_j$, we have
 \begin{align} \label{eq:re1}
 \|\theta_{t} - \theta_{t}^{(i)}\| & =\|(1-\eta\lambda)(\theta_{t-1} -\theta_{t-1}^{(i)}) - \eta \big(\frac{\hat{m}_t}{\hat{v}_t+\varepsilon} -
 \frac{\hat{m}_t^{(i)}}{\hat{v}_t^{(i)}+\varepsilon}\big)\| \nonumber \\
 & \leq (1-\eta\lambda)\|\theta_{t-1} -\theta_{t-1}^{(i)}\| + \eta \|\frac{\hat{m}_t}{\hat{v}_t+\varepsilon} -
 \frac{\hat{m}_t^{(i)}}{\hat{v}_t^{(i)}+\varepsilon}\| \nonumber \\
 & \mathop{\leq}^{(i)} (1-\eta\lambda)\|\theta_{t-1} -\theta_{t-1}^{(i)}\| +\frac{\eta\sqrt{d}}{(1-\beta_1^t)(\rho_t+\varepsilon)}\big\|m_t - m_t^{(i)}\big\| \nonumber \\
 &\quad + \frac{\eta G\sqrt{d}}{(1-\beta_1^t)(1-\beta_2^t)(\rho_t +\varepsilon)^2}\big\| v_t -v_t^{(i)}\big\| \nonumber \\
  & \leq (1-\eta\lambda)\|\theta_{t-1} -\theta_{t-1}^{(i)}\| +\frac{\eta\sqrt{d}}{(1-\beta_1^t)(\tau +\varepsilon)}\big\|m_t - m_t^{(i)}\big\| \nonumber \\
 &\quad + \frac{\eta G\sqrt{d}}{(1-\beta_1^t)(1-\beta_2^t)(\tau+\varepsilon)^2}\big\| v_t -v_t^{(i)}\big\|
 \nonumber \\
 & \leq (1-\eta\lambda)\|\theta_{t-1} -\theta_{t-1}^{(i)}\| +\frac{\eta\sqrt{d}}{(1-\beta_1^t)}\big\|m_t - m_t^{(i)}\big\|  + \frac{\eta G\sqrt{d}}{(1-\beta_1^t)(1-\beta_2^t)}\big\| v_t -v_t^{(i)}\big\|,
 \end{align}
 where the above inequality $(i)$ is due to Lemma~\ref{lem:1}, and
 the last inequality holds by $\tau+\varepsilon\geq 1$.

 When $\min_{1\leq j\leq d} (v_t)_j < \tau$, we have $\theta_{t} = \theta_{t-1} - \eta(\hat{m}_t+\lambda\theta_{t-1})$ and $\theta_{t}^{(i)} = \theta_{t-1}^{(i)} - \eta (\hat{m}_t^{(i)}+\lambda\theta_{t-1}^{(i)})$. Then we have
 \begin{align} \label{eq:re2}
 \|\theta_{t} - \theta_{t}^{(i)}\| & =\|(1-\eta\lambda)(\theta_{t-1} -\theta_{t-1}^{(i)}) - \eta \big(\hat{m}_t-\hat{m}_t^{(i)}\big)\| \nonumber \\
 & \leq (1-\eta\lambda)\|\theta_{t-1} -\theta_{t-1}^{(i)}\| + \eta \|\hat{m}_t-\hat{m}_t^{(i)}\| \nonumber \\
 & \leq (1-\eta\lambda)\|\theta_{t-1} -\theta_{t-1}^{(i)}\| +\frac{\eta\sqrt{d}}{(1-\beta_1^t)}\big\|m_t - m_t^{(i)}\big\|  + \frac{\eta G\sqrt{d}}{(1-\beta_1^t)(1-\beta_2^t)}\big\| v_t -v_t^{(i)}\big\|.
 \end{align}
Thus, we have
\begin{align} \label{eq:re3}
 \|\theta_{t} - \theta_{t}^{(i)}\| \leq (1-\eta\lambda)\|\theta_{t-1} -\theta_{t-1}^{(i)}\| +\frac{\eta\sqrt{d}}{(1-\beta_1^t)}\big\|m_t - m_t^{(i)}\big\|  + \frac{\eta G\sqrt{d}}{(1-\beta_1^t)(1-\beta_2^t)}\big\| v_t -v_t^{(i)}\big\|.
 \end{align}

According to the above inequality~(\ref{eq:re3}), following the above proof of Theorem~\ref{th:1}, let $\eta = O(\frac{1}{\sqrt{d}})$, $\beta_1=O(1)$, $\beta_2=O(1)$,
$\sigma=O(1)$, $G=O(1)$ and $L=O(1)$, we have
\begin{align}
 & \phi_1=2(1-\beta_1)\sigma = O(1), \quad \psi_1 = 2(1-\beta_2)G^2=O(1),  \quad \varphi_1 = 2\eta\sqrt{d}\sigma+\frac{2\eta\sqrt{d}G^3}{(1-\beta_1)} = O(1), \nonumber \\
 &  \E \|m_1 - m_1^{(i)}\| \leq \frac{\phi_1}{N}, \quad   \E \|v_1 - v_1^{(i)}\| \leq \frac{\psi_1}{N}, \quad
  \E \|\theta_1 - \theta_1^{(i)}\|  \leq  \frac{\varphi_1}{N} =O(\frac{1}{N}).
  \end{align}

\textbf{If} the iteration number is small (i.e., $T=O(1)$),
let $\eta=\frac{1}{\sqrt{d}}$, $\lambda\in [0,\frac{1}{\eta})$, $\beta_1=O(1)$ with $\beta_1\in (0,1)$, $\beta_2=O(1)$ with $\beta_2\in (0,1)$, $\sigma=O(1)$, $G=O(1)$ and $L=O(1)$.
Following the above proof of Theorem~\ref{th:1}, assume $\E \|m_t - m_t^{(i)}\| \leq \frac{\phi_t}{N}$ with
$\phi_t=O(1)$, and $\E\|v_t-v_t^{(i)}\|\leq \frac{\psi_t}{N}$ with $\psi_t=O(1)$, we can obtain $\phi_{t+1} =O(1)$ and $\psi_{t+1}=O(1)$.
By using the above inequality~(\ref{eq:re3}), then we have
\begin{align}
 \E \|\theta_{t+1} - \theta_{t+1}^{(i)}\| \leq \frac{\varphi_{t+1}}{N} =O(\frac{1}{N}),
\end{align}
where $\varphi_{t+1}=(1-\eta\lambda)\varphi_t + \frac{\eta\sqrt{d}\phi_{t+1}}{(1-\beta_1^{t+1})}+  \frac{\eta G\sqrt{d}\psi_{t+1}}{(1-\beta_1^{t+1})(1-\beta_2^{t+1})}=O(1)$.
By using the mathematical induction, we have
\begin{align}
 \E \|\theta_{T} - \theta_{T}^{(i)}\| &  \leq  \frac{\varphi_T}{N} = O(\frac{1}{N}).
\end{align}

\textbf{If} the iteration number $T$ is large, we consider the iteration number $t\geq1$
in the generalization analysis. By using the mathematical induction, due to recursion of the above inequality~(\ref{eq:re3}), following the above proof of Theorem~\ref{th:1}, we assume $\E \|m_t - m_t^{(i)}\| \leq \frac{\phi_t}{N}$ with
$\phi_t=O(t)$, and $\E\|v_t-v_t^{(i)}\|\leq \frac{\psi_t}{N}$ with $\psi_t=O(t)$.

Let $\eta=O(\frac{1}{\sqrt{d}})$, $\lambda\in [0,\frac{1}{\eta})$, $\beta_1=O(1)$ with $\beta_1\in (0,1)$, $\beta_2=O(1)$ with $\beta_2\in (0,1)$, $\sigma=O(1)$, $G=O(1)$ and
$L=O(1)$, we have
\begin{align}
 & \phi_{t+1}= \beta_1\phi_t + 2(1-\beta_1)\sigma + (1-\beta_1)L\varphi_t=O(t+1), \nonumber \\
 & \psi_{t+1}= \beta_2 \psi_t + 4(1-\beta_2)G\sigma + 2(1-\beta_2)GL\varphi_t=O(t+1). \nonumber
\end{align}
By using the above inequality~(\ref{eq:re3}), then we can obtain
\begin{align}
 \E \|\theta_{t+1} - \theta_{t+1}^{(i)}\| \leq \frac{\varphi_{t+1}}{N} =O(\frac{t+1}{N}).
\end{align}
By using the mathematical induction, we have
\begin{align}
 \E \|\theta_{T} - \theta_{T}^{(i)}\| &  \leq  \frac{\varphi_T}{N} = O(\frac{T}{N}).
\end{align}

Further let $\eta=O( \textcolor{blue}{\frac{1}{T\sqrt{d}}})$, $\lambda\in [0,\frac{1}{\eta})$, $\beta_1=O(1)$ with $\beta_1\in (0,1)$, $\beta_2=O(1)$ with $\beta_2\in (0,1)$, $\sigma=O(1)$, $G=O(1)$ and
$L=O(1)$, we have $\phi_T=O(1)$, $\psi_T=O(1)$, $\phi_T=O(1)$,
\begin{align}
 \E \|\theta_{T} - \theta_{T}^{(i)}\| &  \leq  \frac{\varphi_T}{N} = O(\frac{1}{N}).
\end{align}

By using Assumption~\ref{ass:g}, i.e., the condition of $G$-Lipschitz $f(\theta;z)$ for any $z\in \mathcal{D}$, then we have
\begin{align} \label{eq:f2}
 \E |f(\theta_T;z)-f(\theta_T^{(i)};z)| \leq G \E \|\theta_{T} - \theta_{T}^{(i)}\| =O(\frac{1}{N}).
\end{align}
By taking expectations over $S$, $S^{(i)}$ and the algorithm's randomness on
the above inequality~(\ref{eq:f2}), and according to the above lemma~\ref{lem:gs}, we
can obtain
\begin{align}
 |\E [ F(\theta_T) - F_S(\theta_T)]| \leq O(\frac{1}{N}).
\end{align}

\end{proof}

\section{Convergence Analysis of Adam(W)-srf Algorithms}
\label{ap:ca1}
In this section, we provide a detailed convergence analysis of our Adam(W)-srf algorithms.

\begin{lemma} \label{lem:A2} (Restatement of Lemma~\ref{lem:2})
Assume the sequence $\{m_t\}_{t=0}^T$ is generated from Algorithm~\ref{alg:1}, let $\beta_1 = 1-c\eta\in (0,1)$, $\beta_2\in (0,1)$, we have
\begin{align}
 \E\|\nabla F(\theta_{t}) - m_{t+1}\|^2   \leq (1-c\eta)\mathbb{E} \|\nabla F(\theta_{t-1}) - m_{t} \|^2 + \frac{2}{c \eta}L^2\mathbb{E}\|\theta_t-\theta_{t-1}\|^2  + c^2\eta^2\sigma^2,
\end{align}
where $c>0$.
\end{lemma}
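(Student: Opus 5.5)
This is the standard STORM-type momentum recursion. The plan is to expand $m_{t+1}$ and split the error into a deterministic drift part and a zero-mean noise part.

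From Algorithm~\ref{alg:1}, with $\beta_1 = 1-c\eta$ we have
\[
m_{t+1} = (1-c\eta)\, m_t + c\eta\, g_{t+1}, \qquad g_{t+1} = \nabla f(\theta_t; z_{t+1}).
\]
Subtracting from $\nabla F(\theta_t)$ gives
\[
\nabla F(\theta_t) - m_{t+1} = (1-c\eta)\bigl(\nabla F(\theta_{t-1}) - m_t\bigr) + (1-c\eta)\bigl(\nabla F(\theta_t) - \nabla F(\theta_{t-1})\bigr) + c\eta\bigl(\nabla F(\theta_t) - g_{t+1}\bigr).
\]

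First I take the conditional expectation given the history up to $\theta_t$, which is independent of $z_{t+1}$. By Assumption~\ref{ass:v} the last term has zero mean and is uncorrelated with the first two, which are measurable with respect to that history. Its second moment is at most $c^2\eta^2\sigma^2$. This yields
\[
\E\|\nabla F(\theta_t) - m_{t+1}\|^2 \le (1-c\eta)^2\, \E\bigl\|(\nabla F(\theta_{t-1}) - m_t) + (\nabla F(\theta_t) - \nabla F(\theta_{t-1}))\bigr\|^2 + c^2\eta^2\sigma^2 .
\]

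Next I control the deterministic part with Young's inequality, $\|a+b\|^2 \le (1+\alpha)\|a\|^2 + (1+1/\alpha)\|b\|^2$, taking $\alpha = c\eta$. Since $0 < c\eta < 1$, we have $(1-c\eta)^2(1+c\eta) \le 1-c\eta$. For the second coefficient, $(1-c\eta)^2(1+\tfrac{1}{c\eta}) \le \tfrac{2}{c\eta}$, because $(1-x)^2(1+x)/x \le 2/x$ for $x \in (0,1)$.

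Finally, the smoothness of $F$ (a consequence of Assumption~\ref{ass:s1}, or Assumption~\ref{ass:s2} directly) gives $\|\nabla F(\theta_t) - \nabla F(\theta_{t-1})\|^2 \le L^2\|\theta_t - \theta_{t-1}\|^2$. Combining the three pieces gives exactly the claimed inequality.

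The only delicate step is the cross-term cancellation. It requires $z_{t+1}$ to be drawn independently of $\theta_t$ and $m_t$, which holds in Algorithm~\ref{alg:1} because $\theta_t$ depends only on $z_1, \ldots, z_t$. The second-moment bound then uses $\E\|\nabla f(\theta_t; z) - \nabla F(\theta_t)\|^2 \le \sigma^2$ conditionally on $\theta_t$. The quantities $\beta_2$, $v_t$ and the weight decay $\lambda$ do not enter this argument, because the recursion involves only $m_t$ and the iterate differences.
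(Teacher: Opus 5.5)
Your proposal is correct and is essentially the paper's proof. It uses the same split of $\nabla F(\theta_t)-m_{t+1}$ into the lagged error, the gradient drift and the zero-mean noise term, cancels the cross term by unbiasedness, applies Young's inequality with parameter $1-\beta_1=c\eta$, and uses the same coefficient bounds $\beta_1^2(2-\beta_1)\le\beta_1$ and $\beta_1^2\bigl(1+\frac{1}{1-\beta_1}\bigr)\le\frac{2}{1-\beta_1}$ before invoking smoothness and the variance bound. Your explicit remark that $z_{t+1}$ is independent of $(\theta_t,m_t)$ just makes precise the conditioning step the paper uses implicitly in its equality~(i).
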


\begin{proof}
 At the line 6 in Algorithm \ref{alg:1}, it has $m_{t}= \beta_{1}m_{t-1}+ (1-\beta_{1})g_t$ for $t\geq1$.
 Since $m_{t+1}= \beta_{1}m_{t}+ (1-\beta_{1})g_{t+1}=\beta_{1}m_{t}+ (1-\beta_{1})\nabla f(\theta_{t};z_{t+1})$, then we have
 \begin{align}
  \E\|\nabla F(\theta_{t}) - m_{t+1}\|^2
 & = \mathbb{E}\|\nabla F(\theta_{t-1}) - m_{t} + \nabla F(\theta_t)-\nabla F(\theta_{t-1}) -(m_{t+1}-m_t)\|^2 \nonumber \\
  & =  \mathbb{E}\|\nabla F(\theta_{t-1}) - m_{t} + \nabla F(\theta_t)-\nabla F(\theta_{t-1})  + (1-\beta_1)m_{t} - (1-\beta_1)\nabla f(\theta_{t};z_{t+1})\|^2 \nonumber \\
  & = \mathbb{E}\|\beta_1(\nabla F(\theta_{t-1}) - m_{t}) + \beta_1\big( \nabla F(\theta_t)-\nabla F(\theta_{t-1})\big) + (1-\beta_1)(\nabla F(\theta_t)- \nabla f(\theta_{t};z_{t+1})) \|^2 \nonumber \\
  & \mathop{=}^{(i)} \beta_1^2\mathbb{E} \|\nabla F(\theta_{t-1}) - m_{t} +\nabla F(\theta_t)-\nabla F(\theta_{t-1})\|^2  + (1-\beta_1)^2\mathbb{E} \|\nabla F(\theta_t)- \nabla f(\theta_{t};z_{t+1}) \|^2 \nonumber \\
  & \mathop{\leq}^{(ii)} \beta_1^2(2-\beta_1)\mathbb{E} \|\nabla F(\theta_{t-1}) - m_{t} \|^2 + \beta_1^2(1+\frac{1}{1-\beta_1})\mathbb{E}\|\nabla F(\theta_t)-\nabla F(\theta_{t-1})\|^2  \nonumber \\
& \quad + (1-\beta_1)^2\mathbb{E} \|\nabla F(\theta_t)- \nabla f(\theta_{t};z_{t+1}) \|^2 \nonumber \\
 & \mathop{\leq}^{(iii)} \beta_1\mathbb{E} \|\nabla F(\theta_{t-1}) - m_{t} \|^2 + \frac{2}{1-\beta_1}\mathbb{E}\|\nabla F(\theta_t)-\nabla F(\theta_{t-1})\|^2  \nonumber \\
 & \quad + (1-\beta_1)^2\mathbb{E} \|\nabla F(\theta_t)- \nabla f(\theta_{t};z_{t+1}) \|^2 \nonumber \\
  & \leq  \beta_1\mathbb{E} \|\nabla F(\theta_{t-1}) - m_{t} \|^2 + \frac{2}{1-\beta_1}L^2\mathbb{E}\|\theta_t-\theta_{t-1}\|^2  + (1-\beta_1)^2\sigma^2,
 \end{align}
 where the equality $(i)$ holds by $\mathbb{E} [\nabla f(\theta_{t};z_{t+1})]=\nabla F(\theta_t)$, and
 the inequality $(ii)$ holds by Young's inequality, and the inequality $(iii)$ is due to $0< \beta_1 < 1$ such that  $\beta_1^2(2-\beta_1) =(1 -(1-\beta_1))^2(1+1-\beta_1)=1-(1-\beta_1)-(1-\beta_1)^2+
 (1-\beta_1)^3\leq 1-(1-\beta_1)$ and $\beta_1^2(1+\frac{1}{1-\beta_1}) \leq \frac{2}{1-\beta_1}$, and the last inequality holds by Assumptions~\ref{ass:s2},~\ref{ass:v}.
 Then we have
 \begin{align}
 & \E\|\nabla F(\theta_{t}) - m_{t+1}\|^2  - \mathbb{E} \|\nabla F(\theta_{t-1}) - m_{t} \|^2 \nonumber \\
  & \leq -(1 -\beta_1)\mathbb{E} \|\nabla F(\theta_{t-1}) - m_{t} \|^2 + \frac{2}{1-\beta_1}L^2\mathbb{E}\|\theta_t-\theta_{t-1}\|^2  + (1-\beta_1)^2\sigma^2.
 \end{align}
 Let $\beta_1 = 1-c\eta$, we can obtain
  \begin{align}
 &\E\|\nabla F(\theta_{t}) - m_{t+1}\|^2  - \mathbb{E} \|\nabla F(\theta_{t-1}) - m_{t} \|^2 \nonumber \\
  & \leq -c\eta\mathbb{E} \|\nabla F(\theta_{t-1}) - m_{t} \|^2 + \frac{2}{c \eta}L^2\mathbb{E}\|\theta_t-\theta_{t-1}\|^2  + c^2\eta^2\sigma^2.
 \end{align}

\end{proof}

\begin{theorem}  (Restatement of Theorem~\ref{th:3})
Assume the sequence $\{\theta_t\}_{t=0}^T$ is generated
from Algorithm~\ref{alg:1}. Under the Assumptions~\ref{ass:s2},~\ref{ass:g},~\ref{ass:v},~\ref{ass:f}, and let $0\leq \lambda < \min(\frac{1}{\eta},\frac{1}{\eta T^\gamma \bar{G}\hat{G}})$, $\|\theta_0\| \leq \eta \bar{G}$, $c\geq \frac{16L}{\breve{\rho}}$, $\beta_1 = 1-c\eta\in (0,1)$, $\beta_2\in (0,1)$ and $0<\eta \leq \frac{\breve{\rho}}{4L}$, we have
\begin{align}
\frac{1}{T+1}\sum_{t=0}^T\mathbb{E}\|\nabla F(\theta_{t})\|
   \leq \hat{G}\big(\frac{4\sqrt{2\Delta}}{\sqrt{T\eta\breve{\rho}}} + \frac{4\sqrt{2}}{T^{\gamma-1}\breve{\rho}} +
  \frac{4c\sigma\sqrt{\eta}}{\sqrt{L\breve{\rho}}}\big)+ \frac{1}{T^{\gamma-1}},
\end{align}
where $\bar{G}=\frac{G}{(1-\beta_1)(\rho+\varepsilon)}$, $\hat{G} =\frac{G^2+\varepsilon}{1-\beta_2}$, $\breve{\rho}=(1-\beta_1)(\rho+(1-\beta_2)\varepsilon)$ and $\rho = \min_{t\geq 1} \min_{j\in [d]}(\hat{v}_t)_j$.
\end{theorem}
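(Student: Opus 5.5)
We would prove this restatement of Theorem~\ref{th:3} by a Lyapunov (potential-function) argument that combines the descent lemma for the $L$-smooth $F$ (Assumption~\ref{ass:s2}) with the momentum-error recursion of Lemma~\ref{lem:2}. Write the update as $\theta_t=\theta_{t-1}-\eta(D_t m_t+\lambda\theta_{t-1})$, where $D_t=\mathrm{diag}\big(\frac{1}{(1-\beta_1^t)(\hat{v}_t+\varepsilon)}\big)$. The first step is to sandwich the adaptive preconditioner. Since $\|g_t\|\le G$ by Assumption~\ref{ass:g}, we have $(\hat v_t)_j\le G^2/(1-\beta_2)$. Since $(\hat v_t)_j\ge\rho$ and $1-\beta_1^t\le 1$, every diagonal entry of $D_t$ lies between $1/\hat G$ and a constant of order $1/\breve\rho$; the precise normalization should come out to $\breve{\rho}=(1-\beta_1)(\rho+(1-\beta_2)\varepsilon)$ after tracking the bias corrections. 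In parallel, an induction gives $\|\theta_t\|\le t\eta\bar G$, using $\|\theta_0\|\le\eta\bar G$, $\|D_tm_t\|\le \bar G$ and $1-\eta\lambda\le1$. Hence, under $\lambda<\frac{1}{\eta T^\gamma\bar G\hat G}$, the weight-decay term satisfies $\eta\lambda\|\theta_{t-1}\|\lesssim T^{1-\gamma}/\hat G$ times the relevant scale. This is where the $T^{\gamma-1}$ terms come from.

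The second step is the descent inequality. We would expand
\[
F(\theta_t)\le F(\theta_{t-1})+\langle\nabla F(\theta_{t-1}),\theta_t-\theta_{t-1}\rangle+\tfrac L2\|\theta_t-\theta_{t-1}\|^2 .
\]
In the inner product we write $m_t=\nabla F(\theta_{t-1})+(m_t-\nabla F(\theta_{t-1}))$ and apply Young's inequality with the lower bound on $D_t$. This yields a term $-\frac{\eta}{2\hat G}\|\nabla F(\theta_{t-1})\|^2$, plus $O(\eta/\breve\rho)\|m_t-\nabla F(\theta_{t-1})\|^2$, plus the weight-decay cross term. A further piece $-\frac{c'}{\eta}\breve\rho\|\theta_t-\theta_{t-1}\|^2$ survives because $\eta\le\breve\rho/(4L)$ absorbs the $\frac L2\|\theta_t-\theta_{t-1}\|^2$ term.

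Next we form the potential $\Phi_t=F(\theta_t)+\frac{\kappa}{L}\E\|\nabla F(\theta_{t-1})-m_t\|^2$ and add a multiple of Lemma~\ref{lem:2}. The weight $\kappa$ is chosen so that two things happen. First, the $-c\eta\|\cdot\|^2$ contraction must dominate the $O(\eta/\breve\rho)$ momentum error, which is exactly what $c\ge 16L/\breve\rho$ buys. Second, the $\frac{2L^2}{c\eta}\|\theta_t-\theta_{t-1}\|^2$ term must be cancelled by the negative movement term. We then telescope over $t$, using Assumption~\ref{ass:f} and bounding the initial error $\|\nabla F(\theta_0)-m_1\|^2$ by $\sigma^2+\beta_1^2G^2$, which produces $\Delta$. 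This gives
\[
\frac1{T}\sum_t\E\|\nabla F(\theta_t)\|^2\lesssim \hat G\Big(\frac{\Delta}{T\eta\breve\rho}+\frac{T^{2-2\gamma}}{\breve\rho^2}+\frac{c^2\sigma^2\eta}{L\breve\rho}\Big).
\]
Finally, Jensen's inequality and $\sqrt{a+b}\le\sqrt a+\sqrt b$ convert this to the stated bound on $\frac{1}{T+1}\sum\E\|\nabla F(\theta_t)\|$. The additive $1/T^{\gamma-1}$ absorbs the leftover decay contribution.

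The main obstacle is the cross term $\langle\nabla F(\theta_{t-1}),D_t(m_t-\nabla F(\theta_{t-1}))\rangle$. $D_t$ is not a scalar, so we must use $\|D_t\|\le 1/\breve\rho$ together with $\lambda_{\min}(D_t)\ge1/\hat G$. This mismatch is why $\hat G$ multiplies everything. A second difficulty is that $D_t$ depends on $g_t$, so it is not independent of the noise. We avoid needing independence by never taking conditional expectations through $D_t$: all stochasticity is handled inside Lemma~\ref{lem:2}. The constants ($4\sqrt2$, $16L$) would be fixed only at the end by matching these Young's-inequality weights.
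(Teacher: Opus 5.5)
Your route is genuinely different from the paper's. The plan is sound, but two steps as you describe them would not go through; repairs are below.

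\textbf{How the paper argues.} The paper never puts $\|\nabla F(\theta_{t-1})\|^2$ into the descent step.
\begin{itemize}
\item It writes the update as the minimizer of $\langle m_t,\theta\rangle+\frac{1}{2\eta}(\theta-(1-\eta\lambda)\theta_{t-1})^TH_t^{-1}(\theta-(1-\eta\lambda)\theta_{t-1})$.
\item The optimality condition gives $\langle m_t,\theta_{t-1}-\theta_t\rangle\ge\frac{\breve\rho}{\eta}\|\theta_t-\theta_{t-1}\|^2+\lambda\langle H_t^{-1}\theta_{t-1},\theta_t-\theta_{t-1}\rangle$. So the descent inequality contains only $-\frac{\breve\rho}{8\eta}\|\theta_t-\theta_{t-1}\|^2$ and $+\frac{\eta}{2\breve\rho}\|\nabla F(\theta_{t-1})-m_t\|^2$.
\item The Lyapunov function and Lemma~\ref{lem:2} then bound $\frac1T\sum_t\E[\breve\rho^{-2}\|\nabla F(\theta_{t-1})-m_t\|^2+\eta^{-2}\|\theta_t-\theta_{t-1}\|^2]$.
\item The gradient is recovered only at the end, after Jensen, through $\frac{1}{\breve\rho}\|\nabla F(\theta_{t-1})-m_t\|+\frac1\eta\|\theta_t-\theta_{t-1}\|\ge\frac{1}{\hat G}\|\nabla F(\theta_{t-1})\|-\lambda\|\theta_{t-1}\|$.
\end{itemize}
That first-moment conversion, not the cross term you single out, is why the statement has a linear factor $\hat G$ and the factors $\breve\rho^{-1/2},\breve\rho^{-1}$. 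The paper's template never has to control $\langle\nabla F(\theta_{t-1}),\theta_{t-1}\rangle$, and it also controls step length and momentum error. Your direct gradient-norm descent instead gives a sharper bound, of which the stated one is a corollary.

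\textbf{First repair: the movement term and the decay term.} Plain Young on $\langle\nabla F(\theta_{t-1}),D_t(m_t-\nabla F(\theta_{t-1}))\rangle$ leaves no negative movement term. So nothing ``survives'' to absorb $\frac L2\|\theta_t-\theta_{t-1}\|^2$, or the $\frac{L}{c\eta}\|\theta_t-\theta_{t-1}\|^2$ that Lemma~\ref{lem:2} feeds into the potential. Use the exact identity
\[
-\langle\nabla F(\theta_{t-1}),D_tm_t\rangle=-\tfrac12\|\nabla F(\theta_{t-1})\|_{D_t}^2-\tfrac12\|m_t\|_{D_t}^2+\tfrac12\|m_t-\nabla F(\theta_{t-1})\|_{D_t}^2 ,
\]
together with $\|m_t\|_{D_t}^2\ge\breve\rho\|D_tm_t\|^2$ and $\|\theta_t-\theta_{t-1}\|^2\le2\eta^2\|D_tm_t\|^2+2\eta^2\lambda^2\|\theta_{t-1}\|^2$.
\begin{itemize}
\item Bound the decay cross term by $\frac{\eta}{4\hat G}\|\nabla F(\theta_{t-1})\|^2+\eta\hat G\lambda^2\|\theta_{t-1}\|^2$, using $\lambda\|\theta_{t-1}\|\le T^{1-\gamma}/\hat G$.
\item The induction gives $\|\theta_{t-1}\|\le t\eta\bar G$. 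Your version $\|\theta_t\|\le t\eta\bar G$ fails at $t=0$.
\item Pair $F(\theta_t)$ with $\|\nabla F(\theta_t)-m_{t+1}\|^2$, as the paper does, so that Lemma~\ref{lem:2} lines up with the descent step.
\end{itemize}
With $\eta\le\breve\rho/(4L)$ and $c\ge16L/\breve\rho$ you then get $\frac1T\sum_{t=0}^{T-1}\E\|\nabla F(\theta_t)\|^2\le\frac{4\hat G\Delta}{\eta T}+\frac{11}{2}T^{2-2\gamma}+\frac{2\hat Gc^2\eta\sigma^2}{L}$.

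\textbf{Second repair: reaching the stated constants.} Your displayed intermediate bound is not what the argument produces. Its square root carries $\sqrt{\hat G}$, which exceeds $\hat G$ when $\hat G<1$. So ``Jensen converts this to the stated bound'' fails as written.

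The missing observation is $\breve\rho\le\rho+\varepsilon\le G^2+\varepsilon\le\hat G$, which holds because $(\hat v_t)_j\le G^2$. It yields
\begin{itemize}
\item $2\sqrt{\hat G\Delta/(\eta T)}\le4\hat G\sqrt{2\Delta/(T\eta\breve\rho)}$,
\item $c\sigma\sqrt{2\hat G\eta/L}\le4c\sigma\hat G\sqrt\eta/\sqrt{L\breve\rho}$,
\item $\sqrt{11/2}\,T^{1-\gamma}\le(1+4\sqrt2\,\hat G/\breve\rho)T^{1-\gamma}$.
\end{itemize}
The same slack covers the shift from $\frac1T\sum_{t=0}^{T-1}$ to $\frac1{T+1}\sum_{t=0}^{T}$: you can handle it by running the recursion one more step. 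Your sketch leaves that shift implicit.
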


\begin{proof}
From our Algorithm~\ref{alg:1},
since $m_t$ is exponential moving average of $g_t$, by using Assumption~\ref{ass:g}, we have $\|m_t\|\leq G$.
Since $\theta_{t} = \theta_{t-1} - \eta (\frac{\hat{m}_t}{\hat{v}_t+\varepsilon} + \lambda \theta_{t-1})$ from the line 10 of Algorithm~\ref{alg:1}, let $\rho = \min_{t\geq 1} \min_{j\in [d]}(\hat{v}_t)_j$, we have
 \begin{align}
  \|\theta_t\| & = \| \theta_{t-1} - \eta (\frac{\hat{m}_t}{\hat{v}_t+\varepsilon} + \lambda \theta_{t-1})\| \nonumber \\
  & = \| (1-\eta\lambda)\theta_{t-1} - \eta \frac{\hat{m}_t}{\hat{v}_t+\varepsilon} \| \nonumber \\
  & \leq (1-\eta\lambda) \|\theta_{t-1}\| + \eta \| \frac{\hat{m}_t}{\hat{v}_t+\varepsilon}\| \nonumber \\
  & \leq (1-\eta\lambda) \|\theta_{t-1}\| + \frac{\eta G}{(1-\beta_1^t)(\rho+\varepsilon)} \nonumber \\
  & \leq (1-\eta\lambda)^{t}\|\theta_{0}\| + \frac{t\eta G}{(1-\beta_1)(\rho+\varepsilon)} \nonumber \\
  & \leq (t+1)\frac{\eta G}{(1-\beta_1)(\rho+\varepsilon)},
 \end{align}
 where the first inequality is due to $0\leq \lambda <\frac{1}{\eta}$, and the last inequality holds by $\|\theta_0\| \leq \frac{\eta G}{(1-\beta_1)(\rho+\varepsilon)}$. Let $\bar{G}=\frac{G}{(1-\beta_1)(\rho+\varepsilon)}$, we have
 $\|\theta_t\|\leq (t+1)\eta \bar{G}$ for all $t\geq1$.

We could rewrite the line 10 of Algorithm~\ref{alg:1}, for all $j=1,2,\cdots,d$
\begin{align}
(\theta_{t})_j & = (\theta_{t-1})_j - \eta\big(\frac{ (\hat{m}_t)_j}{(\hat{v}_t)_j+\varepsilon} -\lambda (\theta_{t-1})_j\big) \nonumber \\
& = (1-\lambda\eta)(\theta_{t-1})_j - \eta\frac{\frac{1}{1-\beta_1^t}(m_t)_j}{\frac{(v_t)_j}{1-\beta_2^t}+\varepsilon},
\end{align}
where $(\cdot)_j$ denotes the $j$-th element of vector.
By using Assumption~\ref{ass:g}, since $v_t$ is exponential moving average of $g^2_t$, we have $(v_t)_j\leq G^2$ for all $j=1,2,\cdots,d$. Then we have $\frac{1-\beta_2}{G^2+\varepsilon}\leq \frac{\frac{1}{1-\beta_1^t}}{\frac{(v_t)_j}{1-\beta_2^t}+\varepsilon} \leq \frac{1}{(1-\beta_1)(\rho+(1-\beta_2)\varepsilon)}$.
Let $H_t = \mbox{diag}\big(\frac{\frac{1}{1-\beta_1^t}}{\frac{(v_t)_j}{1-\beta_2^t}+\varepsilon}\big)$ be a diagonal matrix, we have $ \frac{1-\beta_2}{G^2+\varepsilon}I_d \preceq H_t \preceq \frac{1}{(1-\beta_1)(\rho+(1-\beta_2)\varepsilon)}I_d$. Then we can also rewrite the line 10 of Algorithm~\ref{alg:1} as follow:
\begin{align} \label{eq:E1}
\theta_{t} & = \theta_{t-1} - \eta (H_t m_t + \lambda\theta_{t-1} )\nonumber\\
& =(1-\eta\lambda)\theta_{t-1} - \eta H_t m_t \nonumber \\
&=\arg\min_{\theta\in \R^d} \Big\{ \langle m_t,\theta\rangle + \frac{1}{2\eta}\big(\theta-(1-\lambda\eta)\theta_{t-1}\big)^TH_t^{-1}
\big(\theta-(1-\lambda\eta)\theta_{t-1}\big)\Big\} .
\end{align}
By using the optimality condition of the subproblem~(\ref{eq:E1}),
we have
\begin{align} \label{eq:E2}
\langle m_t +\frac{1}{\eta}H_t^{-1}\big(\theta_{t}-(1-\lambda\eta)\theta_{t-1}\big),\theta-\theta_t\rangle \geq 0, \quad \forall \theta\in \mathbb{R}^d.
\end{align}
By putting $\theta=\theta_{t-1}$ into the above inequality~(\ref{eq:E2}), we have
\begin{align} \label{eq:E3}
\langle m_t +\frac{1}{\eta}H_t^{-1}\big(\theta_{t}-(1-\lambda\eta)\theta_{t-1}\big),\theta_{t-1}-\theta_{t}\rangle
\geq 0.
\end{align}
Thus we can obtain
\begin{align} \label{eq:E4}
\langle m_t,\theta_{t-1}-\theta_{t}\rangle &\geq \frac{1}{\eta}\langle H_t^{-1}(\theta_{t}-\theta_{t-1}),\theta_{t}-\theta_{t-1}\rangle + \lambda\langle H_t^{-1}\theta_{t-1},\theta_t-\theta_{t-1} \rangle \nonumber \\
& \geq \frac{\hat{\rho}}{\eta}\|\theta_{t}-\theta_{t-1}\|^2+ \lambda\langle H_t^{-1}\theta_{t-1},\theta_t-\theta_{t-1} \rangle,
\end{align}
where the last inequality holds by $\breve{\rho} I_d \preceq H_t^{-1} \preceq \hat{G}I_d$ with
$\breve{\rho}=(1-\beta_1)(\rho+(1-\beta_2)\varepsilon)$ and $\hat{G}=\frac{G^2+\varepsilon}{1-\beta_2}$.

 Since $\breve{\rho} I_d \preceq H_t^{-1} \preceq \hat{G}I_d$ and $\|\theta_{t-1}\|\leq t\eta \bar{G}$, we have $\|H_t^{-1}\theta_{t-1}\|^2 \leq \hat{G}^2\|\theta_{t-1}\|^2 \leq t^2\eta^2\bar{G}^2\hat{G}^2$ for all $t\geq1$.

According to Assumption~\ref{ass:s2}, i.e., $F(\theta)$ is $L$-smooth, we have
\begin{align} \label{eq:E5}
  \mathbb{E} [F(\theta_{t})]
 & \leq  \mathbb{E} [F(\theta_{t-1}) + \nabla F(\theta_{t-1})^T(\theta_{t}-\theta_{t-1}) + \frac{L}{2}\|\theta_{t}-\theta_{t-1}\|^2] \nonumber \\
 & = \mathbb{E} [F(\theta_{t-1}) + (\nabla F(\theta_{t-1})-m_{t})^T(\theta_{t}-\theta_{t-1}) + m_{t}^T(\theta_{t}-\theta_{t-1})+ \frac{L}{2}\|\theta_{t}-\theta_{t-1}\|^2] \nonumber \\
 & \mathop{\leq}^{(i)} \mathbb{E} [F(\theta_{t-1}) + \frac{\eta}{2\breve{\rho}}\|\nabla F(\theta_{t-1})-m_{t}\|^2 + \frac{\breve{\rho}}{2\eta}\|\theta_{t}-\theta_{t-1}\|^2 + m_t^T(\theta_{t}-\theta_{t-1})+ \frac{L}{2}\|\theta_{t}-\theta_{t-1}\|^2] \nonumber \\
 & \mathop{\leq}^{(ii)}  \mathbb{E} [F(\theta_{t-1}) + \frac{\eta}{2\breve{\rho}}\|\nabla F(\theta_{t-1})-m_{t}\|^2 + \frac{\breve{\rho}}{2\eta}\|\theta_{t}-\theta_{t-1}\|^2 - \frac{\breve{\rho}}{\eta_{t}}\|\theta_{t}-\theta_{t-1}\|^2 \nonumber \\
 & \quad -  \lambda\langle H_t^{-1}\theta_{t-1},\theta_t-\theta_{t-1} \rangle + \frac{L}{2}\|\theta_{t}-\theta_{t-1}\|^2] \nonumber \\
 & \leq \mathbb{E} [F(\theta_{t-1}) + \frac{\eta}{2\breve{\rho}}\|\nabla F(\theta_{t-1})-m_{t}\|^2 + \frac{\breve{\rho}}{2\eta}\|\theta_{t}-\theta_{t-1}\|^2 - \frac{\breve{\rho}}{\eta}\|\theta_{t}-\theta_{t-1}\|^2 \nonumber \\
 & \quad + \frac{\lambda^2\eta}{\breve{\rho}}\| H_t^{-1}\theta_{t-1}\|^2+ \frac{\breve{\rho}}{4\eta}\|\theta_t-\theta_{t-1}\|^2 + \frac{L}{2}\|\theta_{t}-\theta_{t-1}\|^2] \nonumber \\
 & \leq \mathbb{E} [F(\theta_{t-1}) + \frac{\eta}{\breve{\rho}T^{2\gamma-2}}+ \frac{\eta}{2\breve{\rho}}\|\nabla F(\theta_{t-1})-m_{t}\|^2  - \frac{\breve{\rho}}{8\eta}\|\theta_{t}-\theta_{t-1}\|^2],
\end{align}
where the above inequality $(i)$ holds by Young's inequality, the above inequality $(ii)$ follows by
 the above inequality~(\ref{eq:E4}), and the last inequality holds by $0< \eta \leq \frac{\breve{\rho}}{4L}$
 and $0\leq \lambda \leq \frac{1}{\eta T^\gamma \bar{G}\hat{G}}$.

We define a useful Lyapunov function $\Phi_t = F(\theta_t) + \frac{1}{2L}\|\nabla F(\theta_{t}) - m_{t+1}\|^2$.
Then we have
 \begin{align}
  \E[\Phi_{t} -\Phi_{t-1} ]  & = \E [F(\theta_{t})] - \E [F(\theta_{t-1}) ] + \frac{1}{2L} (\E \|\nabla F(\theta_{t}) - m_{t+1}\|^2 - \E \|\nabla F(\theta_{t-1}) - m_{t}\|^2 ) \nonumber \\
 & \leq \frac{\eta}{\breve{\rho}T^{2\gamma-2}}+ \frac{\eta}{2\breve{\rho}}\E \|\nabla F(\theta_{t-1})-m_{t}\|^2  - \frac{\breve{\rho}}{8\eta} \E \|\theta_{t}-\theta_{t-1}\|^2 \nonumber \\
 & \quad + \frac{1}{2L} \big( -c\eta\mathbb{E} \|\nabla F(\theta_{t-1}) - m_{t} \|^2 + \frac{2}{c \eta}L^2\mathbb{E}\|\theta_t-\theta_{t-1}\|^2  + c^2\eta^2\sigma^2\big) \nonumber \\
 & \mathop{\leq}^{(i)}  \frac{\eta}{\breve{\rho}T^{2\gamma-2}}- \frac{c\eta}{4L}\E \|\nabla F(\theta_{t-1})-m_{t}\|^2  - \frac{\breve{\rho}}{16\eta} \E \|\theta_{t}-\theta_{t-1}\|^2 + \frac{c^2\eta^2\sigma^2}{2L} \nonumber \\
 & \leq \frac{\eta}{\breve{\rho}T^{2\gamma-2}}- \frac{4\eta}{\breve{\rho}}\E \|\nabla F(\theta_{t-1})-m_{t}\|^2  - \frac{\breve{\rho}}{16\eta} \E \|\theta_{t}-\theta_{t-1}\|^2 + \frac{c^2\eta^2\sigma^2}{2L},
 \end{align}
where the first inequality holds by Lemma~\ref{lem:A2}, and the above inequality $(i)$ holds by $c\geq \frac{16L}{\breve{\rho}}$ such as
$\frac{\breve{\rho}}{16\eta}\geq \frac{L}{c\eta}$ and $ \frac{c\eta}{4L}\geq \frac{c\eta}{32L} \geq \frac{\eta}{2\breve{\rho}}$, and
the last inequality also is due to $c\geq \frac{16L}{\breve{\rho}}$.
Then we have
\begin{align} \label{eq:E6}
    \frac{4\eta}{\breve{\rho}}\E \|\nabla F(\theta_{t-1})-m_{t}\|^2 + \frac{\breve{\rho}}{16\eta} \E \|\theta_{t}-\theta_{t-1}\|^2 \leq \E[\Phi_{t-1} - \Phi_{t}] + \frac{\eta}{\breve{\rho}T^{2\gamma-2}} + \frac{c^2\eta^2\sigma^2}{2L}.
 \end{align}
By multiplying both sides of the above inequality~(\ref{eq:E6}) by $\frac{16}{\eta\breve{\rho}}$, we can obtain
\begin{align}
     \frac{1}{\breve{\rho}^2}\E \|\nabla F(\theta_{t-1})-m_{t}\|^2 + \frac{1}{\eta^2} \E \|\theta_{t}-\theta_{t-1}\|^2
    & \leq \frac{4}{\breve{\rho}^2}\E \|\nabla F(\theta_{t-1})-m_{t}\|^2 + \frac{1}{\eta^2} \E \|\theta_{t}-\theta_{t-1}\|^2 \nonumber \\
    & \leq \E[\frac{16(\Phi_{t-1} - \Phi_{t})}{\eta\breve{\rho}}] + \frac{16}{T^{2\gamma-2}\breve{\rho}^2} + \frac{8c^2\eta\sigma^2}{L\breve{\rho}}.
 \end{align}

Since $m_1 = \beta_{1}m_{0}+ (1-\beta_{1})g_1=(1-\beta_{1})\nabla f(\theta_{0};z_{1})$, we have
\begin{align}
 \|\nabla F(\theta_0) - m_1\|^2 & = \|\nabla F(\theta_0) - (1-\beta_{1})\nabla f(\theta_{0};z_{1})\|^2 \nonumber \\
 & =\|\nabla F(\theta_0) - \nabla f(\theta_{0};z_{1}) + \beta_1\nabla f(\theta_{0};z_{1})\|^2  \leq 2\sigma^2 + 2\beta_1^2G^2.
\end{align}
Given $\Phi_0 = F(\theta_0) + \frac{1}{2L}\|\nabla F(\theta_{0}) - m_1\|^2$, we have
\begin{align} \label{eq:E7}
    & \frac{1}{T} \sum_{t=1}^T [\frac{1}{\breve{\rho}^2}\E \|\nabla F(\theta_{t-1})-m_{t}\|^2 + \frac{1}{\eta^2} \E \|\theta_{t}-\theta_{t-1}\|^2] \nonumber \\
    & \leq \frac{1}{T} \sum_{t=1}^T\E[\frac{16(\Phi_{t-1} - \Phi_{t})}{\eta\breve{\rho}}] + \frac{16}{T^{2\gamma-2}\breve{\rho}^2} + \frac{8c^2\eta\sigma^2}{L\breve{\rho}} \nonumber \\
    &\leq \frac{16(F(\theta_0) + \frac{1}{2L}(2\sigma^2 + 2\beta_1^2G^2) - F^*)}{T\eta\breve{\rho}} + \frac{16}{T^{2\gamma-2}\breve{\rho}^2} + \frac{8c^2\eta\sigma^2}{L\breve{\rho}}.
 \end{align}
Let $\Delta = F(\theta_0) + \frac{1}{L}(\sigma^2 + \beta_1^2G^2) - F^*$,
we can rewrite the above inequality~(\ref{eq:E6}) as follows:
\begin{align}
    & \frac{1}{T} \sum_{t=1}^T [\frac{1}{\breve{\rho}^2}\E \|\nabla F(\theta_{t-1})-m_{t}\|^2 + \frac{1}{\eta^2} \E \|\theta_{t}-\theta_{t-1}\|^2] \nonumber \\
    &\leq \frac{16\Delta}{T\eta\breve{\rho}} + \frac{16}{T^{2\gamma-2}\breve{\rho}^2} + \frac{8c^2\eta\sigma^2}{L\breve{\rho}}.
 \end{align}
According to the Jensen’s inequality, then we can obtain
\begin{align}  \label{eq:E8}
 & \frac{1}{T}\sum_{t=1}^T\mathbb{E}[\frac{1}{\breve{\rho}} \|\nabla F(\theta_{t-1})-m_{t}\| + \frac{1}{\eta}  \|\theta_{t}-\theta_{t-1}\| ] \nonumber \\
 & \leq  \Big(\frac{2}{T}\sum_{t=1}^T\mathbb{E}[\frac{1}{\breve{\rho}^2} \|\nabla F(\theta_{t-1})-m_{t}\|^2 + \frac{1}{\eta^2} \|\theta_{t}-\theta_{t-1}\|^2]\Big)^{1/2} \nonumber \\
 & \leq  \sqrt{\frac{32\Delta}{T\eta\breve{\rho}} + \frac{32}{T^{2\gamma-2}\breve{\rho}^2} + \frac{16c^2\eta\sigma^2}{L\breve{\rho}}}
  \leq \frac{4\sqrt{2\Delta}}{\sqrt{T\eta\breve{\rho}}} + \frac{4\sqrt{2}}{T^{\gamma-1}\breve{\rho}} +
  \frac{4c\sigma\sqrt{\eta}}{\sqrt{L\breve{\rho}}}.
\end{align}

By using $\theta_{t} = \theta_{t-1} - \eta (H_t m_t + \lambda\theta_{t-1} )$, we have
\begin{align} \label{eq:E9}
 \frac{1}{\breve{\rho}}\|\nabla F(\theta_{t-1})-m_{t}\|  + \frac{1}{\eta}\|\theta_{t}-\theta_{t-1}\|  & =  \frac{1}{\breve{\rho}}\|\nabla F(\theta_{t-1})-m_{t}\|  + \frac{1}{\eta}\|\eta (H_t m_t + \lambda\theta_{t-1} )\| \nonumber \\
 & \geq \frac{1}{\breve{\rho}}\|\nabla F(\theta_{t-1})-m_{t}\|  + \|H_tm_t\| - \|\lambda\theta_{t-1}\|  \nonumber \\
 & =\frac{1}{\breve{\rho}}\|H_t^{-1}H_t(\nabla F(\theta_{t-1})-m_{t})\|  + \|H_t m_t\| - \|\lambda\theta_{t-1}\| \nonumber \\
 & \geq \|H_t(\nabla F(\theta_{t-1})-m_{t})\|  + \|H_t m_t\| - \|\lambda\theta_{t-1}\| \nonumber \\
 & \geq \|H_t\nabla F(\theta_{t-1})\| - \|\lambda\theta_{t-1}\| \nonumber \\
 & \geq \frac{\|\nabla F(\theta_{t-1})\|}{\hat{G}}- \|\lambda\theta_{t-1}\|,
\end{align}
where the above inequality holds by $\breve{\rho} I_d \preceq H_t^{-1} \preceq \hat{G}I_d$
and $\frac{1}{\hat{G}}I_d \preceq H_t\preceq \frac{1}{\breve{\rho} }I_d$.

By putting the above inequalities~(\ref{eq:E9}) into~(\ref{eq:E8}), we can obtain
\begin{align}
  \frac{1}{T}\sum_{t=1}^T\mathbb{E}\|\nabla F(\theta_{t-1})\|
 & \leq  \frac{\hat{G}}{T}\sum_{t=1}^T\mathbb{E}[\frac{1}{\breve{\rho}}\|\nabla F(\theta_{t-1})-m_{t}\|  + \frac{1}{\eta}\|\theta_{t}-\theta_{t-1}\|+\|\lambda\theta_{t-1}\| ] \nonumber \\
 & \leq \hat{G}\big(\frac{4\sqrt{2\Delta}}{\sqrt{T\eta\breve{\rho}}} + \frac{4\sqrt{2}}{T^{\gamma-1}\breve{\rho}} +
  \frac{4c\sigma\sqrt{\eta}}{\sqrt{L\breve{\rho}}}\big) + \frac{\hat{G}}{T}\sum_{t=1}^T\E\|\lambda\theta_{t-1}\| \nonumber \\
 & \mathop{\leq}^{(i)} \hat{G}\big(\frac{4\sqrt{2\Delta}}{\sqrt{T\eta\breve{\rho}}} + \frac{4\sqrt{2}}{T^{\gamma-1}\breve{\rho}} +
  \frac{4c\sigma\sqrt{\eta}}{\sqrt{L\breve{\rho}}}\big) + \frac{\hat{G}}{T}\sum_{t=1}^T\lambda t\eta \bar{G} \nonumber \\
 & \mathop{\leq}^{(ii)} \hat{G}\big(\frac{4\sqrt{2\Delta}}{\sqrt{T\eta\breve{\rho}}} + \frac{4\sqrt{2}}{T^{\gamma-1}\breve{\rho}} +
  \frac{4c\sigma\sqrt{\eta}}{\sqrt{L\breve{\rho}}}\big) + \frac{\hat{G}}{T}\sum_{t=1}^T \frac{1}{\eta T^\gamma \bar{G}\hat{G}}t\eta \bar{G}  \nonumber \\
 & \leq \hat{G}\big(\frac{4\sqrt{2\Delta}}{\sqrt{T\eta\breve{\rho}}} + \frac{4\sqrt{2}}{T^{\gamma-1}\breve{\rho}} +
  \frac{4c\sigma\sqrt{\eta}}{\sqrt{L\breve{\rho}}}\big)+ \frac{1}{T^{\gamma-1}},
\end{align}
where the above inequality~$(i)$ is due to $\|\theta_{t-1}\| \leq t\eta \bar{G}$, and
the above inequality~$(ii)$ holds by $0\leq \lambda \leq \frac{1}{\eta T^\gamma \bar{G}\hat{G}}$.
Then we have
\begin{align}
  \frac{1}{T+1}\sum_{t=0}^T\mathbb{E}\|\nabla F(\theta_{t})\|
  & \leq \hat{G}\big(\frac{4\sqrt{2\Delta}}{\sqrt{(T+1)\eta\breve{\rho}}} + \frac{4\sqrt{2}}{(T+1)^{\gamma-1}\breve{\rho}} +
  \frac{4c\sigma\sqrt{\eta}}{\sqrt{L\breve{\rho}}}\big)+ \frac{1}{(T+1)^{\gamma-1}} \nonumber \\
  & \leq \hat{G}\big(\frac{4\sqrt{2\Delta}}{\sqrt{T\eta\breve{\rho}}} + \frac{4\sqrt{2}}{T^{\gamma-1}\breve{\rho}} +
  \frac{4c\sigma\sqrt{\eta}}{\sqrt{L\breve{\rho}}}\big)+ \frac{1}{T^{\gamma-1}}.
\end{align}

Let $\eta=\frac{1}{\sqrt{T}}$ and $\gamma=\frac{3}{4}$, we can obtain
\begin{align}
  \frac{1}{T+1}\sum_{t=0}^T\mathbb{E}\|\nabla F(\theta_{t})\|
   \leq  \hat{G}\big(\frac{4\sqrt{2\Delta}}{\sqrt{\breve{\rho}}T^{1/4}} + \frac{4\sqrt{2}}{\breve{\rho}T^{1/4}} +
  \frac{4c\sigma}{\sqrt{L\breve{\rho}}T^{1/4}}\big)+ \frac{1}{T^{1/4}}.
\end{align}

Further let $G=O(1)$, $L=O(1)$, $\sigma=O(1)$, $\beta_1=O(1)$ with $\beta_1\in(0,1)$, $\beta_2=O(1)$ with $\beta_2\in(0,1)$ and $c=O(1)$, we have $\hat{G}=\frac{G^2+\varepsilon}{1-\beta_2}=O(1)$ and $\Delta = F(\theta_0) + \frac{1}{L}(\sigma^2 + \beta_1^2G^2) - F^*=O(1)$. Since $\breve{\rho}=(1-\beta_1)(\rho+(1-\beta_2)\varepsilon) \leq \rho+\varepsilon$ and $\varepsilon$ is very small, $\breve{\rho}$ is very small. Then we have
\begin{align}
  \frac{1}{T+1}\sum_{t=0}^T\mathbb{E}\|\nabla F(\theta_{t})\|  \leq O(\frac{1}{\breve{\rho}T^{1/4}}).
\end{align}

\end{proof}

\section{Convergence Analysis of HomeAdam(W) Algorithms}
\label{ap:ca2}
In this section, we provide a detailed convergence analysis of our HomeAdam(W) algorithms.

\begin{lemma} \label{lem:A3}
Assume the sequence $\{m_t\}_{t=0}^T$ is generated from Algorithm~\ref{alg:2}, let $\beta_1 = 1-c\eta\in (0,1)$, $\beta_2\in (0,1)$, we have
\begin{align}
 \E\|\nabla F(\theta_{t}) - m_{t+1}\|^2   \leq (1-c\eta)\mathbb{E} \|\nabla F(\theta_{t-1}) - m_{t} \|^2 + \frac{2}{c \eta}L^2\mathbb{E}\|\theta_t-\theta_{t-1}\|^2  + c^2\eta^2\sigma^2,
\end{align}
where $c>0$.
\end{lemma}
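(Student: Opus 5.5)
The key observation is that the recursion for $m_t$ in Algorithm~\ref{alg:2} (line 6) is identical to that of Algorithm~\ref{alg:1}: $m_{t+1}=\beta_1 m_t+(1-\beta_1)\nabla f(\theta_t;z_{t+1})$. The branch taken at lines 10--13 only affects how $\theta_t$ is produced, not how $m_{t+1}$ is built from $\theta_t$. So I would repeat the argument of Lemma~\ref{lem:A2} verbatim, since that argument never used the specific form of the $\theta$-update.

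The steps, in order, are as follows.

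First, decompose
\begin{align*}
\nabla F(\theta_t)-m_{t+1}=\beta_1\big(\nabla F(\theta_{t-1})-m_t\big)+\beta_1\big(\nabla F(\theta_t)-\nabla F(\theta_{t-1})\big)+(1-\beta_1)\big(\nabla F(\theta_t)-\nabla f(\theta_t;z_{t+1})\big).
\end{align*}

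Second, condition on the history up to $\theta_t$. Note that $\theta_t$ depends only on $z_1,\dots,z_t$, and hence so does the branch indicator $\min_j(\hat v_t)_j\geq\tau$. By Assumption~\ref{ass:v}, the last term therefore has zero conditional mean given $\theta_{t-1},\theta_t,m_t$, and the cross term vanishes. This yields
\begin{align*}
\beta_1^2\,\E\|\nabla F(\theta_{t-1})-m_t+\nabla F(\theta_t)-\nabla F(\theta_{t-1})\|^2+(1-\beta_1)^2\sigma^2 .
\end{align*}

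Third, apply Young's inequality with parameter $1-\beta_1$. Then use the elementary bounds $\beta_1^2(2-\beta_1)\leq\beta_1$ and $\beta_1^2\bigl(1+\tfrac{1}{1-\beta_1}\bigr)\leq\tfrac{2}{1-\beta_1}$, together with $L$-smoothness of $F$ (Assumption~\ref{ass:s2}), to bound $\|\nabla F(\theta_t)-\nabla F(\theta_{t-1})\|^2\leq L^2\|\theta_t-\theta_{t-1}\|^2$.

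Finally, substitute $\beta_1=1-c\eta$, which gives exactly the stated inequality.

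The only point needing care is the second step, i.e.\ the measurability and unbiasedness claim. I must check that the switching rule does not introduce dependence of $\theta_t$ on $z_{t+1}$. It does not: the switch at iteration $t$ uses $\hat v_t$, which is computed from $g_1,\dots,g_t$. The fresh sample $z_{t+1}$ is thus independent of everything it is combined with.

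Beyond that, the proof is a direct transcription of Lemma~\ref{lem:A2}. No bound on $H_t$, $\tau$, or $\rho$ enters, so the constants are unchanged.
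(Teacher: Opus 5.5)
Your proposal is correct and takes the same route as the paper, whose proof of this lemma just defers to the proof of Lemma~\ref{lem:A2}. That proof uses the same decomposition, the unbiasedness cancellation of the cross term, Young's inequality with parameter $1-\beta_1$ and the two elementary bounds, and the substitution $\beta_1=1-c\eta$. Your explicit check that the switching test at iteration $t$ depends only on $z_1,\dots,z_t$ is the one point the paper leaves implicit, and you handle it correctly: it ensures $z_{t+1}$ remains a fresh sample.
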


\begin{proof}
This proof can follows the above proof of Lemma~\ref{lem:A2}.

\end{proof}

\begin{theorem} (Restatement of Theorem~\ref{th:4})
Assume the sequence $\{\theta_t\}_{t=0}^T$ is generated
from Algorithm~\ref{alg:2}. Under the Assumptions~\ref{ass:s2},~\ref{ass:g},~\ref{ass:v},~\ref{ass:f}, and let $0\leq \lambda < \min(\frac{1}{\eta},\frac{1}{\eta T^\gamma \tilde{G}\hat{G}})$, $\|\theta_0\| \leq \eta \tilde{G}$, $c\geq \frac{32L}{\breve{\tau}}$, $\beta_1 = 1-c\eta\in (0,1)$, $\beta_2\in (0,1)$, $0<\eta \leq \frac{\hat{\tau}}{4L}$ and $\tau>0$, we have
\begin{align}
\frac{1}{T+1}\sum_{t=0}^T\mathbb{E}\|\nabla F(\theta_{t})\|
   \leq \frac{8\sqrt{\Delta}\breve{G}}{\sqrt{{T}\eta\breve{\tau}}}
  + \frac{8\breve{G}}{\breve{\tau}T^{\gamma-1}} +
  \frac{4\sqrt{2\eta}c\sigma\breve{G}}{\sqrt{L\breve{\tau}}}+ \frac{\breve{G}}{\hat{G}T^{\gamma-1}},
\end{align}
where $\tilde{G}=\max(\frac{ G}{(1-\beta_1)(\tau+(1-\beta_2)\varepsilon)},\frac{G}{1-\beta_1})$, $\hat{G} =\frac{G^2+\varepsilon}{1-\beta_2}$, $\breve{G} = \max(1,\frac{G^2+\varepsilon}{1-\beta_2})$, $\hat{\tau} = (1-\beta_1)(\tau + (1-\beta_2)\varepsilon)$ and $\breve{\tau}=\min(1-\beta_1,\hat{\tau})$.
\end{theorem}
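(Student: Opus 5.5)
The plan is to follow the proof of Theorem~\ref{th:3} almost verbatim. The only change is that the preconditioner of Algorithm~\ref{alg:2} switches between two regimes, so we need uniform two-sided spectral bounds that hold in both branches.

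\textbf{Step 1: unified preconditioner.} Write the update as $\theta_t=(1-\eta\lambda)\theta_{t-1}-\eta H_t m_t$. In the adaptive branch, $\min_j(\hat v_t)_j\ge\tau$, and we set $H_t=\mathrm{diag}\big(\frac{1/(1-\beta_1^t)}{(v_t)_j/(1-\beta_2^t)+\varepsilon}\big)$. In the SGDM branch we set $H_t=\frac{1}{1-\beta_1^t}I_d$. As in the proof of Theorem~\ref{th:3}, the bound $(v_t)_j\le G^2$ gives $\frac{1-\beta_2}{G^2+\varepsilon}I_d\preceq H_t\preceq\frac{1}{\hat\tau}I_d$ in the adaptive branch. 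In the SGDM branch, $I_d\preceq H_t\preceq\frac{1}{1-\beta_1}I_d$. Combining the two cases yields
$$\frac{1}{\breve G}I_d\preceq H_t\preceq\frac{1}{\breve\tau}I_d,\qquad\text{equivalently}\qquad \breve\tau I_d\preceq H_t^{-1}\preceq \breve G I_d .$$
The same case split together with $\|m_t\|\le G$ gives $\|H_tm_t\|\le\tilde G$. Unrolling the recursion with $\|\theta_0\|\le\eta\tilde G$ then gives $\|\theta_{t}\|\le (t+1)\eta\tilde G$.

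\textbf{Step 2: descent inequality.} The update is still the minimizer of $\langle m_t,\theta\rangle+\frac{1}{2\eta}(\theta-(1-\lambda\eta)\theta_{t-1})^TH_t^{-1}(\theta-(1-\lambda\eta)\theta_{t-1})$. Its optimality condition gives
$$\langle m_t,\theta_{t-1}-\theta_t\rangle\ge\frac{\breve\tau}{\eta}\|\theta_t-\theta_{t-1}\|^2+\lambda\langle H_t^{-1}\theta_{t-1},\theta_t-\theta_{t-1}\rangle .$$
We combine this with $L$-smoothness and Young's inequality, weighted by $\breve\tau$. The weight-decay term is bounded using $\|H_t^{-1}\theta_{t-1}\|\le \breve G t\eta\tilde G$ and $\lambda\le 1/(\eta T^\gamma\tilde G\hat G)$. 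Since $\breve G$ may exceed $\hat G$, this leaves a factor $\breve G/\hat G$, which is exactly what produces the last term of the bound. The step-size conditions then yield a per-step decrease
$$-\frac{\breve\tau}{8\eta}\|\theta_t-\theta_{t-1}\|^2+\frac{\eta}{2\breve\tau}\|\nabla F(\theta_{t-1})-m_t\|^2+O\!\left(\frac{\eta}{\breve\tau T^{2\gamma-2}}\right).$$
Checking that $\eta\le\hat\tau/(4L)$ suffices when the effective constant is $\breve\tau$ rather than $\hat\tau$ is the main bookkeeping obstacle. I would first try using $\breve\tau\le 1-\beta_1$ together with the larger constant $c\ge 32L/\breve\tau$ to absorb the $L/2$ term. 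If that fails, I would weaken the constant to $\frac{\breve\tau}{16\eta}$, which is what the doubled constants ($8$ instead of $4\sqrt2$) in the statement suggest.

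\textbf{Step 3: Lyapunov argument and gradient bound.} Take $\Phi_t=F(\theta_t)+\frac{1}{2L}\|\nabla F(\theta_t)-m_{t+1}\|^2$ and use Lemma~\ref{lem:A3} together with $c\ge 32L/\breve\tau$. Telescoping gives an average bound on $\frac{1}{\breve\tau^2}\|\nabla F(\theta_{t-1})-m_t\|^2+\frac{1}{\eta^2}\|\theta_t-\theta_{t-1}\|^2$ in terms of $\Delta$. Jensen's inequality converts this to a bound on the average of the square roots. Finally, arguing as in (\ref{eq:E9}) with $\frac{1}{\breve G}I_d\preceq H_t\preceq\frac{1}{\breve\tau}I_d$, we get
$$\|\nabla F(\theta_{t-1})\|\le \breve G\Big(\tfrac{1}{\breve\tau}\|\nabla F(\theta_{t-1})-m_t\|+\tfrac1\eta\|\theta_t-\theta_{t-1}\|+\lambda\|\theta_{t-1}\|\Big).$$
The term $\breve G\lambda\|\theta_{t-1}\|$ averages to at most $\breve G/(\hat G T^{\gamma-1})$, and the stated bound follows.
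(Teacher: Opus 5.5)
Your proposal is essentially the paper's proof: the same bound $\|\theta_t\|\le(t+1)\eta\tilde G$, the same optimality-condition descent step, the Lyapunov function $F(\theta_t)+\frac{1}{2L}\|\nabla F(\theta_t)-m_{t+1}\|^2$ with Lemma~\ref{lem:A3}, Jensen, and the sandwich $\frac{1}{\breve G}I_d\preceq H_t\preceq\frac{1}{\breve\tau}I_d$ to recover $\|\nabla F(\theta_{t-1})\|$.

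The only real difference is where the two branches are merged. The paper derives (\ref{eq:F6}) with $\hat\tau$ and (\ref{eq:F9}) with $1-\beta_1$ separately and then adds them. That addition weakens the constant to $\frac{\breve\tau}{16\eta}$ in (\ref{eq:F10}), which in turn forces $c\ge 32L/\breve\tau$ and the constants $8$. It also tacitly needs $F(\theta_{t-1})\ge F(\theta_t)$. You instead use $H_t^{-1}\succeq\breve\tau I_d$ pathwise in the optimality condition. This gives $-\frac{\breve\tau}{8\eta}\|\theta_t-\theta_{t-1}\|^2$ in every branch, hence constants $4\sqrt2$, which imply the stated ones.

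The obstacle you flag is resolved by your first idea. Since $\breve\tau\le1-\beta_1<1$, the hypothesis $c\ge32L/\breve\tau$ gives $c>32L$. Because $1-\beta_1=c\eta$, this yields $\eta<\frac{1-\beta_1}{32L}$. Combined with $\eta\le\frac{\hat\tau}{4L}$, you get $\eta\le\frac{\breve\tau}{4L}$, so the $\frac L2$ term is absorbed and no fallback is needed. The paper's SGDM-branch bound (\ref{eq:F8}) silently uses the same fact.

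One correction to your Step 2. Bounding $\|H_t^{-1}\theta_{t-1}\|\le\breve G\,t\eta\tilde G$ puts a factor $(\breve G/\hat G)^2$ into the $T^{2\gamma-2}$ error of the descent inequality. After Jensen, that factor multiplies the $\frac{1}{\breve\tau T^{\gamma-1}}$ term; it is not what produces the last term. The last term comes only from $\breve G\lambda\|\theta_{t-1}\|$ in the final step, as your own Step 3 says.

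This is harmless when $\hat G\ge1$, since then $\breve G=\hat G$. Your slack ($4\sqrt2$ versus $8$) also covers $\breve G/\hat G\le\sqrt2$. For smaller $\hat G$, the SGDM branch, where only $H_t^{-1}\preceq I_d$ is available, does cost a factor $1/\hat G^2$ there, and the stated second term no longer follows. The paper's (\ref{eq:F8}) drops the same factor: it writes $\frac{\eta}{T^{2\gamma-2}}$ where the argument gives $\frac{\eta}{(1-\beta_1)\hat G^2T^{2\gamma-2}}$. So, by your route or the paper's, the bound should carry a caveat such as $\hat G\ge1$ (i.e.\ $G^2+\varepsilon\ge1-\beta_2$), which is mild in practice.
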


\begin{proof}
From our Algorithm~\ref{alg:2},
since $m_t$ is exponential moving average of $g_t$, by using Assumption~\ref{ass:g}, we have $\|m_t\|\leq G$.
When $\min_{1\leq j\leq d} (v_t)_j \geq \tau>0$, we have $\theta_{t} = \theta_{t-1} - \eta (\frac{\hat{m}_t}{\hat{v}_t+\varepsilon} + \lambda \theta_{t-1})$. Let $0\leq \lambda <\frac{1}{\eta}$, we have
 \begin{align}
  \|\theta_t\| & = \| \theta_{t-1} - \eta (\frac{\hat{m}_t}{\hat{v}_t+\varepsilon} + \lambda \theta_{t-1})\| \nonumber \\
  & = \| (1-\eta\lambda)\theta_{t-1} - \eta \frac{\hat{m}_t}{\hat{v}_t+\varepsilon} \| \nonumber \\
  & \leq (1-\eta\lambda) \|\theta_{t-1}\| + \eta \| \frac{\frac{m_t}{1-\beta_1^t}}{\frac{v_t}{1-\beta_2^t}+\varepsilon}\| \nonumber \\
  & \leq (1-\eta\lambda) \|\theta_{t-1}\| + \frac{\eta G}{(1-\beta_1)(\tau+(1-\beta_2)\varepsilon)} \nonumber \\
  & \leq (1-\eta\lambda)^{t}\|\theta_{0}\| + \frac{t\eta G}{(1-\beta_1)(\tau+(1-\beta_2)\varepsilon)} \nonumber \\
  & \leq \frac{(t+1)\eta G}{(1-\beta_1)(\tau+(1-\beta_2)\varepsilon)},
 \end{align}
 where the last inequality holds by $\|\theta_0\| \leq \frac{\eta G}{(1-\beta_1)(\tau+(1-\beta_2)\varepsilon)}$.

When $\min_{1\leq j\leq d} (v_t)_j < \tau$, we have $\theta_{t} = \theta_{t-1} - \eta (\hat{m}_t+\lambda\theta_{t-1})$. Let $0\leq \lambda <\frac{1}{\eta}$, we have
 \begin{align}
  \|\theta_t\| & = \| \theta_{t-1} - \eta (\hat{m}_t+\lambda\theta_{t-1})\| \nonumber \\
  & = \| (1-\eta\lambda)\theta_{t-1} - \eta \hat{m}_t \| \nonumber \\
  & \leq (1-\eta\lambda) \|\theta_{t-1}\| + \eta \| \frac{m_t}{1-\beta_1^t}\| \nonumber \\
  & \leq (1-\eta\lambda) \|\theta_{t-1}\| + \frac{\eta G}{1-\beta_1} \nonumber \\
  & \leq (1-\eta\lambda)^{t}\|\theta_{0}\| + \frac{t\eta G}{1-\beta_1} \nonumber \\
  & \leq \frac{(t+1)\eta G}{1-\beta_1},
 \end{align}
 where the last inequality holds by $\|\theta_0\| \leq \frac{\eta G}{1-\beta_1}$.

Let $\tilde{G}=\max(\frac{ G}{(1-\beta_1)(\tau+(1-\beta_2)\varepsilon)},\frac{G}{1-\beta_1})$, we have
 $\|\theta_t\|\leq (t+1) \eta\tilde{G}$.

When $\min_{1\leq j\leq d} (v_t)_j \geq \tau>0$, we could rewrite the line 11 of Algorithm~\ref{alg:2}, for all $j=1,2,\cdots,d$
\begin{align}
(\theta_{t})_j & = (\theta_{t-1})_j - \eta\big(\frac{ (\hat{m}_t)_j}{(\hat{v}_t)_j+\varepsilon} -\lambda (\theta_{t-1})_j\big) \nonumber \\
& = (1-\lambda\eta)(\theta_{t-1})_j - \eta\frac{ \frac{1}{1-\beta_1^t}(m_t)_j}{\frac{1}{1-\beta_2^t}(v_t)_j+\varepsilon},
\end{align}
where $(\cdot)_j$ denotes the $j$-th element of vector.
 Then we have $\frac{1-\beta_2}{G^2+\varepsilon} \leq \frac{\frac{1}{1-\beta_1^t}}{\frac{1}{1-\beta_2^t}(v_t)_j+\varepsilon} \leq \frac{1}{(1-\beta_1)(\tau + (1-\beta_2)\varepsilon)}$.
Let $H_t = \mbox{diag}(\frac{ \frac{1}{1-\beta_1^t}}{\frac{1}{1-\beta_2^t}v_t+\varepsilon})$ be a diagonal matrix, we have $\frac{1-\beta_2}{G^2+\varepsilon} I_d \preceq H_t \preceq \frac{1}{(1-\beta_1)(\tau + (1-\beta_2)\varepsilon)}I_d$. Then we can also rewrite the line 11 of Algorithm~\ref{alg:2} as follows:
\begin{align} \label{eq:F1}
\theta_{t} & = \theta_{t-1} - \eta (H_t m_t + \lambda\theta_{t-1} )\nonumber\\
& =(1-\eta\lambda)\theta_{t-1} - \eta H_t m_t \nonumber \\
&=\arg\min_{\theta\in \R^d} \Big\{ \langle m_t,\theta\rangle + \frac{1}{2\eta}\big(\theta-(1-\lambda\eta)\theta_{t-1}\big)^TH_t^{-1}
\big(\theta-(1-\lambda\eta)\theta_{t-1}\big)\Big\} .
\end{align}
By using the optimality condition of the subproblem~(\ref{eq:F1}),
we have
\begin{align} \label{eq:F2}
\langle m_t +\frac{1}{\eta}H_t^{-1}\big(\theta_{t}-(1-\lambda\eta)\theta_{t-1}\big),\theta-\theta_t\rangle \geq 0, \quad \forall \theta\in \mathbb{R}^d.
\end{align}
By putting $\theta=\theta_{t-1}$ into the above inequality~(\ref{eq:F2}), we have
\begin{align} \label{eq:F3}
\langle m_t +\frac{1}{\eta}H_t^{-1}\big(\theta_{t}-(1-\lambda\eta)\theta_{t-1}\big),\theta_{t-1}-\theta_{t}\rangle \geq 0.
\end{align}
Let $\hat{\tau} = (1-\beta_1)(\tau + (1-\beta_2)\varepsilon)$, we can obtain
\begin{align} \label{eq:F4}
\langle m_t,\theta_{t-1}-\theta_{t}\rangle &\geq \frac{1}{\eta}\langle H_t^{-1}(\theta_{t}-\theta_{t-1}),\theta_{t}-\theta_{t-1}\rangle + \lambda\langle H_t^{-1}\theta_{t-1},\theta_t-\theta_{t-1} \rangle \nonumber \\
& \geq \frac{\hat{\tau}}{\eta}\|\theta_{t}-\theta_{t-1}\|^2+ \lambda\langle H_t^{-1}\theta_{t-1},\theta_t-\theta_{t-1} \rangle.
\end{align}
Since $\frac{1-\beta_2}{G^2+\varepsilon} I_d \preceq H_t \preceq \frac{1}{(1-\beta_1)(\tau + (1-\beta_2)\varepsilon)}I_d$, let $\hat{\tau} =(1-\beta_1)(\tau + (1-\beta_2)\varepsilon) $ and
 $\hat{G} =\frac{G^2+\varepsilon}{1-\beta_2}$, we have $\hat{\tau}I_d \preceq H_t^{-1} \preceq \hat{G} I_d$ and
 $\|H_t^{-1}\theta_{t-1}\|^2 \leq \hat{G}^2\|\theta_{t-1}\|^2 \leq t^2\eta^2\tilde{G}^2\hat{G}^2$.

According to Assumption~\ref{ass:s2}, i.e., $F(\theta)$ is $L$-smooth, we have
\begin{align} \label{eq:F5}
 \mathbb{E} [F(\theta_{t})]
 & \leq  \mathbb{E} [F(\theta_{t-1}) + \nabla F(\theta_{t-1})^T(\theta_{t}-\theta_{t-1}) + \frac{L}{2}\|\theta_{t}-\theta_{t-1}\|^2] \nonumber \\
 & = \mathbb{E} [F(\theta_{t-1}) + (\nabla F(\theta_{t-1})-m_{t})^T(\theta_{t}-\theta_{t-1}) + m_{t}^T(\theta_{t}-\theta_{t-1})+ \frac{L}{2}\|\theta_{t}-\theta_{t-1}\|^2] \nonumber \\
 & \leq \mathbb{E} [F(\theta_{t-1}) + \frac{\eta}{2\hat{\tau}}\|\nabla F(\theta_{t-1})-m_{t}\|^2 + \frac{\hat{\tau}}{2\eta}\|\theta_{t}-\theta_{t-1}\|^2 + m_t^T(\theta_{t}-\theta_{t-1})+ \frac{L}{2}\|\theta_{t}-\theta_{t-1}\|^2] \nonumber \\
 & \mathop{\leq}^{(i)}  \mathbb{E} [F(\theta_{t-1}) + \frac{\eta}{2\hat{\tau}}\|\nabla F(\theta_{t-1})-m_{t}\|^2 + \frac{\hat{\tau}}{2\eta}\|\theta_{t}-\theta_{t-1}\|^2 - \frac{\hat{\tau}}{\eta_{t}}\|\theta_{t}-\theta_{t-1}\|^2 \nonumber \\
 & \quad -  \lambda\langle H_t^{-1}\theta_{t-1},\theta_t-\theta_{t-1} \rangle + \frac{L}{2}\|\theta_{t}-\theta_{t-1}\|^2] \nonumber \\
 & \mathop{\leq}^{(ii)} \mathbb{E} [F(\theta_{t-1}) + \frac{\eta}{2\hat{\tau}}\|\nabla F(\theta_{t-1})-m_{t}\|^2 + \frac{\hat{\tau}}{2\eta}\|\theta_{t}-\theta_{t-1}\|^2 - \frac{\hat{\tau}}{\eta}\|\theta_{t}-\theta_{t-1}\|^2 \nonumber \\
 & \quad + \frac{\lambda^2\eta}{\hat{\tau}}\|H_t^{-1}\theta_{t-1}\|^2+ \frac{\hat{\tau}}{4\eta}\|\theta_t-\theta_{t-1}\|^2 + \frac{L}{2}\|\theta_{t}-\theta_{t-1}\|^2] \nonumber \\
 & \leq \mathbb{E} [F(\theta_{t-1}) + \frac{\eta}{\hat{\tau}T^{2\gamma-2}}+ \frac{\eta}{2\hat{\tau}}\|\nabla F(\theta_{t-1})-m_{t}\|^2  - \frac{\hat{\tau}}{8\eta}\|\theta_{t}-\theta_{t-1}\|^2],
\end{align}
where the inequality $(i)$ holds by the above inequality~(\ref{eq:F4}), and the last inequality holds by $\eta \leq \frac{\hat{\tau}}{4L}$ and $\lambda \leq \frac{1}{\eta T^\gamma \tilde{G}\hat{G}}$.

Then we have
\begin{align} \label{eq:F6}
 \E \|\theta_{t}-\theta_{t-1}\|^2 \leq \frac{8\eta(F(\theta_{t-1})-F(\theta_{t}))}{\hat{\tau}}+ \frac{4\eta^2}{\hat{\tau}^2}\|\nabla F(\theta_{t-1})-m_{t}\|^2 + \frac{8\eta^2}{\hat{\tau}^2T^{2\gamma-2}}.
\end{align}

When $\min_{1\leq j\leq d} (v_t)_j < \tau$, we have
 \begin{align}
  \theta_{t} = \theta_{t-1} - \eta (\hat{m}_t+\lambda\theta_{t-1}) =
  \theta_{t-1} - \eta (\frac{m_t}{1-\beta_1^t}+\lambda\theta_{t-1}).
 \end{align}
Let $\tilde{H}_t = \frac{1}{1-\beta_1^t}I_d$ be a diagonal matrix, and we have $ I_d \preceq \tilde{H}_t \preceq \frac{1}{1-\beta_1}I_d$.
 Following the above proof, we have
\begin{align} \label{eq:F7}
\langle m_t,\theta_{t-1}-\theta_{t}\rangle &\geq \frac{1}{\eta}\langle \tilde{H}_t^{-1}(\theta_{t}-\theta_{t-1}),\theta_{t}-\theta_{t-1}\rangle + \lambda\langle \tilde{H}_t^{-1}\theta_{t-1},\theta_t-\theta_{t-1} \rangle \nonumber \\
& \geq \frac{1-\beta_1}{\eta}\|\theta_{t}-\theta_{t-1}\|^2+ \lambda\langle \tilde{H}_t^{-1}\theta_{t-1},\theta_t-\theta_{t-1} \rangle.
\end{align}
Then we have
\begin{align} \label{eq:F8}
  \mathbb{E} [F(\theta_{t})]
 \leq \mathbb{E} [F(\theta_{t-1}) + \frac{\eta}{T^{2\gamma-2}}+ \frac{\eta}{2(1-\beta_1)}\|\nabla F(\theta_{t-1})-m_{t}\|^2  - \frac{1-\beta_1}{8\eta}\|\theta_{t}-\theta_{t-1}\|^2].
\end{align}
Thus we can obtain
\begin{align} \label{eq:F9}
  \E \|\theta_{t}-\theta_{t-1}\|^2 \leq \frac{8\eta(F(\theta_{t-1})-F(\theta_{t}))}{1-\beta_1} + \frac{8\eta^2}{(1-\beta_1)T^{2\gamma-2}}+ \frac{4\eta^2}{(1-\beta_1)^2}\E\|\nabla F(\theta_{t-1})-m_{t}\|^2.
\end{align}

Let $\breve{\tau}=\min(1-\beta_1,\hat{\tau})$ with $\hat{\tau} = (1-\beta_1)(\tau + (1-\beta_2)\varepsilon)$ and $\breve{G} = \max(1,\frac{G^2+\varepsilon}{1-\beta_2})$, and further let $ \frac{1}{\breve{G}}I_d=\min(1,\frac{1-\beta_2}{G^2+\varepsilon} ) I_d\preceq \hat{H}_t \preceq \max(\frac{1}{1-\beta_1},\frac{1}{(1-\beta_1)(\tau + (1-\beta_2)\varepsilon)})I_d = \frac{1}{\breve{\tau}}I_d$.

According to the above inequalities~(\ref{eq:F6}) and~(\ref{eq:F9}),
 we have
\begin{align}
 \E \|\theta_{t}-\theta_{t-1}\|^2 & \leq \frac{8\eta(F(\theta_{t-1})-F(\theta_{t}))}{\hat{\tau}}+ \frac{8\eta(F(\theta_{t-1})-F(\theta_{t}))}{1-\beta_1}+ \frac{4\eta^2}{\hat{\tau}^2}\|\nabla F(\theta_{t-1})-m_{t}\|^2 \nonumber \\
 & \quad + \frac{4\eta^2}{(1-\beta_1)^2}\E\|\nabla F(\theta_{t-1})-m_{t}\|^2+ \frac{8\eta^2}{\hat{\tau}^2T^{2\gamma-2}}+ \frac{8\eta^2}{(1-\beta_1)T^{2\gamma-2}} \nonumber \\
 & \leq  \frac{16\eta(F(\theta_{t-1})-F(\theta_{t}))}{\breve{\tau}} + \frac{8\eta^2}{\breve{\tau}^2}\|\nabla F(\theta_{t-1})-m_{t}\|^2+ \frac{16\eta^2}{\breve{\tau}^2T^{2\gamma-2}},
\end{align}
where the last inequality is due to $\breve{\tau}=\min(1-\beta_1,\hat{\tau})$ with $\hat{\tau} = (1-\beta_1)(\tau + (1-\beta_2)\varepsilon)$.

Then we can obtain
\begin{align} \label{eq:F10}
 F(\theta_{t-1}) \leq F(\theta_{t})) - \frac{\breve{\tau}}{16\eta}\E \|\theta_{t}-\theta_{t-1}\|^2  + \frac{\eta}{2\breve{\tau}}\|\nabla F(\theta_{t-1})-m_{t}\|^2+ \frac{\eta}{\breve{\tau}T^{2\gamma-2}}.
\end{align}

We define a useful Lyapunov function $\Omega_t = F(\theta_t) + \frac{1}{2L}\|\nabla F(\theta_{t}) - m_{t+1}\|^2$.
 According to the above inequality~(\ref{eq:F10}), we have
 \begin{align}
  \E[\Omega_{t} -\Omega_{t-1} ]
 & = \E [F(\theta_{t})-F(\theta_{t-1}) ] + \frac{1}{2L} (\E \|\nabla F(\theta_{t}) - m_{t+1}\|^2 - \E \|\nabla F(\theta_{t-1}) - m_{t}\|^2 ) \nonumber \\
 & \leq \frac{\eta}{\breve{\tau}T^{2\gamma-2}}+ \frac{\eta}{2\breve{\tau}}\|\nabla F(\theta_{t-1})-m_{t}\|^2 - \frac{\breve{\tau}}{16\eta}\E \|\theta_{t}-\theta_{t-1}\|^2 \nonumber \\
 & \quad + \frac{1}{2L} \big( -c\eta\mathbb{E} \|\nabla F(\theta_{t-1}) - m_{t} \|^2 + \frac{2}{c \eta}L^2\mathbb{E}\|\theta_t-\theta_{t-1}\|^2  + c^2\eta^2\sigma^2\big) \nonumber \\
 & \mathop{\leq}^{(i)}  \frac{\eta}{\breve{\tau}T^{2\gamma-2}}- \frac{c\eta}{4L}\E \|\nabla F(\theta_{t-1})-m_{t}\|^2  - \frac{\breve{\tau}}{32\eta} \E \|\theta_{t}-\theta_{t-1}\|^2 + \frac{c^2\eta^2\sigma^2}{2L} \nonumber \\
 & \leq \frac{\eta}{\breve{\tau}T^{2\gamma-2}}- \frac{8\eta}{\breve{\tau}}\E \|\nabla F(\theta_{t-1})-m_{t}\|^2  - \frac{\breve{\tau}}{32\eta} \E \|\theta_{t}-\theta_{t-1}\|^2 + \frac{c^2\eta^2\sigma^2}{2L},
 \end{align}
where the first inequality holds by Lemma~\ref{lem:A3}, and the above inequality $(i)$ holds by $c\geq \frac{32L}{\breve{\tau}}$ such as
$\frac{\breve{\tau}}{32\eta}\geq \frac{L}{c\eta}$ and $ \frac{c\eta}{4L}\geq \frac{c\eta}{64L} \geq \frac{\eta}{2\breve{\tau}}$, and
the last inequality also is due to $c\geq \frac{32L}{\breve{\tau}}$.
Then we have
\begin{align} \label{eq:F11}
    \frac{8\eta}{\breve{\tau}}\E \|\nabla F(\theta_{t-1})-m_{t}\|^2 + \frac{\breve{\tau}}{32\eta} \E \|\theta_{t}-\theta_{t-1}\|^2 \leq \E[\Omega_{t-1} - \Omega_{t}] + \frac{\eta}{\breve{\tau}T^{2\gamma-2}} + \frac{c^2\eta^2\sigma^2}{2L}.
 \end{align}

By multiplying both sides of the inequality~(\ref{eq:F11}) by $\frac{32}{\eta\breve{\tau}}$, we can obtain
\begin{align} \label{eq:F12}
     \frac{1}{\breve{\tau}^2}\E \|\nabla F(\theta_{t-1})-m_{t}\|^2 + \frac{1}{\eta^2} \E \|\theta_{t}-\theta_{t-1}\|^2
    & \leq \frac{4}{\breve{\tau}^2}\E \|\nabla F(\theta_{t-1})-m_{t}\|^2 + \frac{1}{\eta^2} \E \|\theta_{t}-\theta_{t-1}\|^2 \nonumber \\
    & \leq \E[\frac{32(\Omega_{t-1} - \Omega_{t})}{\eta\breve{\tau}}] + \frac{32}{\breve{\tau}^2 T^{2\gamma-2}} + \frac{16c^2\eta\sigma^2}{L\breve{\tau}}.
 \end{align}

Since $m_1 = \beta_{1}m_{0}+ (1-\beta_{1})g_1=(1-\beta_{1})\nabla f(\theta_{0};z_{1})$, we have
\begin{align}
 \|\nabla F(\theta_0) - m_1\|^2 & = \|\nabla F(\theta_0) - (1-\beta_{1})\nabla f(\theta_{0};z_{1})\|^2 \nonumber \\
 & =\|\nabla F(\theta_0) - \nabla f(\theta_{0};z_{1}) + \beta_1\nabla f(\theta_{0};z_{1})\|^2  \leq 2\sigma^2 + 2\beta_1^2G^2.
\end{align}
Since $\Omega_0 = F(\theta_0) + \frac{1}{2L}\|\nabla F(\theta_{0}) - m_1\|^2$, we have
\begin{align}
    & \frac{1}{T} \sum_{t=1}^T [\frac{1}{\breve{\tau}^2}\E \|\nabla F(\theta_{t-1})-m_{t}\|^2 + \frac{1}{\eta^2} \E \|\theta_{t}-\theta_{t-1}\|^2] \nonumber \\
    & \leq \frac{1}{T} \sum_{t=1}^T\E[\frac{32(\Omega_{t-1} - \Omega_{t})}{\eta\breve{\tau}}] + \frac{32}{\breve{\tau}^2T^{2\gamma-2}} + \frac{16c^2\eta\sigma^2}{L\breve{\tau}} \nonumber \\
    &\leq \frac{32(F(\theta_0) + \frac{1}{2L}(2\sigma^2 + 2\beta_1^2G^2) - F^*)}{T\eta\breve{\tau}} + \frac{32}{\breve{\tau}^2T^{2\gamma-2}} + \frac{16c^2\eta\sigma^2}{L\breve{\tau}}.
 \end{align}
Let $\Delta = F(\theta_0) + \frac{1}{L}(\sigma^2 + \beta_1^2G^2) - F^*$,
we have
\begin{align}
    & \frac{1}{T} \sum_{t=1}^T [\frac{1}{\breve{\tau}^2}\E \|\nabla F(\theta_{t-1})-m_{t}\|^2 + \frac{1}{\eta^2} \E \|\theta_{t}-\theta_{t-1}\|^2] \nonumber \\
    &\leq \frac{32\Delta}{T\eta\breve{\tau}} + \frac{32}{\breve{\tau}^2T^{2\gamma-2}} + \frac{16c^2\sigma^2\eta}{L\breve{\tau}}.
 \end{align}

According to the Jensen’s inequality, we can obtain
\begin{align}  \label{eq:F13}
 & \frac{1}{T}\sum_{t=1}^T\mathbb{E}[\frac{1}{\breve{\tau}} \|\nabla F(\theta_{t-1})-m_{t}\| + \frac{1}{\eta}  \|\theta_{t}-\theta_{t-1}\| ] \nonumber \\
 & \leq  \Big(\frac{2}{T}\sum_{t=1}^T\mathbb{E}[\frac{1}{\breve{\tau}^2} \|\nabla F(\theta_{t-1})-m_{t}\|^2 + \frac{1}{\eta^2} \|\theta_{t}-\theta_{t-1}\|^2]\Big)^{1/2} \nonumber \\
 & \leq  \sqrt{\frac{64\Delta}{T\eta\breve{\tau}} + \frac{64}{\breve{\tau}^2T^{2\gamma-2}} + \frac{32c^2\eta\sigma^2}{L\breve{\tau}}}
  \leq \frac{8\sqrt{\Delta}}{\sqrt{T\eta\breve{\tau}}} + \frac{8}{\breve{\tau}T^{\gamma-1}} +
  \frac{4\sqrt{2\eta}c\sigma}{\sqrt{L\breve{\tau}}}.
\end{align}
By using $\theta_{t} = \theta_{t-1} - \eta (\hat{H}_t m_t + \lambda\theta_{t-1} )$, we have
\begin{align} \label{eq:F14}
 \frac{1}{\breve{\tau}}\|\nabla F(\theta_{t-1})-m_{t}\|  + \frac{1}{\eta}\|\theta_{t}-\theta_{t-1}\|
 & =  \frac{1}{\breve{\tau}}\|\nabla F(\theta_{t-1})-m_{t}\|  + \frac{1}{\eta}\|\eta (\hat{H}_t m_t + \lambda\theta_{t-1} )\| \nonumber \\
 & \geq \frac{1}{\breve{\tau}}\|\nabla F(\theta_{t-1})-m_{t}\|  + \|\hat{H}_t m_t\| - \|\lambda\theta_{t-1}\|  \nonumber \\
 & =\frac{1}{\breve{\tau}}\|\hat{H}_t^{-1}\hat{H}_t(\nabla F(\theta_{t-1})-m_{t})\|  + \|\hat{H}_t m_t\| - \|\lambda\theta_{t-1}\| \nonumber \\
 & \geq \|\hat{H}_t(\nabla F(\theta_{t-1})-m_{t})\|  + \|\hat{H}_t m_t\| - \|\lambda\theta_{t-1}\| \nonumber \\
 & \geq \|\hat{H}_t\nabla F(\theta_{t-1})\| - \|\lambda\theta_{t-1}\| \nonumber \\
 & \geq \frac{\|\nabla F(\theta_{t-1})\|}{\breve{G}}- \|\lambda\theta_{t-1}\|,
\end{align}
where the above inequality holds by $\breve{\tau}I_d \preceq \hat{H}_t^{-1} \preceq \breve{G} I_d$
and $\frac{1}{\breve{G}}I_d \preceq \hat{H}_t \preceq \frac{1}{\breve{\tau}}I_d$.

By putting the above inequalities~(\ref{eq:F14}) into~(\ref{eq:F13}), we can obtain
\begin{align}
 \frac{1}{T}\sum_{t=1}^T\mathbb{E}\|\nabla F(\theta_{t-1})\|
 & \leq  \frac{\breve{G}}{T}\sum_{t=1}^T\mathbb{E}[\frac{1}{\breve{\tau}}\|\nabla F(\theta_{t-1})-m_{t}\|  + \frac{1}{\eta}\|\theta_{t}-\theta_{t-1}\|+\|\lambda\theta_{t-1}\| ] \nonumber \\
 & \leq \breve{G}\big(\frac{8\sqrt{\Delta}}{\sqrt{T\eta\breve{\tau}}} + \frac{8}{\breve{\tau}T^{\gamma-1}} +
  \frac{4\sqrt{2\eta}c\sigma}{\sqrt{L\breve{\tau}}}\big) + \frac{\breve{G}}{T}\sum_{t=1}^T\E\|\lambda\theta_{t-1}\| \nonumber \\
 & \mathop{\leq}^{(i)} \breve{G}\big(\frac{8\sqrt{\Delta}}{\sqrt{T\eta\breve{\tau}}} + \frac{8}{\breve{\tau}T^{\gamma-1}} +
  \frac{4\sqrt{2\eta}c\sigma}{\sqrt{L\breve{\tau}}}\big) + \frac{\breve{G}}{T}\sum_{t=1}^T\lambda t\eta\tilde{G} \nonumber \\
 & \mathop{\leq}^{(ii)} \breve{G}\big(\frac{8\sqrt{\Delta}}{\sqrt{T\eta\breve{\tau}}} + \frac{8}{\breve{\tau}T^{\gamma-1}} +
  \frac{4\sqrt{2\eta}c\sigma}{\sqrt{L\breve{\tau}}}\big) + \frac{\breve{G}}{T}\sum_{t=1}^T \frac{1}{\eta T^\gamma \tilde{G}\hat{G}} t \eta\tilde{G} \nonumber \\
 & \leq \breve{G}\big(\frac{8\sqrt{\Delta}}{\sqrt{T\eta\breve{\tau}}} + \frac{8}{\breve{\tau}T^{\gamma-1}} +
  \frac{4\sqrt{2\eta}c\sigma}{\sqrt{L\breve{\tau}}}\big)+ \frac{\breve{G}}{\hat{G}T^{\gamma-1}} ,
\end{align}
where the above inequality $(i)$ holds by $\|\theta_t\|\leq (t+1) \eta\tilde{G}$, and the above inequality $(ii)$
is due to $\lambda \leq \frac{1}{\eta T^\gamma \tilde{G}\hat{G}}$.
Thus, we have
\begin{align}
 \frac{1}{T+1}\sum_{t=0}^T\mathbb{E}\|\nabla F(\theta_{t})\|
  & \leq \breve{G}\big(\frac{8\sqrt{\Delta}}{\sqrt{{T+1}\eta\breve{\tau}}} + \frac{8}{\breve{\tau}(T+1)^{\gamma-1}} +
  \frac{4\sqrt{2\eta}c\sigma}{\sqrt{L\breve{\tau}}}\big)+ \frac{\breve{G}}{\hat{G}(T+1)^{\gamma-1}} \nonumber \\
  & \leq \frac{8\sqrt{\Delta}\breve{G}}{\sqrt{{T}\eta\breve{\tau}}}
  + \frac{8\breve{G}}{\breve{\tau}T^{\gamma-1}} +
  \frac{4\sqrt{2\eta}c\sigma\breve{G}}{\sqrt{L\breve{\tau}}}+ \frac{\breve{G}}{\hat{G}T^{\gamma-1}}.
\end{align}

Let $\eta=\frac{1}{\sqrt{T}}$ and $\gamma=\frac{3}{4}$, we have
\begin{align}
  \frac{1}{T}\sum_{t=0}^T\mathbb{E}\|\nabla F(\theta_{t})\| \leq \breve{G}\big(\frac{8\sqrt{\Delta}}{\sqrt{\breve{\tau}}T^{1/4}} + \frac{8}{\breve{\tau}T^{1/4}} +
  \frac{4\sqrt{2}c\sigma}{\sqrt{L\breve{\tau}}T^{1/4}}+ \frac{1}{\hat{G}T^{1/4}}\big).
\end{align}
Since $\eta=\frac{1}{\sqrt{T}}$ and set $c=O(1)$, we have $\beta_1=1-c\eta=O(1)$. Set $\tau=O(1)$, we have $\breve{\tau}=\min(1-\beta_1,\hat{\tau})=\min(1-\beta_1,(1-\beta_1)(\tau + (1-\beta_2)\varepsilon))=O(1)$.
Further let $G=O(1)$ and $L=O(1)$, we can obtain
\begin{align}
  \frac{1}{T+1}\sum_{t=0}^T\mathbb{E}\|\nabla F(\theta_{t})\| \leq O(\frac{1}{T^{1/4}}).
\end{align}

\end{proof}

\section{Detailed Experimental Setting}
\label{ap:ex}

\subsection{CV Task}
In the experiment, we set the mini-batch size be 64 for all algorithms.

When training VGG16 at CIFAR-10 dataset, we set the learning rate $10^{-4}$ for SGD and SGDM, and set
momentum parameter $\beta=0.9$ for SGDM. Adam, AdamW, AdaBelief and MiAdam
use the basic learning rate $10^{-6}$, the tuning parameter $\varepsilon=10^{-8}$, the first-order momentum parameter $\beta_1=0.9$, and
the second-order momentum parameter $\beta_2=0.99$. Meanwhile, AdamW uses the weight decay parameter
$\lambda=10^{-5}$, and MiAdam uses the multiple integration rate $\kappa=0.9$.
SWATS uses the basic learning rate $10^{-5}$, the tuning parameter $\varepsilon=10^{-8}$, $\beta_1=0.9$, and $\beta_2=0.99$.
Our Adam(W)-srf and HomeAdam(W) use the basic learning rate $10^{-6}$, $\varepsilon=10^{-7}$, $\beta_1=0.9$, and
$\beta_2=0.99$. Meanwhile, our AdamW-srf and HomeAdamW use the weight decay parameter $\lambda=10^{-5}$.
%We use the thresholds $\tau=10^{-12}$ and $\tau=10^{-13}$ at our HomeAdam and
%HomeAdamW, respectively (the smallest element of second-order momentum $\rho=\min_{1\leq t\leq T}\rho_t=10^{-42}$ in these algorithms).

When training ResNet34 at Tiny-ImageNet dataset, we set the learning rate $4\times10^{-4}$ for SGD and SGDM,
and set
$\beta=0.9$ for SGDM. Adam, AdamW, AdaBelief and MiAdam
use the basic learning rate $10^{-6}$, $\varepsilon=10^{-8}$, $\beta_1=0.9$, and
 $\beta_2=0.99$. Meanwhile, AdamW uses the weight decay parameter
$\lambda=10^{-4}$, and MiAdam uses the multiple integration rate $\kappa=0.999$.
SWATS uses the basic learning rate $10^{-5}$, $\varepsilon=10^{-8}$, $\beta_1=0.9$ and $\beta_2=0.99$.
Our Adam(W)-srf and HomeAdam(W) use the basic learning rate $10^{-6}$, $\varepsilon=10^{-7}$, $\beta_1=0.9$, and
$\beta_2=0.99$. Meanwhile, our AdamW-srf and HomeAdamW use the weight decay parameter $\lambda=10^{-4}$.
%In our HomeAdam(W), we use the threshold $\tau=10^{-8}$ (the smallest element of second-order momentum $\rho=\min_{1\leq t\leq T}\rho_t=10^{-41}$).

\subsection{NLP Task}

The first language model is modeled as a 8-layer Transformer~\citep{vaswani2017attention} encoder with 768-dimensional embeddings and 8 attention heads per layer, which employs a feed-forward network dimension of 1024 and uses sinusoidal positional encodings. Meanwhile, it uses a dropout rate of 0.1 throughout the network. The final output layer projects the representations back to vocabulary size for token prediction.

The second language model is modeled as a 24-layer Transformer encoder with 768-dimensional embeddings and 8 attention heads per layer, which employs a feed-forward network dimension of 2048 and uses sinusoidal positional encodings. Meanwhile, it uses a dropout rate of 0.15 throughout the network. The final output layer projects the representations back to vocabulary size for token prediction.

When training 8-layer Transformer model at WikiText2 dataset, we set the minibatch size be 32 for all algorithms.
We set the learning rate $2\times10^{-5}$ for SGD and SGDM,
and set momentum parameter $\beta=0.9$ for SGDM. Adam, AdamW and AdaBelief
use the basic learning rate $10^{-6}$, the tuning parameter $\varepsilon=10^{-8}$,
the first-order momentum parameter $\beta_1=0.9$, and
the second-order momentum parameter $\beta_2=0.999$. Meanwhile, AdamW uses the weight decay parameter
$\lambda=10^{-4}$, and MiAdam uses the basic learning rate $10^{-6}$, $\varepsilon=10^{-8}$, $\beta_1=0.9$, $\beta_2=0.99$ and the multiple integration rate $\kappa=0.999$.
SWATS uses the basic learning rate $10^{-5}$, the tuning parameter $\varepsilon=10^{-8}$, $\beta_1=0.9$,
and $\beta_2=0.99$.
Our Adam(W)-srf and HomeAdam(W) use the basic learning rate $10^{-6}$, $\varepsilon=10^{-5}$, $\beta_1=0.9$, and
$\beta_2=0.999$. Meanwhile, our AdamW-srf and HomeAdamW use the weight decay parameter $\lambda=10^{-4}$.
%Our HomeAdam and HomeAdamW  use the threshold $\tau=10^{-16}$ (the smallest element of second-order momentum $\rho=\min_{1\leq t\leq T}\rho_t=10^{-29}$ in these algorithms).

When training 24-layer Transformer model at WikiText103 dataset,
we set the minibatch size be 10 for all algorithms.
We set the learning rate $2\times10^{-5}$ for SGD and SGDM, and set
momentum parameter $\beta=0.9$ for SGDM. Adam, AdamW and AdaBelief
use the basic learning rate $10^{-6}$, the tuning parameter $\varepsilon=10^{-8}$, the first-order momentum parameter $\beta_1=0.9$, and
the second-order momentum parameter $\beta_2=0.999$. Meanwhile, AdamW uses the weight decay parameter
$\lambda=10^{-4}$, and MiAdam uses the basic learning rate $10^{-6}$, $\varepsilon=10^{-8}$, $\beta_1=0.9$, $\beta_2=0.999$ and the multiple integration rate $\kappa=0.999$.
SWATS uses the basic learning rate $10^{-5}$, the tuning parameter $\varepsilon=10^{-8}$, $\beta_1=0.9$, and $\beta_2=0.99$.
Our Adam(W)-srf and HomeAdam(W) use the basic learning rate $10^{-6}$, $\varepsilon=10^{-5}$, $\beta_1=0.9$, and
$\beta_2=0.999$. Meanwhile, our AdamW-srf and HomeAdamW use the weight decay parameter $\lambda=10^{-4}$.
%Our HomeAdam and HomeAdamW  use the threshold $\tau=10^{-16}$ (the smallest element of second-order momentum $\rho=\min_{1\leq t\leq T}\rho_t=10^{-41}$ in these algorithms).

\section{ Element-Wise Variant of HomeAdam and HomeAdamW Algorithms}
\label{ap:ew}
In this section, we provide an element-wise variant of our HomeAdam and HomeAdamW (HomeAdam-ew and HomeAdamW-ew) algorithms,
which is more suitable for training deep learning models due to matching the back-propagation framework.
Algorithm~\ref{alg:3} shows the algorithmic framework of our HomeAdam-ew and HomeAdamW-ew algorithms.

\subsection{ Generalization Analysis of our HomeAdam(W)-ew Algorithms }
In this subsection, we prove that our HomeAdam-ew and HomeAdamW-ew optimizers also has a smaller generalization error of $O(\frac{1}{N})$.

In the theoretical analysis, we first define a useful gradient mapping $\M(\cdot,\cdot)$ as follows:
\begin{align}
 \M((\hat{m}_t)_j,(\hat{v}_t)_j) = \begin{cases}
 \frac{(\hat{m}_t)_j}{(\hat{v}_t)_j + \varepsilon}, & \ \mbox{if} \ (\hat{v}_t)_j \geq \tau \\
 (\hat{m}_t)_j, & \mbox{otherwise}
 \end{cases}
\end{align}
Then we can rewrite the lines 12 and 14 of Algorithm~\ref{alg:3} as follows:
\begin{align}
 (\theta_{t})_j = (\theta_{t-1})_j - \eta (\M((\hat{m}_t)_j,(\hat{v}_t)_j) +\lambda(\theta_{t-1})_j).
\end{align}

\begin{algorithm}
    \caption{ \textbf{Element-Wise} HomeAdam(W) Algorithms }
    \label{alg:3}
    \begin{algorithmic}[1]
        \STATE \textbf{Input}: $\eta>0$, $\beta_1,\beta_2\in (0,1)$, $\varepsilon\geq 0$, $\lambda\geq 0$ and $\tau>0$;
        \STATE \textbf{Initialize:} $\theta_0\in \R^{d}$,
         $m_0=0$ and $v_0=0$;
        \FOR{$t = 1, 2, \ldots, T$}
            \STATE  Draw a sample $z_{t} \sim \mathcal{D}$;
            \FOR {$j = 1, 2, \ldots, d \ $ (Parallel Execution in Each Element or Layer)}
            \STATE $(g_t)_j = (\nabla f(\theta_{t-1};z_{t}))_j$;
            \STATE $(m_{t})_j= \beta_{1}(m_{t-1})_j+ (1-\beta_{1})(g_t)_j$;
            \STATE $(v_{t})_j= \beta_{2}(v_{t-1})_j+ (1-\beta_{2})(g_t)_j^2$;
            \STATE $(\hat{m}_t)_j=\frac{(m_t)_j}{1-\beta_1^t}$;
            \STATE $(\hat{v}_t)_j=\frac{(v_t)_j}{1-\beta_2^t}$;
            \IF {$(\hat{v}_t)_j \geq \tau$}
            \STATE $(\theta_{t})_j = (\theta_{t-1})_j - \eta \big(\frac{(\hat{m}_t)_j}{{\color{red}{(\hat{v}_t})_j} + \varepsilon}+
            \lambda(\theta_{t-1})_j\big)$;
            \ELSE
            \STATE $(\theta_{t})_j = (\theta_{t-1})_j - \eta ((\hat{m}_t)_j+\lambda(\theta_{t-1})_j)$.
            \ENDIF
        \ENDFOR
        \ENDFOR
        \STATE \textbf{Output:} $\theta_{T}$.
    \end{algorithmic}
\end{algorithm}

\begin{lemma} \label{lem:F1}
Assume the sequences $\{\hat{m}_t,\hat{v}_t,m_t,v_t\}_{t=1}^T$ and $\{\hat{m}_t^{(i)},\hat{v}_t^{(i)},m_t^{(i)},v_t^{(i)}\}_{t=1}^T$ are generated from Algorithm~\ref{alg:3} based on the dataset $S$ and
$S^{(i)}$, respectively. Without loss of generality, let $\tau \geq 1$, we have
\begin{align}
\big\|\M(\hat{m}_t,\hat{v}_t) -  \M(\hat{m}_t^{(i)},\hat{v}_t^{(i)})\| \leq
\frac{\sqrt{d}}{(1-\beta_1^t)}\big\|m_t - m_t^{(i)}\big\| + \frac{G\sqrt{d}}{(1-\beta_1^t)(1-\beta_2^t)}\big\| v_t -v_t^{(i)}\big\|.
\end{align}
\end{lemma}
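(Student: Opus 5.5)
The plan is to work coordinatewise, as in the proof of Lemma~\ref{lem:1}, and then pass from the $\ell_1$ norm to the $\ell_2$ norm. Fix $j\in[d]$ and split according to which branch of $\M$ is active for $S$ and for $S^{(i)}$.

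\textbf{Both coordinates adaptive.} Suppose $(\hat{v}_t)_j\geq\tau$ and $(\hat{v}_t^{(i)})_j\geq\tau$. Repeat the add-and-subtract argument from the proof of Lemma~\ref{lem:1}:
\begin{align}
\Big|\frac{(\hat{m}_t)_j}{(\hat{v}_t)_j+\varepsilon}-\frac{(\hat{m}_t^{(i)})_j}{(\hat{v}_t^{(i)})_j+\varepsilon}\Big| \leq \frac{|(\hat{m}_t)_j-(\hat{m}_t^{(i)})_j|}{\tau+\varepsilon}+\frac{|(\hat{m}_t^{(i)})_j|\,|(\hat{v}_t)_j-(\hat{v}_t^{(i)})_j|}{(\tau+\varepsilon)^2}. \nonumber
\end{align}
Use $\tau+\varepsilon\geq 1$, $|(\hat{m}_t^{(i)})_j|\leq \frac{G}{1-\beta_1^t}$ and the bias corrections to get
\begin{align}
\frac{|(m_t)_j-(m_t^{(i)})_j|}{1-\beta_1^t}+\frac{G\,|(v_t)_j-(v_t^{(i)})_j|}{(1-\beta_1^t)(1-\beta_2^t)}. \nonumber
\end{align}

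\textbf{Both coordinates SGDM-type.} If both are below $\tau$, the difference is $|(\hat{m}_t)_j-(\hat{m}_t^{(i)})_j|=\frac{|(m_t)_j-(m_t^{(i)})_j|}{1-\beta_1^t}$. This is dominated by the same expression.

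\textbf{Mixed case: the main obstacle.} Suppose $(\hat{v}_t)_j\geq\tau>(\hat{v}_t^{(i)})_j$; the symmetric case is identical. Write
\begin{align}
\frac{(\hat{m}_t)_j}{(\hat{v}_t)_j+\varepsilon}-(\hat{m}_t^{(i)})_j =\Big(\frac{(\hat{m}_t)_j}{(\hat{v}_t)_j+\varepsilon}-\frac{(\hat{m}_t^{(i)})_j}{(\hat{v}_t)_j+\varepsilon}\Big) +(\hat{m}_t^{(i)})_j\Big(\frac{1}{(\hat{v}_t)_j+\varepsilon}-1\Big). \nonumber
\end{align}
The first term is bounded by $|(\hat{m}_t)_j-(\hat{m}_t^{(i)})_j|$ because $(\hat{v}_t)_j+\varepsilon\geq 1$.

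For the second term, note that the jump of the map $x\mapsto 1/(x+\varepsilon)$ versus $1$ across the threshold must be charged to $|(\hat{v}_t)_j-(\hat{v}_t^{(i)})_j|$. Since $1-\frac{1}{x+\varepsilon}=\frac{x+\varepsilon-1}{x+\varepsilon}\leq x+\varepsilon-1$, I would first try to show $x+\varepsilon-1\leq x-(\hat{v}_t^{(i)})_j$. This holds when $(\hat{v}_t^{(i)})_j\leq 1-\varepsilon$, i.e.\ essentially when $\tau=1$. That makes the second term at most $\frac{G}{1-\beta_1^t}|(\hat{v}_t)_j-(\hat{v}_t^{(i)})_j|$.

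For general $\tau\geq1$ this is the genuinely delicate point, because $\M$ has a discontinuity of size $|\hat{m}|(1-\frac{1}{\tau+\varepsilon})$ at $\hat{v}=\tau$. I would handle it either by interpreting ``without loss of generality $\tau\geq1$'' as a normalization to $\tau+\varepsilon=1$, which gives continuity at the threshold, or by following the paper's convention that the branch is determined by a common quantity for $S$ and $S^{(i)}$, as was done with $\rho_t$ in Lemma~\ref{lem:1}. Under that convention the mixed case does not arise.

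\textbf{Assembling the bound.} Once every coordinate satisfies
\begin{align}
|\M_j-\M_j^{(i)}|\leq \frac{|(m_t)_j-(m_t^{(i)})_j|}{1-\beta_1^t}+\frac{G\,|(v_t)_j-(v_t^{(i)})_j|}{(1-\beta_1^t)(1-\beta_2^t)}, \nonumber
\end{align}
sum over $j$ to get an $\ell_1$ bound. Then use $\|\cdot\|\leq\|\cdot\|_1\leq\sqrt{d}\|\cdot\|$ exactly as at the end of the proof of Lemma~\ref{lem:1}, which gives the stated inequality with the $\sqrt{d}$ factors.
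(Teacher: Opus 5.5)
Your handling of the two matched cases and the final $\ell_1$-to-$\ell_2$ step is exactly the paper's proof. The paper proves (\ref{eq:H1}) when $(\hat v_t)_j\geq\tau$ and $(\hat v_t^{(i)})_j\geq\tau$, and (\ref{eq:H2}) when both are below $\tau$. It then asserts the coordinatewise bound (\ref{eq:H3}) for $\M$ ``according to'' these two, and never examines the case where one run takes the adaptive branch and the other does not. So the obstacle you isolate is a real gap in the paper's argument, not something you overlooked.

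Moreover, the gap cannot be closed under the stated hypotheses: for $\tau+\varepsilon>1$ the inequality is false. Take $d=1$, $t=1$, the replaced index drawn at step $1$, $g_1=\sqrt{\tau}$ and $g_1^{(i)}=\sqrt{\tau-\delta}$; this is admissible whenever $G^2\geq\tau$. The left-hand side is $\big|\frac{\sqrt{\tau}}{\tau+\varepsilon}-\sqrt{\tau-\delta}\big|$, which tends to $\sqrt{\tau}\big(1-\frac{1}{\tau+\varepsilon}\big)>0$. The right-hand side is $|g_1-g_1^{(i)}|+\frac{G\delta}{1-\beta_1}$, which tends to $0$ as $\delta\to 0$. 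Some extra hypothesis is therefore unavoidable, and neither your proposal nor the paper proves the lemma as written.

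Your first repair gives a complete and correct proof, but the condition must be stated precisely. The matched adaptive case needs $\tau+\varepsilon\geq 1$. Your mixed-case estimate $1-\frac{1}{x+\varepsilon}\leq x+\varepsilon-1\leq x-(\hat v_t^{(i)})_j$ (with $x=(\hat v_t)_j$) needs $\tau\leq 1-\varepsilon$. Together these force exactly $\tau+\varepsilon=1$, which under the lemma's $\tau\geq 1$ means $\tau=1$ and $\varepsilon=0$. ``Essentially $\tau=1$'' is not enough: with $\tau=1$ and $\varepsilon>0$, a jump of size $|\hat m|\,\varepsilon/(1+\varepsilon)$ survives. This is an extra restriction on the parameters, not a normalization.

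Your second repair reproduces what the paper tacitly assumes. State it as an explicit added hypothesis, $\{j:(\hat v_t)_j\geq\tau\}=\{j:(\hat v_t^{(i)})_j\geq\tau\}$, rather than as a convention inherited from Lemma~\ref{lem:1}. The identity $\rho_t=\min_j(\hat v_t^{(i)})_j$ asserted there is unjustified. In Lemma~\ref{lem:1} it is harmlessly repaired by taking the minimum over both runs, but here no analogous repair exists because $\M$ is discontinuous at the threshold.
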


\begin{proof}
From Algorithm~\ref{alg:3}, since $S, S^{(i)}\sim \mathcal{D}$, we have $(m_{t})_j= \beta_{1}(m_{t-1})_j+ (1-\beta_{1})(g_t)_j$, $(v_{t})_j= \beta_{2}(v_{t-1})_j+ (1-\beta_{2})(g_t)_j^2$, $(m_{t}^{(i)})_j= \beta_{1}(m_{t-1}^{(i)})_j+ (1-\beta_{1})(g_t^{(i)})_j$ and $(v_{t}^{(i)})_j= \beta_{2}(v_{t-1}^{(i)})_j+ (1-\beta_{2})(g_t^{(i)})_j^2$. Meanwhile, we have $(\hat{m}_{t})_j=\frac{(m_{t})_j}{1-\beta_1^t}$, $(\hat{v}_{t})_j=\frac{(v_{t})_j}{1-\beta_2^t}$, $(\hat{m}_{t}^{(i)})_j=\frac{(m_{t}^{(i)})_j}{1-\beta_1^t}$ and $(\hat{v}_{t}^{(i)})_j=\frac{(v_{t}^{(i)})_j}{1-\beta_2^t}$.

Since $m_t^{(i)}$ is exponential moving average of $g_t^{(i)}$ and $\|g_t^{(i)}\|\leq G$, , we have
$(\hat{m}_t^{(i)})_j\leq \|\hat{m}_t^{(i)}\|\leq \frac{G}{1-\beta_1^t}$.
From Algorithm~\ref{alg:3}, when $(\hat{v}_t)_j \geq \tau$ and $(\hat{v}_t^{(i)})_j \geq \tau$, we have
\begin{align} \label{eq:H1}
 & \big| \frac{(\hat{m}_t)_j}{(\hat{v}_t)_j + \varepsilon} -  \frac{(\hat{m}_t^{(i)})_j}{(\hat{v}_t^{(i)})_j + \varepsilon}\big| \nonumber \\
 & = \frac{1}{((\hat{v}_t)_j + \varepsilon)((\hat{v}_t^{(i)})_j + \varepsilon)}\big|((\hat{v}_t^{(i)})_j + \varepsilon)(\hat{m}_t)_j -  (\hat{m}_t^{(i)})_j((\hat{v}_t)_j + \varepsilon)\big| \nonumber \\
 & \leq \frac{1}{((\hat{v}_t)_j + \varepsilon)((\hat{v}_t^{(i)})_j + \varepsilon)}\Big(\big|((\hat{v}_t^{(i)})_j + \varepsilon)(\hat{m}_t)_j - (\hat{m}_t^{(i)})_j((\hat{v}_t^{(i)})_j +\varepsilon) \big| \nonumber \\
 & \qquad+\big|(\hat{m}_t^{(i)})_j((\hat{v}_t^{(i)})_j + \varepsilon) -  (\hat{m}_t^{(i)})_j((\hat{v}_t)_j + \varepsilon)\big| \Big)\nonumber \\
 & = \frac{1}{(\hat{v}_t)_j + \varepsilon}\big|(\hat{m}_t)_j - (\hat{m}_t^{(i)})_j\big| + \frac{(\hat{m}_t^{(i)})_j}{((\hat{v}_t)_j + \varepsilon)((\hat{v}_t^{(i)})_j + \varepsilon)}\big|(\hat{v}_t^{(i)})_j - (\hat{v}_t)_j\big| \nonumber \\
 & \leq \frac{1}{\tau+\varepsilon}\big|(\hat{m}_t)_j - (\hat{m}_t^{(i)})_j\big| + \frac{G}{(1-\beta_1^t)(\tau+\varepsilon)^2}\big|(\hat{v}_t^{(i)})_j - (\hat{v}_t)_j\big| \nonumber \\
 & \leq \frac{1}{(1-\beta_1^t)}\big|(m_t)_j - (m_t^{(i)})_j\big| + \frac{G}{(1-\beta_1^t)(1-\beta_2^t)}\big|(v_t^{(i)})_j - (v_t)_j\big|,
\end{align}
where the last inequality is due to $\tau+\varepsilon\geq 1$.
From Algorithm~\ref{alg:3}, when $(\hat{v}_t)_j < \tau$ and $(\hat{v}_t^{(i)})_j < \tau$, we have
\begin{align} \label{eq:H2}
 \big| (\hat{m}_t)_j -  (\hat{m}_t^{(i)})_j\big|  \leq \frac{1}{(1-\beta_1^t)}\big|(m_t)_j - (m_t^{(i)})_j\big| + \frac{G}{(1-\beta_1^t)(1-\beta_2^t)}\big|(v_t^{(i)})_j - (v_t)_j\big|.
\end{align}

According to the above inequalities~(\ref{eq:H1}) and (\ref{eq:H2}),
then we have
\begin{align} \label{eq:H3}
 \big| \M((\hat{m}_t)_j,(\hat{v}_t)_j) -  \M((\hat{m}_t^{(i)})_j,(\hat{v}_t^{(i)})_j)\big|  \leq \frac{1}{(1-\beta_1^t)}\big|(m_t)_j - (m_t^{(i)})_j\big| + \frac{G}{(1-\beta_1^t)(1-\beta_2^t)}\big|(v_t^{(i)})_j - (v_t)_j\big|.
\end{align}

Thus we can obtain
\begin{align}
 \big\|\M(\hat{m}_t,\hat{v}_t) -  \M(\hat{m}_t^{(i)},\hat{v}_t^{(i)})\big\|_1 \leq
 \frac{1}{(1-\beta_1^t)}\big\|m_t - m_t^{(i)}\big\|_1 + \frac{G}{(1-\beta_1^t)(1-\beta_2^t)}\big\| v_t -v_t^{(i)}\big\|_1.
\end{align}
Since $\|\cdot\| \leq \|\cdot\|_1 \leq \sqrt{d}\|\cdot\|$, we have
\begin{align}
\big\|\M(\hat{m}_t,\hat{v}_t) -  \M(\hat{m}_t^{(i)},\hat{v}_t^{(i)})\| & \leq \big\|\M(\hat{m}_t,\hat{v}_t) -  \M(\hat{m}_t^{(i)},\hat{v}_t^{(i)})\big\|_1 \nonumber \\
 &\leq
 \frac{1}{(1-\beta_1^t)}\big\|m_t - m_t^{(i)}\big\|_1 + \frac{G}{(1-\beta_1^t)(1-\beta_2^t)}\big\| v_t -v_t^{(i)}\big\|_1 \nonumber \\
 & \leq
 \frac{\sqrt{d}}{(1-\beta_1^t)}\big\|m_t - m_t^{(i)}\big\| + \frac{G\sqrt{d}}{(1-\beta_1^t)(1-\beta_2^t)}\big\| v_t -v_t^{(i)}\big\|.
\end{align}

\end{proof}

\begin{theorem}
Assume the sequence $\{\theta_t\}_{t=1}^T$ is generated
from Algorithm~\ref{alg:3} on dataset $S=\{z_1,z_2,\cdots,z_N\}$. Under the Assumptions~\ref{ass:s1},~\ref{ass:g},~\ref{ass:v}, without loss of generality, let $\tau\geq 1$, $\lambda\in [0,\frac{1}{\eta})$, $\beta_1=O(1)$ with $\beta_1\in (0,1)$, $\beta_2=O(1)$ with $\beta_2\in (0,1)$, $\sigma=O(1)$, $G=O(1)$ and $L=O(1)$.
 If the iteration number is small (i.e., $T=O(1)$) set
 $\eta=\frac{1}{\sqrt{d}}$, otherwise set $\eta=\frac{1}{\sqrt{d}T}$, we have
\begin{align}
|\E [ F(\theta_T) - F_S(\theta_T)]| \leq O(\frac{1}{N}).
\end{align}
\end{theorem}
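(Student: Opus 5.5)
The proof mirrors Theorem~\ref{th:2}, with Lemma~\ref{lem:F1} taking the place of Lemma~\ref{lem:1} in the one-step recursion. Run Algorithm~\ref{alg:3} on $S$ and $S^{(i)}$ with the same random index sequence $\{j_t\}_{t=1}^T$, and write the update in vector form as $\theta_t=(1-\eta\lambda)\theta_{t-1}-\eta\M(\hat m_t,\hat v_t)$, with $\M$ applied coordinatewise. Subtracting the two runs and using $0\le\lambda<\frac{1}{\eta}$ gives
\begin{align}
\|\theta_t-\theta_t^{(i)}\|\le(1-\eta\lambda)\|\theta_{t-1}-\theta_{t-1}^{(i)}\|+\frac{\eta\sqrt d}{1-\beta_1^t}\|m_t-m_t^{(i)}\|+\frac{\eta G\sqrt d}{(1-\beta_1^t)(1-\beta_2^t)}\|v_t-v_t^{(i)}\|, \nonumber
\end{align}
where the last two terms come directly from Lemma~\ref{lem:F1} with $\tau\ge1$. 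This is exactly inequality~(\ref{eq:re3}), so the rest of the argument can be copied from the earlier proofs.

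Next, I track $\E\|m_t-m_t^{(i)}\|\le\phi_t/N$ and $\E\|v_t-v_t^{(i)}\|\le\psi_t/N$ as in the proof of Theorem~\ref{th:1}, splitting on whether $j_{t+1}\ne i$ (probability $1-\frac1N$) or $j_{t+1}=i$ (probability $\frac1N$). The first case uses Assumption~\ref{ass:s1} together with $\|\nabla f\|\le G$ from Assumption~\ref{ass:g}. The second case uses Assumption~\ref{ass:v}, contributing the $2\sigma$ and $4G\sigma$ terms. These give the recursions $\phi_{t+1}=\beta_1\phi_t+2(1-\beta_1)\sigma+(1-\beta_1)L\varphi_t$, $\psi_{t+1}=\beta_2\psi_t+4(1-\beta_2)G\sigma+2(1-\beta_2)GL\varphi_t$, and $\varphi_{t+1}=(1-\eta\lambda)\varphi_t+\frac{\eta\sqrt d\phi_{t+1}}{1-\beta_1^{t+1}}+\frac{\eta G\sqrt d\psi_{t+1}}{(1-\beta_1^{t+1})(1-\beta_2^{t+1})}$. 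The base case is $\phi_1=2(1-\beta_1)\sigma$ and $\psi_1=2(1-\beta_2)G^2$, which yields $\varphi_1=O(1)$ when $\eta\sqrt d=O(1)$.

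The argument then splits into two cases. If $T=O(1)$ and $\eta=\frac1{\sqrt d}$, induction gives $\phi_t,\psi_t,\varphi_t=O(1)$ for all $t\le T$, because each step multiplies by constants and only finitely many steps occur. If $T$ is large and $\eta=\frac{1}{\sqrt d T}$, each step adds $O(\eta\sqrt d)=O(1/T)$ times $\phi,\psi$ to $\varphi$, and I would argue inductively that $\varphi_t=O(1)$ up to $t=T$. Finally, Assumption~\ref{ass:g} gives $\E|f(\theta_T;z)-f(\theta_T^{(i)};z)|\le G\varphi_T/N$, and Lemma~\ref{lem:gs} converts this into the stated $O(\frac1N)$ bound.

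I expect the main obstacle to be Lemma~\ref{lem:F1} itself, in the \emph{mixed} case: one run has $(\hat v_t)_j\ge\tau$ while the other has $(\hat v_t^{(i)})_j<\tau$. That lemma's proof handles only the two matching cases, and $\M$ is discontinuous at $(\hat v_t)_j=\tau$, so the mixed case has to be argued separately. My first attempt would bound the jump directly: $\big|\frac{a}{b+\varepsilon}-a'\big|\le|a-a'|+|a'|\big(1-\frac{1}{b+\varepsilon}\big)$. But the last factor need not be small, so this gives an extra $O(G)$ term on an event whose probability I would then need to control, for example by requiring it to occur with probability $O(1/N)$, or by noting that $\hat v$ differs by only $O(1/N)$ in expectation. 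The second difficulty is making the large-$T$ induction rigorous. I would solve the linear recursion for $(\phi_t,\psi_t,\varphi_t)$ explicitly, showing that its growth factor per step is $1+O(\eta\sqrt d)$, so that $(1+O(1/T))^T=O(1)$.
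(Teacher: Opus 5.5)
\textbf{Verdict: genuine gap.} Your route is the paper's own. You use recursion~(\ref{eq:H4}) from Lemma~\ref{lem:F1}, the same $\phi_t,\psi_t,\varphi_t$ induction and the same two regimes, then Lipschitzness and Lemma~\ref{lem:gs}. Your plan for the large-$T$ regime is sound: the growth factor is $1+O(\eta\sqrt d)$ per step, hence $(1+C/T)^T\le e^{C}$. It is more careful than the paper's argument, which asserts $\varphi_t=O(t)$ and then rescales $\eta$.

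However, the mixed-branch obstacle you flag is real, and the paper does not close it either. Its proof of Lemma~\ref{lem:F1} treats only the two matching branches. In the mixed case the lemma is false whenever $\tau+\varepsilon>1$ and $G^2\ge\tau$. Take $d=1$, $t=1$, $g_1=\sqrt\tau$ and $g_1^{(i)}=\sqrt\tau-\delta$, so that $\hat m_1=g_1$ and $\hat v_1=g_1^2$. As $\delta\to0$, the left-hand side tends to $\sqrt\tau\,(1-\frac{1}{\tau+\varepsilon})>0$, while the right-hand side is $O(\delta)$. So~(\ref{eq:H4}) is not established, and everything downstream rests on it.

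Your suggested repairs do not close this. Knowing $\E|(\hat v_t)_j-(\hat v_t^{(i)})_j|=O(1/N)$ bounds the probability that the two runs straddle $\tau$ only through Markov's inequality. That needs a margin $|(\hat v_t)_j-\tau|\ge\kappa$ along the trajectory, and Assumptions~\ref{ass:s1}--\ref{ass:v} provide nothing of the kind. ``Requiring probability $O(1/N)$'' is an extra hypothesis, not a derivation.

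What does hold unconditionally is the following. Since $\tau+\varepsilon\ge1$, we have $|\M(a,b)|\le|a|$, and $\|\hat m_t\|\le G$ because $\|m_t\|\le(1-\beta_1^t)G$. The two runs coincide until index $i$ is first drawn, so
$\E\|\theta_T-\theta_T^{(i)}\|\le 2\eta G\sum_{t=1}^{T}\Pr[i\in\{j_1,\dots,j_t\}]\le \eta G\,T(T+1)/N$.
This settles the case $T=O(1)$, $\eta=\frac{1}{\sqrt d}$ without Lemma~\ref{lem:F1}. For $\eta=\frac{1}{\sqrt d\,T}$, though, it yields only $O(\frac{T}{\sqrt d\,N})$, so the large-$T$ claim remains unproved by your argument and by the paper's.

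Closing it needs an extra ingredient. One option is the margin above: Markov then bounds the mismatch probability at step $t$ by $O(\sqrt d\,\psi_t/(\kappa N))$. The jump therefore adds at most $O(\eta\sqrt d\,G\psi_t/(\kappa N))$ to $\E\|\theta_t-\theta_t^{(i)}\|$, a term of the same form as those already in your recursion. Another option is to restrict to $\tau=1$, $\varepsilon=0$. There $\M$ is continuous across the threshold, and the mixed case satisfies, e.g.\ for $b\ge1>b'$, $|a/b-a'|\le|a-a'|+|a|\,|b-b'|$, so Lemma~\ref{lem:F1} does hold.

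A smaller point, inherited from the paper: when $j_{t+1}=i$, $\theta_t$ may depend on $z_i$, so $\E\|\nabla f(\theta_t;z_i)-\nabla F(\theta_t)\|\le\sigma$ does not follow from Assumption~\ref{ass:v}. Use $\|\nabla f(\theta_t;z_i)-\nabla f(\theta_t^{(i)};\tilde z_i)\|\le 2G$ instead, which only changes constants.
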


\begin{proof}
Implementing Algorithm~\ref{alg:3} on datasets $S$ and $S^{(i)}$ with
 the same random index sequence $\{j_t\}_{t=1}^T$, and let $\{\theta_t\}_{t=1}^T$ and $\{\theta_t^{(i)}\}_{t=1}^T$ be generated from Algorithm~\ref{alg:3} with $S$ and $S^{(i)}$. Without loss of generality, let $\tau \geq 1$ in Algorithm~\ref{alg:3}.

 According to the above gradient mapping $\M(\cdot,\cdot)$, we have $(\theta_{t})_j = (\theta_{t-1})_j - \eta \big(\M((\hat{m}_t)_j,(\hat{v}_t)_j) +\lambda(\theta_{t-1})_j \big)$ and $(\theta_{t}^{(i)})_j = (\theta_{t-1}^{(i)})_j - \eta \big(\M((\hat{m}_t^{(i)})_j,(\hat{v}_t^{(i)})_j) +\lambda(\theta_{t-1}^{(i)})_j \big)$ for all $j=1,2,\cdots,d$.
 Then we have
\begin{align}
 \theta_{t} - \theta_{t}^{(i)}  = (1-\eta\lambda)(\theta_{t-1} -\theta_{t-1}^{(i)}) - \eta \big(\M(\hat{m}_t,\hat{v}_t) -  \M(\hat{m}_t^{(i)},\hat{v}_t^{(i)})\big).
\end{align}
We can obtain
 \begin{align} \label{eq:H4}
 \|\theta_{t} - \theta_{t}^{(i)}\| & =\|(1-\eta\lambda)(\theta_{t-1} -\theta_{t-1}^{(i)}) - \eta \big(\M(\hat{m}_t,\hat{v}_t) -  \M(\hat{m}_t^{(i)},\hat{v}_t^{(i)})\big)\| \nonumber \\
 & \leq (1-\eta\lambda)\|\theta_{t-1} -\theta_{t-1}^{(i)}\| + \eta \|\M(\hat{m}_t,\hat{v}_t) -  \M(\hat{m}_t^{(i)},\hat{v}_t^{(i)})\| \nonumber \\
 & \leq (1-\eta\lambda)\|\theta_{t-1} -\theta_{t-1}^{(i)}\| +\frac{\eta\sqrt{d}}{(1-\beta_1^t)}\big\|m_t - m_t^{(i)}\big\|  + \frac{\eta G\sqrt{d}}{(1-\beta_1^t)(1-\beta_2^t)}\big\| v_t -v_t^{(i)}\big\|,
 \end{align}
 where the last inequality holds by Lemma~\ref{lem:F1}.

According to the above inequality~(\ref{eq:H4}), following the above proof of Theorem~\ref{th:1}, let $\eta = O(\frac{1}{\sqrt{d}})$, $\beta_1=O(1)$, $\beta_2=O(1)$,
$\sigma=O(1)$, $G=O(1)$ and $L=O(1)$, we have
\begin{align}
 & \phi_1=2(1-\beta_1)\sigma = O(1), \quad \psi_1 = 2(1-\beta_2)G^2=O(1),  \quad \varphi_1 = 2\eta\sqrt{d}\sigma+\frac{2\eta\sqrt{d}G^3}{(1-\beta_1)} = O(1), \nonumber \\
 &  \E \|m_1 - m_1^{(i)}\| \leq \frac{\phi_1}{N}, \quad   \E \|v_1 - v_1^{(i)}\| \leq \frac{\psi_1}{N}, \quad
  \E \|\theta_1 - \theta_1^{(i)}\|  \leq  \frac{\varphi_1}{N} =O(\frac{1}{N}).
  \end{align}

\textbf{If} the iteration number is small (i.e., $T=O(1)$),
let $\eta=\frac{1}{\sqrt{d}}$, $\lambda\in [0,\frac{1}{\eta})$, $\beta_1=O(1)$ with $\beta_1\in (0,1)$, $\beta_2=O(1)$ with $\beta_2\in (0,1)$, $\sigma=O(1)$, $G=O(1)$ and $L=O(1)$.
Following the above proof of Theorem~\ref{th:1}, assume $\E \|m_t - m_t^{(i)}\| \leq \frac{\phi_t}{N}$ with
$\phi_t=O(1)$, and $\E\|v_t-v_t^{(i)}\|\leq \frac{\psi_t}{N}$ with $\psi_t=O(1)$, we can obtain $\phi_{t+1} =O(1)$ and $\psi_{t+1}=O(1)$.
By using the above inequality~(\ref{eq:H4}), then we have
\begin{align}
 \E \|\theta_{t+1} - \theta_{t+1}^{(i)}\| \leq \frac{\varphi_{t+1}}{N} =O(\frac{1}{N}),
\end{align}
where $\varphi_{t+1}=(1-\eta\lambda)\varphi_t + \frac{\eta\sqrt{d}\phi_{t+1}}{(1-\beta_1^{t+1})}+  \frac{\eta G\sqrt{d}\psi_{t+1}}{(1-\beta_1^{t+1})(1-\beta_2^{t+1})}=O(1)$.
By using the mathematical induction, we have
\begin{align}
 \E \|\theta_{T} - \theta_{T}^{(i)}\| &  \leq  \frac{\varphi_T}{N} = O(\frac{1}{N}).
\end{align}

\textbf{If} the iteration number $T$ is large, we consider the iteration number $t\geq1$
in the generalization analysis. By using the mathematical induction, due to recursion of the above inequality~(\ref{eq:H4}), following the above proof of Theorem~\ref{th:1}, we assume $\E \|m_t - m_t^{(i)}\| \leq \frac{\phi_t}{N}$ with
$\phi_t=O(t)$, and $\E\|v_t-v_t^{(i)}\|\leq \frac{\psi_t}{N}$ with $\psi_t=O(t)$.

Let $\eta=O(\frac{1}{\sqrt{d}})$, $\lambda\in [0,\frac{1}{\eta})$, $\beta_1=O(1)$ with $\beta_1\in (0,1)$, $\beta_2=O(1)$ with $\beta_2\in (0,1)$, $\sigma=O(1)$, $G=O(1)$ and
$L=O(1)$, we have
\begin{align}
 & \phi_{t+1}= \beta_1\phi_t + 2(1-\beta_1)\sigma + (1-\beta_1)L\varphi_t=O(t+1), \nonumber \\
 & \psi_{t+1}= \beta_2 \psi_t + 4(1-\beta_2)G\sigma + 2(1-\beta_2)GL\varphi_t=O(t+1). \nonumber
\end{align}
By using the above inequality~(\ref{eq:H4}), then we can obtain
\begin{align}
 \E \|\theta_{t+1} - \theta_{t+1}^{(i)}\| \leq \frac{\varphi_{t+1}}{N} =O(\frac{t+1}{N}).
\end{align}
By using the mathematical induction, we have
\begin{align}
 \E \|\theta_{T} - \theta_{T}^{(i)}\| &  \leq  \frac{\varphi_T}{N} = O(\frac{T}{N}).
\end{align}

Further let $\eta=O( \textcolor{blue}{\frac{1}{T\sqrt{d}}})$, $\lambda\in [0,\frac{1}{\eta})$, $\beta_1=O(1)$ with $\beta_1\in (0,1)$, $\beta_2=O(1)$ with $\beta_2\in (0,1)$, $\sigma=O(1)$, $G=O(1)$ and
$L=O(1)$, we have $\phi_T=O(1)$, $\psi_T=O(1)$, $\phi_T=O(1)$,
\begin{align}
 \E \|\theta_{T} - \theta_{T}^{(i)}\| &  \leq  \frac{\varphi_T}{N} = O(\frac{1}{N}).
\end{align}

By using Assumption~\ref{ass:g}, i.e., the condition of $G$-Lipschitz $f(\theta;z)$
for any $z\in \mathcal{D}$, then we have
\begin{align} \label{eq:f3}
 \E |f(\theta_T;z)-f(\theta_T^{(i)};z)| \leq G \E \|\theta_{T} - \theta_{T}^{(i)}\| =O(\frac{1}{N}).
\end{align}
By taking expectations over $S$, $S^{(i)}$ and the algorithm's randomness on
the above inequality~(\ref{eq:f3}), and according to the above lemma~\ref{lem:gs}, we
can obtain
\begin{align}
 |\E [ F(\theta_T) - F_S(\theta_T)]| \leq O(\frac{1}{N}).
\end{align}

\end{proof}

\subsection{ Convrgence Analysis of our HomeAdam(W)-ew Algorithms }
In this subsection, we prove that our HomeAdam-ew and HomeAdamW-ew optimizers also has a fast convergence rate of $O(\frac{1}{T^{1/4}})$.

\begin{lemma} \label{lem:F2}
Assume the sequence $\{m_t\}_{t=0}^T$ is generated from Algorithm~\ref{alg:3}, let $\beta_1 = 1-c\eta\in (0,1)$, $\beta_2\in (0,1)$, we have
\begin{align}
 \E\|\nabla F(\theta_{t}) - m_{t+1}\|^2   \leq (1-c\eta)\mathbb{E} \|\nabla F(\theta_{t-1}) - m_{t} \|^2 + \frac{2}{c \eta}L^2\mathbb{E}\|\theta_t-\theta_{t-1}\|^2  + c^2\eta^2\sigma^2,
\end{align}
where $c>0$.
\end{lemma}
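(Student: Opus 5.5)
The statement to prove is Lemma~\ref{lem:F2}, the momentum-error recursion for Algorithm~\ref{alg:3}. The plan is to reuse the argument of Lemma~\ref{lem:A2} verbatim. The key observation is that the recursion involves only the first-moment update and the smoothness and variance assumptions, never the form of the parameter update. In Algorithm~\ref{alg:3}, line 7 applied to all coordinates $j$ simultaneously gives exactly the vector recursion $m_{t+1}=\beta_1 m_t+(1-\beta_1)\nabla f(\theta_t;z_{t+1})$. The element-wise switch between $\frac{(\hat m_t)_j}{(\hat v_t)_j+\varepsilon}$ and $(\hat m_t)_j$ affects only how $\theta_t$ is produced from $\theta_{t-1}$. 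That enters the bound solely through the term $\|\theta_t-\theta_{t-1}\|^2$, which is left unexpanded.

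Concretely, I would write $\nabla F(\theta_t)-m_{t+1}$ as
\[
\beta_1\bigl(\nabla F(\theta_{t-1})-m_t\bigr)+\beta_1\bigl(\nabla F(\theta_t)-\nabla F(\theta_{t-1})\bigr)+(1-\beta_1)\bigl(\nabla F(\theta_t)-\nabla f(\theta_t;z_{t+1})\bigr),
\]
then take expectations and proceed in four steps.
\begin{itemize}
\item[(i)] The cross term with the last summand vanishes after conditioning on the history up to $\theta_t$. This uses that $z_{t+1}$ is drawn independently of $\theta_t$ and $m_t$, together with Assumption~\ref{ass:v}.
\item[(ii)] Young's inequality with parameter $1-\beta_1$ splits the first two summands, giving coefficients $\beta_1^2(2-\beta_1)\le\beta_1$ and $\beta_1^2\bigl(1+\frac{1}{1-\beta_1}\bigr)\le\frac{2}{1-\beta_1}$.
\item[(iii)] The gradient difference is bounded by $L^2\|\theta_t-\theta_{t-1}\|^2$ via Assumption~\ref{ass:s2}, and the noise term by $\sigma^2$.
\item[(iv)] Substituting $\beta_1=1-c\eta$ yields the claimed bound.
\end{itemize}

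The only point needing care is step (i). I must confirm that the element-wise branching does not break the martingale structure: $\theta_t$ is a deterministic function of $z_1,\dots,z_t$ regardless of which coordinates took the adaptive branch, so the conditional expectation argument goes through unchanged. I therefore expect no genuine obstacle, and the proof can simply state that it follows Lemma~\ref{lem:A2}, with this measurability remark added.
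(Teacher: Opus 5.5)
Your proposal is correct and follows essentially the same route as the paper, which proves Lemma~\ref{lem:F2} by deferring to the proof of Lemma~\ref{lem:A2}. That proof uses the same three-term decomposition of $\nabla F(\theta_t)-m_{t+1}$, the vanishing cross term by unbiasedness, Young's inequality with parameter $1-\beta_1$, then smoothness and bounded variance, and finally the substitution $\beta_1=1-c\eta$. Your remark that the element-wise branching only changes how $\theta_t$ is produced is exactly why that deferral is legitimate: $\theta_t$ stays a function of $z_1,\dots,z_t$, and the branching enters only through the unexpanded term $\|\theta_t-\theta_{t-1}\|^2$.
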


\begin{proof}
 This proof can follows the above proof of Lemma~\ref{lem:A2}.
\end{proof}

\begin{theorem}
Assume the sequence $\{\theta_t\}_{t=0}^T$ is generated
from Algorithm~\ref{alg:3}. Under the Assumptions~\ref{ass:s2},~\ref{ass:g},~\ref{ass:v},~\ref{ass:f}, and let $0\leq \lambda < \min(\frac{1}{\eta},\frac{1}{\eta T^\gamma \sqrt{d}r \hat{G}})$, $\|\theta_0\|_{\infty} \leq \eta \breve{G}$, $c\geq \frac{16L}{\nu}$, $\beta_1 = 1-c\eta\in (0,1)$, $\beta_2\in (0,1)$ and $0<\eta \leq \frac{\nu}{4L}$, we have
\begin{align}
\frac{1}{T+1}\sum_{t=0}^T\mathbb{E}\|\nabla F(\theta_{t})\|
   \leq  r\big(\frac{4\sqrt{2\Delta}}{\sqrt{T\eta\nu}} + \frac{4\sqrt{2}}{T^{\gamma-1}\nu} +
  \frac{4c\sigma\sqrt{\eta}}{\sqrt{L\nu}}\big)+ \frac{1}{T^{\gamma-1}},
\end{align}
where $\hat{G}=\max(\frac{G}{1-\beta_1},\frac{G}{(1-\beta_1)(\tau+\varepsilon)})$,
$\breve{G}=\min(\frac{G}{1-\beta_1},\frac{G}{(1-\beta_1)(\tau+\varepsilon)})$, $r = \max(1,G^2 +\varepsilon)$, $\nu= \min(1-\beta_1,(\tau + \varepsilon)(1-\beta_1))$ and $\Delta = F(\theta_0) + \frac{1}{L}(\sigma^2 + \beta_1^2G^2) - F^*$.
\end{theorem}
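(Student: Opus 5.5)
The plan is to follow the convergence proofs of Theorem~\ref{th:3} and Theorem~\ref{th:4}, now with an element-wise diagonal preconditioner. First, write the update of Algorithm~\ref{alg:3} as
\begin{align*}
\theta_t=(1-\eta\lambda)\theta_{t-1}-\eta D_t m_t .
\end{align*}
Here $D_t$ is diagonal. Its $j$-th entry is $\frac{1/(1-\beta_1^t)}{(v_t)_j/(1-\beta_2^t)+\varepsilon}$ when $(\hat{v}_t)_j\geq\tau$, and $\frac{1}{1-\beta_1^t}$ otherwise.

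Two-sided spectral bounds on $D_t$ follow from $(v_t)_j\leq G^2$, $(\hat{v}_t)_j\geq\tau$ on the adaptive coordinates, and $1-\beta_1\leq 1-\beta_1^t\leq 1$. These should give
\begin{align*}
\tfrac{1}{r}I_d\preceq D_t\preceq \tfrac{1}{\nu}I_d, \qquad \nu I_d\preceq D_t^{-1}\preceq r I_d .
\end{align*}
Slightly different constants, such as $\frac{1-\beta_2}{G^2+\varepsilon}$ versus $\frac{1}{G^2+\varepsilon}$, would only be absorbed into $r$ and $\nu$. The same entrywise bounds give a coordinatewise iterate bound $\|\theta_t\|_\infty\leq (t+1)\eta\hat G$ by the geometric recursion used before. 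Here $|(D_tm_t)_j|\leq \hat G$, with $\hat G$ the larger of the two branch constants, and the initial bound uses $\|\theta_0\|_\infty\leq\eta\breve G\leq\eta\hat G$. Hence $\|\theta_{t-1}\|\leq\sqrt{d}\,t\eta\hat G$, which explains the $\sqrt{d}$ in the condition on $\lambda$.

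With this, $\theta_t$ is the minimizer of
\begin{align*}
\langle m_t,\theta\rangle+\frac{1}{2\eta}\big(\theta-(1-\lambda\eta)\theta_{t-1}\big)^TD_t^{-1}\big(\theta-(1-\lambda\eta)\theta_{t-1}\big).
\end{align*}
The first-order optimality condition tested at $\theta=\theta_{t-1}$ gives
\begin{align*}
\langle m_t,\theta_{t-1}-\theta_t\rangle\geq\frac{\nu}{\eta}\|\theta_t-\theta_{t-1}\|^2+\lambda\langle D_t^{-1}\theta_{t-1},\theta_t-\theta_{t-1}\rangle .
\end{align*}
Combining this with $L$-smoothness, Young's inequality, $\eta\leq\nu/(4L)$, and $\lambda\|D_t^{-1}\theta_{t-1}\|\leq\lambda r\sqrt d\,t\eta\hat G\leq 1/T^{\gamma-1}$ yields the one-step descent
\begin{align*}
\E F(\theta_t)\leq \E F(\theta_{t-1})+\frac{\eta}{\nu T^{2\gamma-2}}+\frac{\eta}{2\nu}\E\|\nabla F(\theta_{t-1})-m_t\|^2-\frac{\nu}{8\eta}\E\|\theta_t-\theta_{t-1}\|^2 .
\end{align*}
This matches (\ref{eq:E5}) with $\breve\rho$ replaced by $\nu$. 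Because a single $D_t$ covers both branches, no case split is needed, unlike Theorem~\ref{th:4}.

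Next, invoke Lemma~\ref{lem:F2} and use the Lyapunov function
\begin{align*}
\Phi_t=F(\theta_t)+\frac{1}{2L}\|\nabla F(\theta_t)-m_{t+1}\|^2 .
\end{align*}
With $c\geq 16L/\nu$, this yields the analogue of (\ref{eq:E6}). Telescoping with $\|\nabla F(\theta_0)-m_1\|^2\leq 2\sigma^2+2\beta_1^2G^2$ and applying Jensen gives a bound on $\frac1T\sum_t\E\big[\frac1\nu\|\nabla F(\theta_{t-1})-m_t\|+\frac1\eta\|\theta_t-\theta_{t-1}\|\big]$ by
\begin{align*}
\frac{4\sqrt{2\Delta}}{\sqrt{T\eta\nu}}+\frac{4\sqrt2}{T^{\gamma-1}\nu}+\frac{4c\sigma\sqrt\eta}{\sqrt{L\nu}} .
\end{align*}
Finally, the lower bound of (\ref{eq:E9}), with $D_t$ in place of $H_t$, gives
\begin{align*}
\frac{1}{\nu}\|\nabla F(\theta_{t-1})-m_t\|+\frac1\eta\|\theta_t-\theta_{t-1}\|\geq \frac{\|\nabla F(\theta_{t-1})\|}{r}-\|\lambda\theta_{t-1}\| .
\end{align*}
Multiplying by $r$ and summing the weight-decay term with the $\lambda$ condition produces the extra $1/T^{\gamma-1}$. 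Passing from $T$ to $T+1$ completes the argument.

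The main obstacle is constant bookkeeping rather than structure. The bounds $\nu I_d\preceq D_t^{-1}\preceq rI_d$ must hold uniformly across mixed coordinates, which needs care with the bias corrections $1-\beta_2^t$ versus $1-\beta_2$. The $\lambda$ term must also be tracked so that the $\ell_\infty$ iterate bound, the factor $\sqrt d$, and the factor $r$ cancel exactly to leave $1/T^{\gamma-1}$. If the stated $r$ turns out slightly too small, I would enlarge it to $\max(1,(G^2+\varepsilon)/(1-\beta_2))$ as in Theorem~\ref{th:4}.
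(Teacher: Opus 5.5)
Your proposal is essentially the paper's proof. It uses the same unified diagonal preconditioner with $\frac{1}{r}I_d\preceq H_t\preceq\frac{1}{\nu}I_d$, the same $\ell_\infty$ iterate bound producing the $\sqrt{d}$ in the $\lambda$ condition, the argmin/optimality inequality, the descent step under $\eta\le\nu/(4L)$, the Lyapunov function with Lemma~\ref{lem:F2}, Jensen, the lower bound (\ref{eq:m14}) and the same passage from $T$ to $T+1$. Your single-constant recursion $|(\theta_t)_j|\le(1-\eta\lambda)|(\theta_{t-1})_j|+\eta\hat G$ handles coordinates that switch branches more cleanly than the paper's case-by-case unrolling.

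You do not need to enlarge $r$. Since $v_0=0$, induction gives $(v_t)_j\le(1-\beta_2^t)G^2$, so $(\hat v_t)_j\le G^2$ and the stated $r=\max(1,G^2+\varepsilon)$ is valid. The bare bound $(v_t)_j\le G^2$ that both you and the paper cite would only give $(\hat v_t)_j\le G^2/(1-\beta_2)$.
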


\begin{proof}
From our Algorithm~\ref{alg:3},
since $(m_t)_j$ is exponential moving average of $(g_t)_j$ for all $j=1,2,\cdots,d$, by using Assumption~\ref{ass:g}, i.e., $|(g_t)_j|\leq \|g_t\|\leq G$, we have $|(m_t)_j|\leq \|m_t\|\leq G$.

Let $\hat{G}=\max(\frac{G}{1-\beta_1},\frac{G}{(1-\beta_1)(\tau+\varepsilon)})$ and
$\breve{G}=\min(\frac{G}{1-\beta_1},\frac{G}{(1-\beta_1)(\tau+\varepsilon)})$.

When $(\hat{v}_t)_j \geq \tau$, from the line 12 of Algorithm~\ref{alg:3}, we have $(\theta_{t})_j = (\theta_{t-1})_j - \eta \big(\frac{(\hat{m}_t)_j}{(\hat{v}_t)_j + \varepsilon}+
            \lambda(\theta_{t-1})_j\big)$, then we can obtain
 \begin{align} \label{eq:m1}
  |(\theta_t)_j| & = \big| (\theta_{t-1})_j - \eta \big(\frac{(\hat{m}_t)_j}{(\hat{v}_t)_j + \varepsilon}+
            \lambda(\theta_{t-1})_j\big) \big| \nonumber \\
  & = \big| (1-\eta\lambda)(\theta_{t-1})_j - \eta \frac{(\hat{m}_t)_j}{(\hat{v}_t)_j+\varepsilon} \big| \nonumber \\
  & \leq (1-\eta\lambda) \big|(\theta_{t-1})_j\big| + \eta \big| \frac{(\hat{m}_t)_j}{(\hat{v}_t)_j+\varepsilon}\big| \nonumber \\
  & \leq (1-\eta\lambda) \big|(\theta_{t-1})_j\big| + \frac{\eta G}{(1-\beta_1^t)(\tau+\varepsilon)} \nonumber \\
  & \leq (1-\eta\lambda)^{t}\big|(\theta_{0})_j\big| + \frac{t\eta G}{(1-\beta_1)(\tau+\varepsilon)} \nonumber \\
  & \leq  (t+1)\frac{\eta G}{(1-\beta_1)(\tau+\varepsilon)} \leq (t+1)\eta \hat{G},
 \end{align}
 where the first inequality is due to $0\leq \lambda <\frac{1}{\eta}$, and the second last inequality holds by $\|\theta_0\|_{\infty} \leq \eta \breve{G} \leq \frac{\eta G}{(1-\beta_1)(\tau+\varepsilon)}$.

When $(\hat{v}_t)_j < \tau$, from the line 14 of Algorithm~\ref{alg:3}, we have $(\theta_{t})_j = (\theta_{t-1})_j - \eta \big((\hat{m}_t)_j+
            \lambda(\theta_{t-1})_j\big)$, then we can obtain
 \begin{align} \label{eq:m2}
  |(\theta_t)_j| & = \big| (\theta_{t-1})_j - \eta \big((\hat{m}_t)_j +
            \lambda(\theta_{t-1})_j\big) \big| \nonumber \\
  & = \big| (1-\eta\lambda)(\theta_{t-1})_j - \eta(\hat{m}_t)_j \big| \nonumber \\
  & \leq (1-\eta\lambda) \big|(\theta_{t-1})_j\big| + \eta \big| (\hat{m}_t)_j\big| \nonumber \\
  & \leq (1-\eta\lambda) \big|(\theta_{t-1})_j\big| + \frac{\eta G}{(1-\beta_1^t)} \nonumber \\
  & \leq (1-\eta\lambda)^{t}\big|(\theta_{0})_j\big| + \frac{t\eta G}{1-\beta_1} \nonumber \\
  & \leq  (t+1)\frac{\eta G}{1-\beta_1} \leq (t+1)\eta \hat{G},
 \end{align}
 where the first inequality is due to $0\leq \lambda <\frac{1}{\eta}$, and the second last inequality holds by $\|\theta_0\|_{\infty}\leq \eta \breve{G} \leq \frac{\eta G}{1-\beta_1}$.

By using the above inequalities~(\ref{eq:m1}) and~(\ref{eq:m2}), thus we have for all $j=1,2,\cdots,d$
\begin{align}
  \big|(\theta_t)_j\big| \leq  (t+1)\eta \hat{G}.
\end{align}
Then we can obtain
\begin{align} \label{eq:m3}
  \|\theta_t\|_{\infty} \leq  (t+1)\eta \hat{G}, \ \|\theta_t\| \leq \sqrt{d}\|\theta_t\|_{\infty} \leq  (t+1)\sqrt{d}\eta \hat{G}.
\end{align}

Next, we define a diagonal matrix $H_t$ to unify the lines 12 and 14 of Algorithm~\ref{alg:3} as follows
\begin{align} \label{eq:m4}
 (\theta_{t})_j = (\theta_{t-1})_j - \eta \big((H_t)_{jj}(m_t)_j +
            \lambda(\theta_{t-1})_j\big),
\end{align}
where $(H_t)_{jj}=\frac{1}{((\hat{v}_t)_j + \varepsilon)(1-\beta_1^t)}$ when $(\hat{v}_t)_j \geq \tau$,
otherwise $(H_t)_{jj}=\frac{1}{1-\beta_1^t}$.
Then we can rewrite the above equality~(\ref{eq:m4}) in vector form
\begin{align} \label{eq:m5}
 \theta_{t} = \theta_{t-1} - \eta \big(H_tm_t +
            \lambda\theta_{t-1}\big).
\end{align}
By using Assumption~\ref{ass:g}, we have $|(g_t)^2_j|\leq G^2$. Since $(v_t)_j$ is exponential moving average of $(g_t)^2_j$, we have $(v_t)_j\leq G^2$ for all $j=1,2,\cdots,d$.
Further let $\frac{1}{r} = \min(1,\frac{1}{G^2 +\varepsilon})$ and $\frac{1}{\nu}= \max(\frac{1}{1-\beta_1},\frac{1}{(\tau + \varepsilon)(1-\beta_1)})$, we have
$\frac{1}{r}I_d \preceq H_t\preceq \frac{1}{\nu}I_d$.

Then we could further rewrite the above equality~(\ref{eq:m5}) as follows:
\begin{align} \label{eq:m6}
\theta_{t} & = \theta_{t-1} - \eta (H_t m_t + \lambda\theta_{t-1} )\nonumber\\
& =(1-\eta\lambda)\theta_{t-1} - \eta H_t m_t \nonumber \\
&=\arg\min_{\theta\in \R^d} \Big\{ \langle m_t,\theta\rangle + \frac{1}{2\eta}\big(\theta-(1-\lambda\eta)\theta_{t-1}\big)^TH_t^{-1}
\big(\theta-(1-\lambda\eta)\theta_{t-1}\big)\Big\} .
\end{align}
By using the optimality condition of the subproblem~(\ref{eq:m6}),
we have
\begin{align} \label{eq:m7}
\langle m_t +\frac{1}{\eta}H_t^{-1}\big(\theta_{t}-(1-\lambda\eta)\theta_{t-1}\big),\theta-\theta_t\rangle \geq 0, \quad \forall \theta\in \mathbb{R}^d.
\end{align}
By putting $\theta=\theta_{t-1}$ into the above inequality~(\ref{eq:m7}), we have
\begin{align} \label{eq:m8}
\langle m_t +\frac{1}{\eta}H_t^{-1}\big(\theta_{t}-(1-\lambda\eta)\theta_{t-1}\big),\theta_{t-1}-\theta_{t}\rangle
\geq 0.
\end{align}
Thus we can obtain
\begin{align} \label{eq:m9}
\langle m_t,\theta_{t-1}-\theta_{t}\rangle &\geq \frac{1}{\eta}\langle H_t^{-1}(\theta_{t}-\theta_{t-1}),\theta_{t}-\theta_{t-1}\rangle + \lambda\langle H_t^{-1}\theta_{t-1},\theta_t-\theta_{t-1} \rangle \nonumber \\
& \geq \frac{\nu}{\eta}\|\theta_{t}-\theta_{t-1}\|^2+ \lambda\langle H_t^{-1}\theta_{t-1},\theta_t-\theta_{t-1} \rangle,
\end{align}
where the last inequality holds by $\nu I_d \preceq H_t^{-1} \preceq r I_d$.

Since $\nu I_d \preceq H_t^{-1} \preceq r I_d$ and $\|\theta_{t-1}\| \leq \sqrt{d}\|\theta_{t-1}\|_{\infty} \leq t\eta \sqrt{d}\hat{G}$, we have $\|H_t^{-1}\theta_{t-1}\|^2 \leq r^2\|\theta_{t-1}\|^2 \leq t^2r^2\eta^2d\hat{G}^2$ for all $t\geq1$.

According to Assumption~\ref{ass:s2}, i.e., $F(\theta)$ is $L$-smooth, we have
\begin{align} \label{eq:m10}
  \mathbb{E} [F(\theta_{t})]
 & \leq  \mathbb{E} [F(\theta_{t-1}) + \nabla F(\theta_{t-1})^T(\theta_{t}-\theta_{t-1}) + \frac{L}{2}\|\theta_{t}-\theta_{t-1}\|^2] \nonumber \\
 & = \mathbb{E} [F(\theta_{t-1}) + (\nabla F(\theta_{t-1})-m_{t})^T(\theta_{t}-\theta_{t-1}) + m_{t}^T(\theta_{t}-\theta_{t-1})+ \frac{L}{2}\|\theta_{t}-\theta_{t-1}\|^2] \nonumber \\
 & \mathop{\leq}^{(i)} \mathbb{E} [F(\theta_{t-1}) + \frac{\eta}{2\nu}\|\nabla F(\theta_{t-1})-m_{t}\|^2 + \frac{\nu}{2\eta}\|\theta_{t}-\theta_{t-1}\|^2 + m_t^T(\theta_{t}-\theta_{t-1})+ \frac{L}{2}\|\theta_{t}-\theta_{t-1}\|^2] \nonumber \\
 & \mathop{\leq}^{(ii)}  \mathbb{E} [F(\theta_{t-1}) + \frac{\eta}{2\nu}\|\nabla F(\theta_{t-1})-m_{t}\|^2 + \frac{\nu}{2\eta}\|\theta_{t}-\theta_{t-1}\|^2 - \frac{\nu}{\eta_{t}}\|\theta_{t}-\theta_{t-1}\|^2 \nonumber \\
 & \quad -  \lambda\langle H_t^{-1}\theta_{t-1},\theta_t-\theta_{t-1} \rangle + \frac{L}{2}\|\theta_{t}-\theta_{t-1}\|^2] \nonumber \\
 & \leq \mathbb{E} [F(\theta_{t-1}) + \frac{\eta}{2\nu}\|\nabla F(\theta_{t-1})-m_{t}\|^2 + \frac{\nu}{2\eta}\|\theta_{t}-\theta_{t-1}\|^2 - \frac{\nu}{\eta}\|\theta_{t}-\theta_{t-1}\|^2 \nonumber \\
 & \quad + \frac{\lambda^2\eta}{\nu}\| H_t^{-1}\theta_{t-1}\|^2+ \frac{\nu}{4\eta}\|\theta_t-\theta_{t-1}\|^2 + \frac{L}{2}\|\theta_{t}-\theta_{t-1}\|^2] \nonumber \\
 & \leq \mathbb{E} [F(\theta_{t-1}) + \frac{\eta}{\nu T^{2\gamma-2}}+ \frac{\eta}{2\nu}\|\nabla F(\theta_{t-1})-m_{t}\|^2  - \frac{\nu}{8\eta}\|\theta_{t}-\theta_{t-1}\|^2],
\end{align}
where the above inequality $(i)$ holds by Young's inequality, the above inequality $(ii)$ follows by
 the above inequality~(\ref{eq:m9}), and the last inequality holds by $0< \eta \leq \frac{\nu}{4L}$
 and $0\leq \lambda \leq \frac{1}{\eta T^\gamma \sqrt{d}r \hat{G}}$.

Here we define a useful Lyapunov function $\Psi_t = F(\theta_t) + \frac{1}{2L}\|\nabla F(\theta_{t}) - m_{t+1}\|^2$.
Then we have
 \begin{align}
  \E[\Psi_{t} -\Psi_{t-1} ]  & = \E [F(\theta_{t})] - \E [F(\theta_{t-1}) ] + \frac{1}{2L} (\E \|\nabla F(\theta_{t}) - m_{t+1}\|^2 - \E \|\nabla F(\theta_{t-1}) - m_{t}\|^2 ) \nonumber \\
 & \leq \frac{\eta}{\nu T^{2\gamma-2}}+ \frac{\eta}{2\nu}\E \|\nabla F(\theta_{t-1})-m_{t}\|^2  - \frac{\nu}{8\eta} \E \|\theta_{t}-\theta_{t-1}\|^2 \nonumber \\
 & \quad + \frac{1}{2L} \big( -c\eta\mathbb{E} \|\nabla F(\theta_{t-1}) - m_{t} \|^2 + \frac{2}{c \eta}L^2\mathbb{E}\|\theta_t-\theta_{t-1}\|^2  + c^2\eta^2\sigma^2\big) \nonumber \\
 & \mathop{\leq}^{(i)}  \frac{\eta}{\nu T^{2\gamma-2}}- \frac{c\eta}{4L}\E \|\nabla F(\theta_{t-1})-m_{t}\|^2  - \frac{\nu}{16\eta} \E \|\theta_{t}-\theta_{t-1}\|^2 + \frac{c^2\eta^2\sigma^2}{2L} \nonumber \\
 & \leq \frac{\eta}{\nu T^{2\gamma-2}}- \frac{4\eta}{\nu}\E \|\nabla F(\theta_{t-1})-m_{t}\|^2  - \frac{\nu}{16\eta} \E \|\theta_{t}-\theta_{t-1}\|^2 + \frac{c^2\eta^2\sigma^2}{2L},
 \end{align}
where the first inequality holds by Lemma~\ref{lem:F2}, and the above inequality $(i)$ holds by $c\geq \frac{16L}{\nu}$ such as
$\frac{\nu}{16\eta}\geq \frac{L}{c\eta}$ and $ \frac{c\eta}{4L}\geq \frac{c\eta}{32L} \geq \frac{\eta}{2\nu}$, and
the last inequality also is due to $c\geq \frac{16L}{\nu}$.
Then we have
\begin{align} \label{eq:m11}
    \frac{4\eta}{\nu}\E \|\nabla F(\theta_{t-1})-m_{t}\|^2 + \frac{\nu}{16\eta} \E \|\theta_{t}-\theta_{t-1}\|^2 \leq \E[\Psi_{t-1} - \Psi_{t}] + \frac{\eta}{\nu T^{2\gamma-2}} + \frac{c^2\eta^2\sigma^2}{2L}.
 \end{align}
By multiplying both sides of the above inequality~(\ref{eq:m11}) by $\frac{16}{\eta\nu}$, we can obtain
\begin{align}
     \frac{1}{\nu^2}\E \|\nabla F(\theta_{t-1})-m_{t}\|^2 + \frac{1}{\eta^2} \E \|\theta_{t}-\theta_{t-1}\|^2
    & \leq \frac{4}{\nu^2}\E \|\nabla F(\theta_{t-1})-m_{t}\|^2 + \frac{1}{\eta^2} \E \|\theta_{t}-\theta_{t-1}\|^2 \nonumber \\
    & \leq \E[\frac{16(\Psi_{t-1} - \Psi_{t})}{\eta\nu}] + \frac{16}{\nu^2 T^{2\gamma-2}} + \frac{8c^2\eta\sigma^2}{L\nu}.
 \end{align}

Since $m_1 = \beta_{1}m_{0}+ (1-\beta_{1})g_1=(1-\beta_{1})\nabla f(\theta_{0};z_{1})$, we have
\begin{align}
 \|\nabla F(\theta_0) - m_1\|^2 & = \|\nabla F(\theta_0) - (1-\beta_{1})\nabla f(\theta_{0};z_{1})\|^2 \nonumber \\
 & =\|\nabla F(\theta_0) - \nabla f(\theta_{0};z_{1}) + \beta_1\nabla f(\theta_{0};z_{1})\|^2  \leq 2\sigma^2 + 2\beta_1^2G^2.
\end{align}
Given $\Psi_0 = F(\theta_0) + \frac{1}{2L}\|\nabla F(\theta_{0}) - m_1\|^2$, we have
\begin{align} \label{eq:m12}
    & \frac{1}{T} \sum_{t=1}^T [\frac{1}{\nu^2}\E \|\nabla F(\theta_{t-1})-m_{t}\|^2 + \frac{1}{\eta^2} \E \|\theta_{t}-\theta_{t-1}\|^2] \nonumber \\
    & \leq \frac{1}{T} \sum_{t=1}^T\E[\frac{16(\Psi_{t-1} - \Psi_{t})}{\eta\nu}] + \frac{16}{T^{2\gamma-2}\nu^2} + \frac{8c^2\eta\sigma^2}{L\nu} \nonumber \\
    &\leq \frac{16(F(\theta_0) + \frac{1}{2L}(2\sigma^2 + 2\beta_1^2G^2) - F^*)}{T\eta\nu} + \frac{16}{T^{2\gamma-2}\nu^2} + \frac{8c^2\eta\sigma^2}{L\nu}.
 \end{align}
Let $\Delta = F(\theta_0) + \frac{1}{L}(\sigma^2 + \beta_1^2G^2) - F^*$,
we can rewrite the above inequality~(\ref{eq:m12}) as follows:
\begin{align}
    & \frac{1}{T} \sum_{t=1}^T [\frac{1}{\nu^2}\E \|\nabla F(\theta_{t-1})-m_{t}\|^2 + \frac{1}{\eta^2} \E \|\theta_{t}-\theta_{t-1}\|^2] \nonumber \\
    &\leq \frac{16\Delta}{T\eta\nu} + \frac{16}{T^{2\gamma-2}\nu^2} + \frac{8c^2\eta\sigma^2}{L\nu}.
 \end{align}
According to the Jensen’s inequality, then we can obtain
\begin{align}  \label{eq:m13}
 & \frac{1}{T}\sum_{t=1}^T\mathbb{E}[\frac{1}{\nu} \|\nabla F(\theta_{t-1})-m_{t}\| + \frac{1}{\eta}  \|\theta_{t}-\theta_{t-1}\| ] \nonumber \\
 & \leq  \Big(\frac{2}{T}\sum_{t=1}^T\mathbb{E}[\frac{1}{\nu^2} \|\nabla F(\theta_{t-1})-m_{t}\|^2 + \frac{1}{\eta^2} \|\theta_{t}-\theta_{t-1}\|^2]\Big)^{1/2} \nonumber \\
 & \leq  \sqrt{\frac{32\Delta}{T\eta\nu} + \frac{32}{T^{2\gamma-2}\nu^2} + \frac{16c^2\eta\sigma^2}{L\nu}}
  \leq \frac{4\sqrt{2\Delta}}{\sqrt{T\eta\nu}} + \frac{4\sqrt{2}}{T^{\gamma-1}\nu} +
  \frac{4c\sigma\sqrt{\eta}}{\sqrt{L\nu}}.
\end{align}

By using $\theta_{t} = \theta_{t-1} - \eta (H_t m_t + \lambda\theta_{t-1} )$, we have
\begin{align} \label{eq:m14}
 \frac{1}{\nu}\|\nabla F(\theta_{t-1})-m_{t}\|  + \frac{1}{\eta}\|\theta_{t}-\theta_{t-1}\|  & =  \frac{1}{\nu}\|\nabla F(\theta_{t-1})-m_{t}\|  + \frac{1}{\eta}\|\eta (H_t m_t + \lambda\theta_{t-1} )\| \nonumber \\
 & \geq \frac{1}{\nu}\|\nabla F(\theta_{t-1})-m_{t}\|  + \|H_tm_t\| - \|\lambda\theta_{t-1}\|  \nonumber \\
 & =\frac{1}{\nu}\|H_t^{-1}H_t(\nabla F(\theta_{t-1})-m_{t})\|  + \|H_t m_t\| - \|\lambda\theta_{t-1}\| \nonumber \\
 & \geq \|H_t(\nabla F(\theta_{t-1})-m_{t})\|  + \|H_t m_t\| - \|\lambda\theta_{t-1}\| \nonumber \\
 & \geq \|H_t\nabla F(\theta_{t-1})\| - \|\lambda\theta_{t-1}\| \nonumber \\
 & \geq \frac{\|\nabla F(\theta_{t-1})\|}{r}- \|\lambda\theta_{t-1}\|,
\end{align}
where the above inequality holds by $\nu I_d \preceq H_t^{-1} \preceq r I_d$
and $\frac{1}{r}I_d \preceq H_t\preceq \frac{1}{\nu}I_d$.

By putting the above inequalities~(\ref{eq:m14}) into~(\ref{eq:m13}), we can obtain
\begin{align}
  \frac{1}{T}\sum_{t=1}^T\mathbb{E}\|\nabla F(\theta_{t-1})\|
 & \leq  \frac{r}{T}\sum_{t=1}^T\mathbb{E}[\frac{1}{\nu}\|\nabla F(\theta_{t-1})-m_{t}\|  + \frac{1}{\eta}\|\theta_{t}-\theta_{t-1}\|+\|\lambda\theta_{t-1}\| ] \nonumber \\
 & \leq r\big(\frac{4\sqrt{2\Delta}}{\sqrt{T\eta\nu}} + \frac{4\sqrt{2}}{T^{\gamma-1}\nu} +
  \frac{4c\sigma\sqrt{\eta}}{\sqrt{L\nu}}\big) + \frac{r}{T}\sum_{t=1}^T\E\|\lambda\theta_{t-1}\| \nonumber \\
 & \mathop{\leq}^{(i)} r\big(\frac{4\sqrt{2\Delta}}{\sqrt{T\eta\nu}} + \frac{4\sqrt{2}}{T^{\gamma-1}\nu} +
  \frac{4c\sigma\sqrt{\eta}}{\sqrt{L\nu}}\big) + \frac{r}{T}\sum_{t=1}^T\lambda t\sqrt{d}\eta \hat{G} \nonumber \\
 & \mathop{\leq}^{(ii)} r\big(\frac{4\sqrt{2\Delta}}{\sqrt{T\eta\nu}} + \frac{4\sqrt{2}}{T^{\gamma-1}\nu} +
  \frac{4c\sigma\sqrt{\eta}}{\sqrt{L\nu}}\big) + \frac{r}{T}\sum_{t=1}^T \frac{1}{\eta T^\gamma \sqrt{d}r \hat{G}}t\sqrt{d}\eta \hat{G}  \nonumber \\
 & \leq r\big(\frac{4\sqrt{2\Delta}}{\sqrt{T\eta\nu}} + \frac{4\sqrt{2}}{T^{\gamma-1}\nu} +
  \frac{4c\sigma\sqrt{\eta}}{\sqrt{L\nu}}\big)+ \frac{1}{T^{\gamma-1}},
\end{align}
where the above inequality~$(i)$ is due to $\|\theta_{t-1}\| \leq t\sqrt{d}\eta \hat{G}$, and
the above inequality~$(ii)$ holds by $0\leq \lambda \leq \frac{1}{\eta T^\gamma \sqrt{d}r \hat{G}}$.
Then we have
\begin{align}
  \frac{1}{T+1}\sum_{t=0}^T\mathbb{E}\|\nabla F(\theta_{t})\|
  & \leq r\big(\frac{4\sqrt{2\Delta}}{\sqrt{(T+1)\eta\nu}} + \frac{4\sqrt{2}}{(T+1)^{\gamma-1}\nu} +
  \frac{4c\sigma\sqrt{\eta}}{\sqrt{L\nu}}\big)+ \frac{1}{(T+1)^{\gamma-1}} \nonumber \\
  & \leq r\big(\frac{4\sqrt{2\Delta}}{\sqrt{T\eta\nu}} + \frac{4\sqrt{2}}{T^{\gamma-1}\nu} +
  \frac{4c\sigma\sqrt{\eta}}{\sqrt{L\nu}}\big)+ \frac{1}{T^{\gamma-1}}.
\end{align}

\end{proof}

\begin{remark}
Form the above Theorem, let $\eta=\frac{1}{\sqrt{T}}$ and $\gamma=\frac{3}{4}$, we can obtain
\begin{align}
  \frac{1}{T+1}\sum_{t=0}^T\mathbb{E}\|\nabla F(\theta_{t})\|
   \leq  r\big(\frac{4\sqrt{2\Delta}}{\sqrt{\nu}T^{1/4}} + \frac{4\sqrt{2}}{\nu T^{1/4}} +
  \frac{4c\sigma}{\sqrt{L\nu}T^{1/4}}\big)+ \frac{1}{T^{1/4}}.
\end{align}

Further let $G=O(1)$, $L=O(1)$, $\sigma=O(1)$, $\tau=O(1)$, $\beta_1=O(1)$ with $\beta_1\in(0,1)$, $\beta_2=O(1)$ with $\beta_2\in(0,1)$ and $c=O(1)$, we have $r=\max(1,G^2 +\varepsilon)=O(1)$, $\frac{1}{\nu}= \max(\frac{1}{1-\beta_1},\frac{1}{(\tau + \varepsilon)(1-\beta_1)})=O(1)$ and $\Delta = F(\theta_0) + \frac{1}{L}(\sigma^2 + \beta_1^2G^2) - F^*=O(1)$. Then we have
\begin{align}
  \frac{1}{T+1}\sum_{t=0}^T\mathbb{E}\|\nabla F(\theta_{t})\|  \leq O(\frac{1}{T^{1/4}}).
\end{align}
\end{remark}

%%%%%%%%%%%%%%%%%%%%%%%%%%%%%%%%%%%%%%%%%%%%%%%%%%%%%%%%%%%%%%%%%%%%%%%%%%%%%%%
%%%%%%%%%%%%%%%%%%%%%%%%%%%%%%%%%%%%%%%%%%%%%%%%%%%%%%%%%%%%%%%%%%%%%%%%%%%%%%%

\end{document}